\def\eqref#1{equation~\ref{#1}}
\def\1{\bm{1}}
\DeclareMathAlphabet{\mathsfit}{\encodingdefault}{\sfdefault}{m}{sl}
\SetMathAlphabet{\mathsfit}{bold}{\encodingdefault}{\sfdefault}{bx}{n}
\title{ParallelBench: Understanding the Trade-offs of Parallel Decoding in Diffusion LLMs}
\author{
\parbox{\textwidth}{ 
Wonjun Kang\thanks{Equal Contribution. Emails: \texttt{\{kangwj1995, kevin.galim, seunghyukoh\}@furiosa.ai}.}~~\textsuperscript{\mdseries 1,5}\quad
Kevin Galim$^{*}$\textsuperscript{\mdseries 1}\quad
Seunghyuk Oh$^{*}$\textsuperscript{\mdseries 1}\quad 
Minjae Lee\textsuperscript{\mdseries 1} \\
Yuchen Zeng\textsuperscript{\mdseries 2,3}\quad  
Shuibai Zhang\textsuperscript{\mdseries 2}\quad
Coleman Hooper\textsuperscript{\mdseries 4}\quad
Yuezhou Hu\textsuperscript{\mdseries 4} \\
Hyung Il Koo\textsuperscript{\mdseries 1}\quad 
Nam Ik Cho\textsuperscript{\mdseries 5}\quad 
Kangwook Lee\textsuperscript{\mdseries 2,6,7} 
}
\\
\\
\parbox{\textwidth}{
\textsuperscript{\mdseries 1} FuriosaAI \quad 
\textsuperscript{\mdseries 2} UW-Madison \quad
\textsuperscript{\mdseries 3} Microsoft Research \quad
\textsuperscript{\mdseries 4} UC Berkeley \quad \\ 
\textsuperscript{\mdseries 5} Seoul National University \quad 
\textsuperscript{\mdseries 6} KRAFTON \quad 
\textsuperscript{\mdseries 7} Ludo Robotics
}
\\
\\ 
Project Page: \url{https://parallelbench.github.io}\\[0.1em]
Leaderboard: \url{https://parallelbench.github.io/leaderboard/}\\[0.1em]
GitHub: \url{https://github.com/furiosa-ai/ParallelBench}
}
\newtheorem{theorem}{Theorem}
\newtheorem{lemma}{Lemma}
\newtheorem{remark}{Remark}
\newcommand{\ours}{\textsc{ParallelBench}}
\newcommand{\wonjun}[1]{\textcolor{orange}{[Wonjun: #1]}}
\newcommand{\kevin}[1]{\textcolor{blue}{[Kevin: #1]}}
\newcommand{\dataarg}[1]{\textbf{#1}}
\DeclarePairedDelimiterX{\infdivx}[2]{(}{)}{%
  #1\;\delimsize\|\;#2%
}
\newcommand{\kldiv}{\mathcal{D}_\text{KL}\infdivx}
\definecolor{lightblue}{RGB}{220,235,250}
\definecolor{kleinblue}{RGB}{0, 47, 167}
\definecolor{airforceblue}{rgb}{0.36, 0.54, 0.66}
\definecolor{finalblue}{RGB}{108, 142, 191}
\definecolor{igreen}{HTML}{228B22}
\definecolor{pinegreen}{rgb}{0.0, 0.47, 0.44}
\definecolor{cornellred}{rgb}{0.7, 0.11, 0.11}
\definecolor{cadmiumgreen}{rgb}{0.0, 0.42, 0.24}
\definecolor{royalblue}{rgb}{0.0, 0.14, 0.4}
\definecolor{spirodiscoball}{rgb}{0.06, 0.75, 0.99}
\definecolor{mylightblue}{rgb}{0.85, 0.90, 0.94}
\definecolor{kaistblue}{RGB}{20,135,200}
\definecolor{auburn}{RGB}{166,38,57}
\begin{document}

\maketitle

\begin{abstract}
While most autoregressive LLMs are constrained to one-by-one decoding, diffusion LLMs (dLLMs) have attracted growing interest for their potential to dramatically accelerate inference through parallel decoding.
Despite this promise, the conditional independence assumption in dLLMs causes parallel decoding to ignore token dependencies, inevitably degrading generation quality when these dependencies are strong.
However, existing works largely overlook these inherent challenges, and evaluations on standard benchmarks (e.g., math and coding) are not sufficient to capture the quality degradation caused by parallel decoding.
To address this gap, we first provide an information-theoretic analysis of parallel decoding. We then conduct case studies on analytically tractable synthetic list operations from both data distribution and decoding strategy perspectives, offering quantitative insights that highlight the fundamental limitations of parallel decoding.
Building on these insights, we propose \textbf{\ours{}}, the first benchmark specifically designed for dLLMs, featuring realistic tasks that are trivial for humans and autoregressive LLMs yet exceptionally challenging for dLLMs under parallel decoding.
Using \ours{}, we systematically analyze both dLLMs and autoregressive LLMs, revealing that: (i) dLLMs under parallel decoding can suffer dramatic quality degradation in real-world scenarios, and (ii) current parallel decoding strategies struggle to adapt their degree of parallelism based on task difficulty, thus failing to achieve meaningful speedup without compromising quality.
Our findings underscore the pressing need for innovative decoding methods that can overcome the current speed-quality trade-off. We release our benchmark to help accelerate the development of truly efficient dLLMs.
\end{abstract}

\section{Introduction}
\label{sec:introduction}

Large Language Models (LLMs)~\citep{gpt3,llama} have achieved remarkable success in natural language processing, including complex reasoning~\citep{o1,deepseek-r1} and code generation~\citep{code}, which require extensive generation lengths. However, the autoregressive nature of most current LLMs fundamentally constrains inference speed, as they must generate tokens one-by-one.
Unlike autoregressive LLMs, which follow (i) one-by-one decoding and (ii) left-to-right decoding, diffusion LLMs (dLLMs) enable (i) parallel decoding and (ii) any-order decoding. 
Recently, there has been a rapidly growing interest in dLLMs for their potential to dramatically accelerate LLM inference via parallel decoding.
While early diffusion language models suffered from performance gaps compared to autoregressive models, recent advances~\citep{mercury,seed_diffusion} have shown that dLLMs can achieve comparable generation quality to autoregressive LLMs, potentially emerging as next-generation LLMs.


However, the conditional independence assumption~\citep{fast-dllm} in dLLMs causes parallel decoding to fail to capture token dependencies, resulting in significant quality degradation.
As shown in \cref{fig:teaser}, under parallel decoding, the model might generate \textit{``New City''} instead of the correct \textit{``New York''} or \textit{``Mexico City''} because each token is unmasked independently without considering its mutual constraints.
Existing works on dLLMs~\citep{llada,dream} mainly explore their one-by-one decoding quality with limited analysis of parallel decoding challenges, despite parallel generation being their defining advantage. Moreover, current evaluations rely on standard benchmarks, such as math~\citep{gsm8k} and coding~\citep{humaneval}, which may not adequately expose the quality degradation of parallel decoding in real-world scenarios. 

\begin{figure}[t]
    \includegraphics[width=\linewidth]{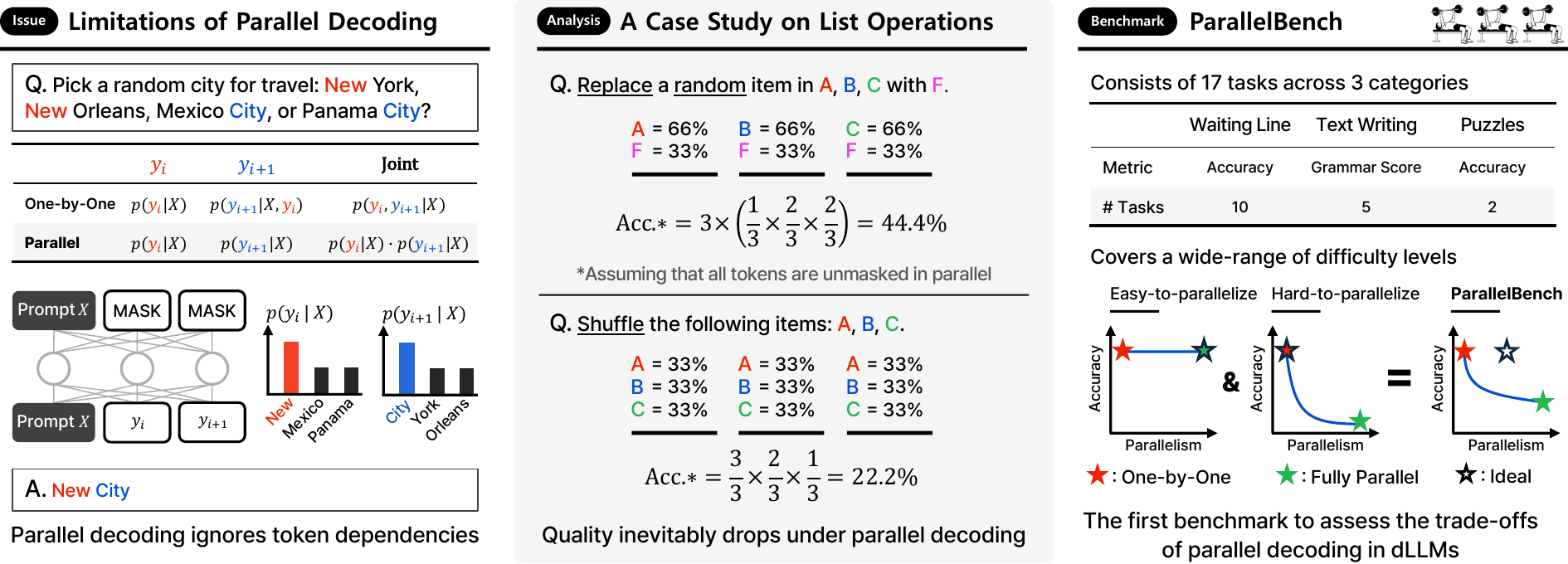}
    \caption{
\ours{}. \textbf{(left)} Parallel decoding fails to capture token dependencies, risking incorrect combinations of individually likely tokens (e.g., \textit{``New"} and \textit{``City"}).  \textbf{(middle)} Quantitative analysis of parallel decoding performance on list operations. \textbf{(right)} Based on our findings, we develop a realistic benchmark to evaluate the speed-quality trade-off of parallel decoding methods.
}
    \label{fig:teaser}
\end{figure}


To address this gap, we first provide an information-theoretic analysis of parallel decoding in dLLMs, showing that even ideal models suffer increasing difficulty due to inherent token dependencies in the data distribution as the degree of parallelism increases. To further provide quantitative insights, we conduct case studies on analytically tractable synthetic list operations from two perspectives: (i) from the data distribution perspective, we quantitatively formulate the difficulty of parallel decoding through conditional total correlation, proving that certain list operations exhibit inevitable quality degradation under parallel decoding, and (ii) from the decoding order perspective, we derive the achievable accuracy of various unmasking strategies on these tasks, showing that accuracy varies significantly across different unmasking strategies.

Building on this, we propose \ours{} to demonstrate that the quality degradation from parallel decoding extends beyond synthetic to real-world scenarios.
This realistic benchmark encompasses tasks of varying difficulty under parallel decoding, including those that are exceptionally challenging for parallel decoding in dLLMs yet still trivial for humans and autoregressive LLMs.
Using this benchmark, we conduct extensive evaluations of dLLMs and autoregressive LLMs, revealing two key findings.  First, consistent with our synthetic results, dLLMs with parallel decoding exhibit severe quality degradation, even on trivial real-world tasks. Second, current parallel decoding strategies struggle to adaptively adjust the degree of parallelism based on task difficulty, resulting in suboptimal speed-quality trade-offs.
Finally, we explore additional techniques to determine if they can improve the current trade-offs in parallel decoding. In summary, our main contributions are:
\begin{itemize}[leftmargin=*]
\item We provide an information-theoretic analysis to formalize the quality degradation in parallel decoding arising from token dependencies and design tractable synthetic tasks to quantify its impact.
\item We introduce \ours{}, the first realistic benchmark for evaluating trade-offs of parallel decoding in dLLMs, covering diverse difficulty levels to assess adaptive parallelism strategies.
\item Through extensive experiments on \ours{}, we demonstrate that current dLLMs under parallel decoding suffer severe quality degradation on seemingly simple tasks, and existing decoding strategies struggle to adaptively balance speed and quality.
\end{itemize}

\section{Related Works}
\paragraph{Diffusion-based Language Models}
D3PM~\citep{d3pm} pioneered applying diffusion models to discrete data by modeling discrete diffusion processes through various transition matrices.
Diffusion-LM~\citep{li2022diffusion} introduced a non-autoregressive language model using continuous diffusion.
\citet{sedd} introduced SEDD, which parameterizes discrete diffusion processes by learning concrete scores through a novel score entropy loss, thereby achieving performance comparable to that of autoregressive models. \citet{mdlm,md4} showed that masked diffusion with absorbing state achieves stronger performance through a simplified framework. 
Recent diffusion-based language models~\citep{kim2025beyond} have achieved significant improvements in scale and performance, establishing practical dLLMs~\citep{llada,llada1.5,dream,kim2025cdlm} with expanded multimodal capabilities~\citep{llada-v,mmada} and variants for complex reasoning~\citep{d1} and code generation~\citep{diffucoder,mercury,seed_diffusion}.

\paragraph{Analysis of Decoding Mechanisms in dLLMs}
\citet{fast-dllm} discussed the quality degradation in dLLMs' parallel decoding process due to the conditional independence assumption, proposing threshold-based and factor-based unmasking methods to mitigate this issue. However, they only focus on standard benchmarks like GSM8K~\citep{gsm8k} and HumanEval~\citep{humaneval} rather than exploring settings where parallel decoding is particularly vulnerable. 
\citet{feng2025theoretical} demonstrated that dLLMs' parallel decoding is highly task-dependent: efficient for fluent text generation with constant steps, but ineffective for reasoning tasks requiring linearly scaling steps for sequence-level correctness.
However, this work offers limited validation, relying on highly synthetic settings like n-grams and Hidden Markov Models.
\citet{diffucoder} introduced AR-ness metrics to quantify the similarity between dLLM and autoregressive LLM decoding orders.
They found that dLLMs exhibit any-order decoding, with AR-ness varying by dataset and correlating with quality degradation in parallel decoding. However, their analysis is limited to standard benchmarks, such as code and math. \citet{ye2025beyond} demonstrated that dLLMs surpass autoregressive LLMs in complex reasoning and long-term planning tasks, such as Countdown and Sudoku.

For a broader overview of related work, see \cref{app:sec:extended_related_work}.

\section{Preliminaries: Diffusion LLM Decoding}
\label{sec:prelim}
Since most dLLMs~\citep{llada,dream} employ masked diffusion~\citep{mdlm}, we assume masked diffusion throughout this paper. For a more detailed discussion, see \cref{app:sec:absorb_vs_uniform}.

\paragraph{Parallel Decoding}
In parallel decoding, dLLMs generate tokens over $T$ timesteps with target sequence $Y = S_1 \cup S_2 \cup \dots \cup S_T$. At timestep $t$\footnote{For intuitive understanding, we use notation $t = 0$ to $t = T$ where $t$ corresponds to the $t$-th parallel decoding step, rather than the typical reverse time notation in diffusion models.}, the model generates token set $S_t$ conditioned on input $X$ and previously generated tokens $S_{<t} = S_1 \cup \ldots \cup S_{t-1}$. The generation probability is factorized as $P_{\theta}(S_t|X, S_{<t}) = \prod_{y_i \in S_t} P_{\theta}(y_i|X, S_{<t})$, assuming conditional independence among tokens in $S_t$.
This \textbf{conditional independence assumption} enables parallel decoding within each timestep but introduces factorization errors from unmodeled token dependencies, creating a fundamental speed-quality trade-off.
These errors arise with strong semantic or syntactic dependencies, as shown in \cref{fig:teaser}: dLLMs sample from factorized marginals $P_{\theta}(y_i|X) \cdot P_{\theta}(y_{i+1}|X)$ instead of the true joint $P_{\theta}(y_i, y_{i+1}|X)$, potentially generating invalid combinations like \textit{``New City''} rather than \textit{``New York''} or \textit{``Mexico City''}.
In the special case of \textbf{one-step generation} ($T=1$), \citet{huang2022learning} proved that the minimum achievable KL divergence between the factorized model $P_{\theta}(Y|X) = \prod_{y_i \in Y} P_{\theta}(y_i|X)$ and the true data distribution $P_\text{data}(Y|X)$ is lower-bounded by the conditional total correlation $\mathcal{C}(Y|X) = -H_\text{data}(Y|X) + \sum_{y_i \in Y} H_\text{data}(y_i|X)$:
\begin{equation}
\label{eq:one-step}
\min\nolimits_{\theta} \kldiv{P_\text{data}(Y|X)}{\ P_{\theta}(Y|X)} \ge \mathcal{C}(Y|X)
\end{equation}
$\mathcal{C}(Y|X)$ quantifies the difficulty of parallel generation by measuring token dependencies, imposing fundamental limits that even optimally designed models cannot overcome.

\paragraph{Any-order Decoding: Token Unmasking Methods}
dLLMs can decode in any order, where the unmasking strategy determines how to partition $Y$ into disjoint sets $S_1 \cup \dots \cup S_T$, critically impacting output quality.
At each timestep, dLLMs make predictions for all masked token positions but only unmask and finalize a subset of them, keeping the rest masked for subsequent timesteps.
Unmasking methods can be broadly categorized into two approaches: (i) \textbf{Top-k (static)}: which unmasks a fixed number of tokens in left-to-right order, randomly, or based on a scoring metric (e.g., confidence~\citep{llada}, margin~\citep{kim2025train}, and entropy~\citep{dream}), and (ii) \textbf{Threshold (adaptive)}: which unmasks tokens whose scores exceed a certain threshold; if no tokens meet the threshold, the single token with the highest score is unmasked.
Furthermore, several works~\citep{llada,bd3-lm} adopt semi-autoregressive (semi-AR) decoding, which divides sequences into fixed-size blocks decoded left-to-right, where tokens within each block are generated in parallel using any unmasking strategy once preceding blocks are complete.

\section{Theoretical Analysis of Parallel Decoding}\label{sec:theory}
In this section, we theoretically explore the specific quality degradation that occurs in the case of $T$-step parallel decoding, and then provide quantitative analyses from both data distribution and decoding strategy perspectives, using analytically tractable synthetic list operations.
\begin{theorem}[Lower Bound for $T$-step Parallel Decoding]
\label{theorem1}
For a factorized generative model (e.g., dLLM) $P_{\theta}(Y|X)$ performing T-step parallel decoding, assume the target sequence $Y$ is partitioned into $T$ disjoint sets, $Y = S_1 \cup S_2 \cup \dots \cup S_T$. At each step $i \in \{1, \dots, T\}$, the model generates the tokens in set $S_i$ in parallel, conditioned on $X$ and all previously generated tokens $S_{<i} = S_1 \cup S_2 \cup \ldots \cup S_{i-1}$ ($S_{<1} = \emptyset$). The minimum achievable KL divergence for this model is lower-bounded by:
\begin{equation}
\min\nolimits_{\theta} \kldiv{P_{\text{data}}(Y|X)}{P_{\theta}(Y|X)} \ge \mathcal{L}_T\left(\{S_i\}_{i=1}^T\right) := \textstyle\sum\nolimits_{i=1}^{T} \mathbb{E}_{S_{<i} \sim P_{\text{data}}}[\mathcal{C}(S_i|X, S_{<i})]
\end{equation}
where $\mathcal{C}(S_i|X, S_{<i})$ denotes the conditional total correlation of tokens in $S_i$ given $X$ and $S_{<i}$. The equality holds when $P_{\theta}(y_j|X, S_{<i}) = P_\text{data}(y_j|X, S_{<i})$ for all $i \in \{1, \dots, T\}$ and for all $y_j \in S_i$.
\end{theorem}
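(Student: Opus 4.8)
The plan is to reduce the $T$-step bound to $T$ separate invocations of the one-step result \eqref{eq:one-step}, stitched together by the chain rule for KL divergence. The enabling structural fact is that both distributions factorize over the partition in the \emph{same} way: by construction of the decoding process $P_{\theta}(Y|X) = \prod_{i=1}^{T} P_{\theta}(S_i \mid X, S_{<i})$, with each step-conditional itself a product $P_{\theta}(S_i \mid X, S_{<i}) = \prod_{y_j \in S_i} P_{\theta}(y_j \mid X, S_{<i})$, while $P_{\text{data}}(Y|X) = \prod_{i=1}^{T} P_{\text{data}}(S_i \mid X, S_{<i})$ holds automatically by the probability chain rule. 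Because both sides share this telescoping structure, the KL admits a clean per-step decomposition.

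First I would apply the KL chain rule iteratively along $S_1, \dots, S_T$ (keeping $X$ fixed throughout) to obtain the exact, $\theta$-independent identity
\[
\kldiv{P_{\text{data}}(Y|X)}{P_{\theta}(Y|X)} = \sum\nolimits_{i=1}^{T} \mathbb{E}_{S_{<i} \sim P_{\text{data}}}\!\left[\kldiv{P_{\text{data}}(S_i \mid X, S_{<i})}{P_{\theta}(S_i \mid X, S_{<i})}\right],
\]
which isolates the contribution of each decoding step and, importantly, produces the expectation under $P_{\text{data}}$ rather than $P_{\theta}$. Next I would fix a step $i$ and a realized context $(X, S_{<i})$ and apply \eqref{eq:one-step} with $(X, S_{<i})$ playing the role of the conditioning variable and $S_i$ the block generated in parallel: since $P_{\theta}(S_i \mid X, S_{<i})$ is a product over the tokens of $S_i$, its KL to the true conditional is bounded below by the conditional total correlation, $\kldiv{P_{\text{data}}(S_i \mid X, S_{<i})}{P_{\theta}(S_i \mid X, S_{<i})} \ge \mathcal{C}(S_i \mid X, S_{<i})$. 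Taking expectation over $S_{<i}$ and summing over $i$ yields $\mathcal{L}_T$; since every inequality held for an arbitrary $\theta$, the bound survives the minimization over $\theta$.

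For the equality case I would substitute $P_{\theta}(y_j \mid X, S_{<i}) = P_{\text{data}}(y_j \mid X, S_{<i})$ for all $i$, all realized $S_{<i}$, and all $y_j \in S_i$, and verify that each inner KL then collapses exactly to $\mathcal{C}(S_i \mid X, S_{<i})$ — the tightness clause of the one-step result, where the best product approximation uses the true marginals. The step I expect to be the main obstacle is this tightness direction rather than the bound itself: I must argue that these per-context optima are \emph{simultaneously} realizable, i.e., that an idealized $P_{\theta}$ can match the true marginal for every token, in every step, under every value of $S_{<i}$ at once. This is justified because each step-conditional $P_{\theta}(\cdot \mid X, S_{<i})$ is an independent functional degree of freedom of the model, so no cross-step conflict can arise; the remaining care is purely bookkeeping — holding $X$ fixed in every term and checking that each $\mathcal{C}(S_i \mid X, S_{<i})$ is indexed by the correct causal conditioning $S_{<i}$ produced by the chain rule.
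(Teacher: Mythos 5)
Your proposal is correct and follows essentially the same route as the paper's proof: both decompose the KL divergence via the chain rule into per-step conditional KL terms averaged under $P_{\text{data}}$, then apply the one-step bound of \eqref{eq:one-step} to each factorized step-conditional $P_{\theta}(S_i \mid X, S_{<i})$ and sum. The only difference is that you explicitly verify the equality clause (simultaneous realizability of the true per-token marginals across steps and contexts), which the paper states in the theorem but leaves implicit in its proof.
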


\begin{remark}[Boundary Cases]
\label{remark1}
The lower bound $\mathcal{L}_T$ captures the two boundary cases of the decoding spectrum: 
i) When $T = 1$, $\mathcal{L}_1=\mathcal{C}(Y|X)$; 
ii) When $T = |Y|$ (i.e., one-by-one decoding with $S_i = \{y_i\}$ for all $i$), for each $i$ we have $\mathcal{C}(S_i|X, S_{<i}) = 0$, thus $\mathcal{L}_{|Y|} = 0$. 
\end{remark}

\begin{theorem}[Monotonicity of Error Bounds]
\label{theorem2}
Let $\mathcal{L}^*_T$ denote the optimal (minimum) error bound over all possible T-step partitions $\{S_i\}_{i=1}^T$: $\mathcal{L}^*_T := \min\nolimits_{\{S_i\}_{i=1}^T} \mathcal{L}_T(\{S_i\}_{i=1}^T)$.
Then $\mathcal{L}^*_T$ is monotonically decreasing with respect to the number of generation steps: $\mathcal{L}^*_T \ge \mathcal{L}^*_{T+1}$.
\end{theorem}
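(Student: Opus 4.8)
The plan is to prove $\mathcal{L}^*_T \ge \mathcal{L}^*_{T+1}$ by a \emph{refinement (splitting) argument}: starting from a partition that attains $\mathcal{L}^*_T$, I construct one particular $(T+1)$-step partition whose bound is no larger. Since $\mathcal{L}^*_{T+1}$ is itself defined as a minimum over all $(T+1)$-partitions, exhibiting a single $(T+1)$-partition with $\mathcal{L}_{T+1} \le \mathcal{L}^*_T$ immediately yields $\mathcal{L}^*_{T+1} \le \mathcal{L}^*_T$.

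Concretely, let $\{S_i^*\}_{i=1}^T$ attain $\mathcal{L}^*_T$. Assuming $T < |Y|$, some block $S_k^*$ has $|S_k^*|\ge 2$; write it as a disjoint union $S_k^* = A \cup B$ with $A,B \neq \emptyset$ and form the refined partition $S_1^*,\dots,S_{k-1}^*, A, B, S_{k+1}^*,\dots,S_T^*$. The key bookkeeping observation is that every block other than $A$ and $B$ sees exactly the same conditioning prefix as before: the blocks preceding step $k$ are untouched, and each later block is merely shifted by one index while its accumulated prefix still equals the original prefix because $A \cup B = S_k^*$. Hence all of those expected total-correlation terms are unchanged, and the difference $\mathcal{L}^*_T - \mathcal{L}_{T+1}$ collapses to a single local comparison at block $k$.

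The crux is a chain-rule inequality for conditional total correlation. Writing $C = (X, S_{<k})$ for the conditioning and expanding each term via $\mathcal{C}(S\mid C) = \sum_{y\in S} H(y\mid C) - H(S\mid C)$, I use the entropy chain rule $H(A\mid C) + H(B\mid C,A) = H(S_k^*\mid C)$ to obtain
\begin{equation}
\mathcal{C}(S_k^*\mid C) - \big(\mathcal{C}(A\mid C) + \mathcal{C}(B\mid C,A)\big) = \sum\nolimits_{y\in B}\big(H(y\mid C) - H(y\mid C,A)\big) = \sum\nolimits_{y\in B} I(y;A\mid C) \ge 0,
\end{equation}
where the final inequality is the non-negativity of conditional mutual information. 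This inequality holds for every realization of the prefix, so averaging it preserves the sign; taking the expectation over $S_{<k}$ (and, for the $B$-term, jointly over $(S_{<k},A)\sim P_{\text{data}}$, exactly as encoded by the $\mathbb{E}_{S_{<i}\sim P_{\text{data}}}$ notation in the definition of $\mathcal{L}_{T+1}$) gives $\mathcal{L}_{T+1} \le \mathcal{L}^*_T$, and therefore $\mathcal{L}^*_{T+1} \le \mathcal{L}^*_T$.

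Two points will require care. First, the expectations must be matched precisely to the definition of $\mathcal{L}_{T+1}$: the new block $B$ is conditioned on the realized prefix $(X, S_{<k}, A)$, so its contribution is an expectation over $(S_{<k},A)$ jointly rather than over $S_{<k}$ alone; verifying that the refined partition's cost decomposes in exactly this way is the main place to avoid a miscount. Second, the boundary case $T = |Y|$ needs a separate word: every optimal block is then a singleton, so $\mathcal{L}^*_T = 0$ by \cref{remark1}, while non-negativity of total correlation forces $\mathcal{L}^*_{T+1} \ge 0$; the matching upper bound $\mathcal{L}^*_{T+1}\le 0$ holds under the convention that a step may emit the empty set, for which $\mathcal{C}(\emptyset\mid\cdot)=0$. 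I expect the chain-rule identity above to be the substantive step, with the prefix-invariance bookkeeping being routine but the easiest place to slip.
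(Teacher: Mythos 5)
Your proof is correct and takes essentially the same route as the paper: refine an optimal $T$-step partition by splitting one block into $A\cup B$, and bound the change via subadditivity of conditional total correlation, which the paper's \cref{lemma:subadditivity} proves by exactly your chain-rule computation (its step $H(y_b|Z)\ge H(y_b|Z,A)$ is precisely your non-negativity of $I(y_b;A\mid Z)$). The only differences are minor: the paper splits the \emph{last} block (avoiding your prefix-invariance bookkeeping, but implicitly assuming that block has at least two elements), while you split an arbitrary non-singleton block and also treat the $T=|Y|$ boundary case, both of which the paper leaves implicit.
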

\begin{remark}
\label{remark2}
Since $\mathcal{L}^*_1 = \mathcal{C}(Y|X)$ and $\mathcal{L}^*_{|Y|} = 0$, the optimal error bound $\mathcal{L}^*_T$ forms a monotonically decreasing function from the total correlation $\mathcal{C}(Y|X)$ to zero as $T$ increases from $1$ to $|Y|$.
\end{remark}
See \cref{app:sec:proofs} for proofs and \cref{app:subsec:theorem3} for the extension of \cref{theorem2} to input-dependent partitions.
\cref{theorem1} shows that $T$-step parallel decoding in dLLMs inevitably incurs distribution error due to data distribution properties such as conditional total correlation (e.g., $\mathcal{C}(Y|X)$), even with ideal models. \cref{theorem2} further establishes that the lower bound on this distribution error increases monotonically as $T$ decreases.
However, $\mathcal{C}(Y|X)$ is intractable in real-world datasets, though \citet{huang2022learning} approximated it with a well-trained transformer.

To address this gap, we use synthetic list operations with analytically tractable $\mathcal{C}(Y|X)$. We examine four tasks on length-$n$ input sequences (e.g., [A, B, C, D, E]):
(i) \textbf{Copy}: copying the input; (ii) \textbf{Replace Index}: replacing an item at a given index with F; (iii) \textbf{Replace Random}: replacing one random item with F; and (iv) \textbf{Shuffle}: rearranging the items.

\subsection{A Case Study on List Operations: a Data Distribution Perspective}\label{subsec:case_study}

\paragraph{Copy \& Replace Index}
While \textit{Replace Index} seems harder than \textit{Copy}, both tasks are equally simple from the perspective of $\mathcal{C}(Y|X)$.
For both tasks, each output token $y_i$ is uniquely determined by $X$ alone ($\mathcal{C}(Y|X) = 0$), allowing ideal models to decode in parallel without distribution error.

\paragraph{Replace Random}
While \textit{Replace Index} seems harder than \textit{Replace Random} (requiring replacement at a given index), from the perspective of $\mathcal{C}(Y|X)$, the opposite holds.
While \textit{Replace Index} yields $\mathcal{C}(Y|X)=0$, \textit{Replace Random} exhibits token dependencies since replacing exactly one randomly selected item requires all others to remain unchanged, yielding:
\begin{equation}
\mathcal{C}(Y|X) = (n-1)[\log_2(n) - \log_2(n-1)], \quad \lim\nolimits_{n \to \infty} \mathcal{C}(Y|X) = \log_2(e) \approx 1.44.
\end{equation}
This indicates that the difficulty of parallel decoding remains bounded for arbitrarily long sequences.

\paragraph{Shuffle}
\textit{Shuffle} generates a random permutation and exhibits stronger token dependencies than \textit{Replace Random}, as once an item is placed at any position, it cannot appear elsewhere. This yields:
\begin{equation}
\mathcal{C}(Y|X) = n\log_2(n) - \log_2(n!), \quad \lim\nolimits_{n \to \infty} \mathcal{C}(Y|X) = \infty,
\end{equation}
indicating that the difficulty of capturing permutation constraints grows without bound for arbitrarily long sequences in one-step generation.
For $T$-step parallel decoding, $\mathcal{L}_T(\{S_t\}_{t=1}^T) = \textstyle\sum_{t=1}^{T} |S_t|\log_2(k_t) - \log_2(n!)$, where $k_t = n - \sum_{j=1}^{t-1}|S_j|$ with $k_1=n$.
When $T=n/2$ (2 tokens per step), $\mathcal{L}_{n/2}=\log_2\frac{n!!}{(n-1)!!}$, $\lim\nolimits_{n \to \infty} \mathcal{L}_{n/2} = \infty$, where $n!!$ is the double factorial.
This indicates that the difficulty grows without bound even when decoding only 2 tokens in parallel.

\subsection{A Case Study on List Operations: A Decoding Strategy Perspective}\label{subsec:probabilistic}
Beyond the data distribution perspective, the practical generation quality of dLLMs critically depends on the decoding (unmasking) strategy. We analyze achievable accuracy under two categories: (i) \textbf{Top-k} (Random or Confidence): unmask $k$ random positions or $k$ highest-confidence positions per step; (ii) \textbf{Threshold} (Confidence): unmask all positions exceeding confidence threshold $\gamma$; if no positions meet $\gamma$, the single token with the highest confidence is unmasked.
We focus on an unbiased (ideal) model where each token's logit $l_j = l_{j,\text{ideal}} + \epsilon_j$, with $l_{j,\text{ideal}}$ determined solely by the task and $\epsilon_j$ representing zero-mean stochastic noise, without intrinsic bias. We consider greedy decoding ($\tau=0$) and temperature sampling ($\tau=1$). Summary of our findings are in \cref{tab:theory}.

\subsubsection{Top-k (Random or Confidence)}
\paragraph{Shuffle}
Since at each step $i$, the probability of correctly selecting $k$ distinct items for the remaining positions is $\frac{P(n-(i-1)k, k)}{(n-(i-1)k)^k}$, the overall accuracy of successfully completing the shuffle operation is:
\begin{equation}
\label{eq:topk_shuffle}
\text{Acc}(k) = \prod_{i=1}^{n/k} \frac{P(n-(i-1)k, k)}{(n-(i-1)k)^k} = 
\begin{cases}
\frac{n!}{n^n} \to 0 & \text{if } k = n \\
\frac{(n-1)!!}{n!!} \to 0 & \text{if } k = 2 \\
1 \to 1 & \text{if } k = 1
\end{cases}
\quad \text{as } n \to \infty,
\end{equation}
regardless of using greedy decoding or temperature sampling.
Notably, for both $k=n$ (one-step generation) and $k=2$ (two tokens per step), the accuracy converges to zero as $n$ increases. 

\paragraph{Replace Random}
Under greedy decoding (assume $k \mid n$ for simplicity), at each step $i < \tfrac{n}{k}$, all $k$ positions keep their tokens since the keep probability $\tfrac{n-(i-1)k-1}{n-(i-1)k}$ exceeds the replace probability $\tfrac{1}{n-(i-1)k}$. At the final step, when $k>2$, all positions are kept since $\tfrac{k-1}{k} > \tfrac{1}{k}$, failing the task; when $k=2$, a successful replacement occurs with probability $0.5 \times 0.5 + 0.5 \times 0.5 = 0.5$ (sum of [keep, replace] and [replace, keep]). Thus $\text{Acc}(k>2) = 0$ and $\text{Acc}(k=2) = 0.5$.

Under temperature sampling, each position initially keeps its original token with probability $\tfrac{n-1}{n}$.
Thus, for one-step generation, we have $\text{Acc}(n) = \left(\tfrac{n-1}{n}\right)^{n-1}$, and $\lim\nolimits_{n \to \infty} \text{Acc}(n) = \tfrac{1}{e}$.

\subsubsection{Threshold (Confidence)}
\paragraph{Shuffle}
Assuming $\gamma > 0.5$, at each timestep, the confidence for each item at each position equals $1/m$ where $m$ is the number of remaining masked tokens. For $m \geq 2$, we have $\gamma > 0.5 \geq 1/m$, so the threshold is never met when multiple tokens remain masked.
Thus, only a single token is decoded at each timestep, converging to one-by-one decoding with guaranteed success: $\text{Acc}(\gamma > 0.5) = 1$.

\paragraph{Replace Random}
Under greedy decoding with $n > 2$, each position at the first timestep assigns a confidence of $\tfrac{n-1}{n}$ to retaining its original token and $\tfrac{1}{n}$ to replacement, thus all positions greedily preserve their original tokens.
When $\tfrac{n-1}{n} > \gamma$, all positions are unmasked simultaneously, resulting in one-step generation that fails to perform any replacements: $\text{Acc}(\tfrac{n-1}{n} > \gamma) = 0$.
When $\gamma \ge \tfrac{n-1}{n}$, all positions fail to exceed the threshold, so only one token is unmasked per timestep. This one-by-one decoding continues through subsequent timesteps until the final step, where only a single token remains masked. At this final step, the model assigns confidence 1 to the correct replacement, ensuring successful completion: $\text{Acc}(\gamma \ge \tfrac{n-1}{n}) = 1$.

\begin{table}[ht!]
    \centering
\caption{Summary of findings in \cref{sec:theory}.}
    \resizebox{\linewidth}{!}{
\begin{tabular}{lcccc} 
\toprule
\multirow{2.5}{*}{\textbf{Task / Analysis}}  &\multicolumn{2}{c}{\textbf{Data Distribution Perspective} (\cref{subsec:case_study})} & \multicolumn{2}{c}{\textbf{Decoding Strategy Perspective} (\cref{subsec:probabilistic})} \\
\cmidrule(lr){2-3}\cmidrule(lr){4-5}
 & $\mathcal{C}(Y|X)$ & $\lim\nolimits_{n \to \infty} \mathcal{C}(Y|X)$ & Acc. (Greedy, Top-k) & Acc. (Greedy, Threshold) \\
 \midrule
\textbf{Copy \& Replace Index} & $0$ & $0$ & $1$ & $1$ \\
\textbf{Replace Random} &$(n-1)[\log_2(n) - \log_2(n-1)]$&$\log_2(e) \approx 1.44$& $0.5\ \text{if } k =2; 0\ \text{if } k > 2$ & $1\ \text{if } \gamma \ge (n-1)/n; \text{ else } 0$ \\
\textbf{Shuffle} & $n\log_2(n) - \log_2(n!)$ & $\infty$ &\cref{eq:topk_shuffle}& $1\ \text{if } \gamma > 0.5$ \\
\bottomrule
\end{tabular}
    }
    \label{tab:theory}
\end{table}
\subsubsection{Empirical Validation}
\begin{figure}[htbp]
    \centering
    \begin{subfigure}{0.195\textwidth}
        \centering
        \includegraphics[width=\linewidth]{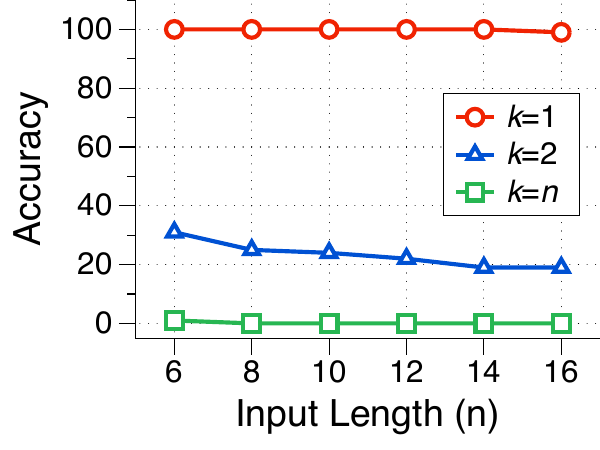}
        \caption{Shuffle ($\tau=1$)}
        \label{subfig:shuffle1}
    \end{subfigure}
    \hfill
    \begin{subfigure}{0.195\textwidth}
        \centering
        \includegraphics[width=\linewidth]{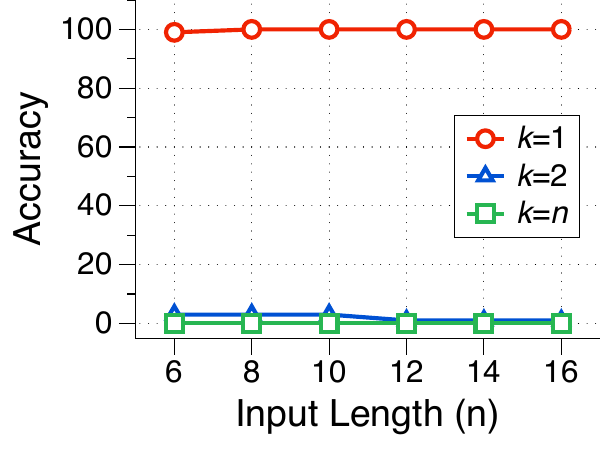}
        \caption{Shuffle ($\tau=0$)}
        \label{subfig:shuffle0}
    \end{subfigure}
    \hfill
    \begin{subfigure}{0.195\textwidth}
        \centering
        \includegraphics[width=\linewidth]{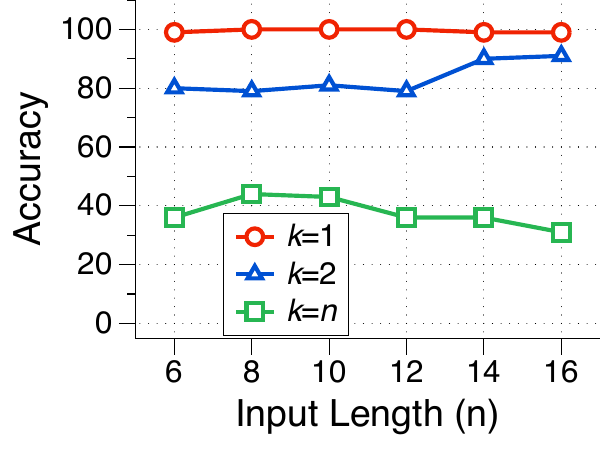}
        \caption{Replace ($\tau=1$)}
        \label{subfig:replace1}
    \end{subfigure}
    \hfill
    \begin{subfigure}{0.195\textwidth}
        \centering
        \includegraphics[width=\linewidth]{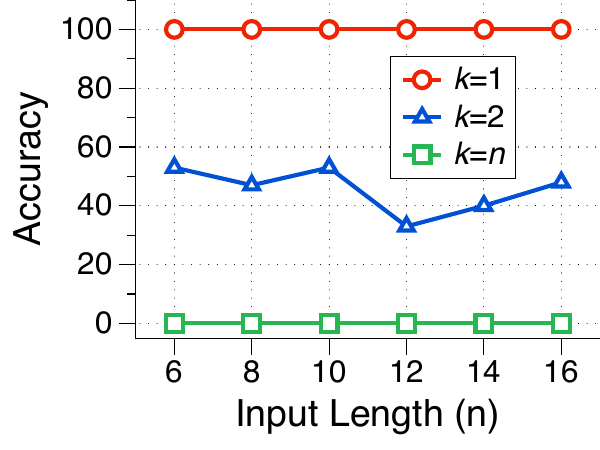}
        \caption{Replace ($\tau=0$)}
        \label{subfig:replace0}
    \end{subfigure}
    \hfill
    \begin{subfigure}{0.195\textwidth}
        \centering
        \includegraphics[width=\linewidth]{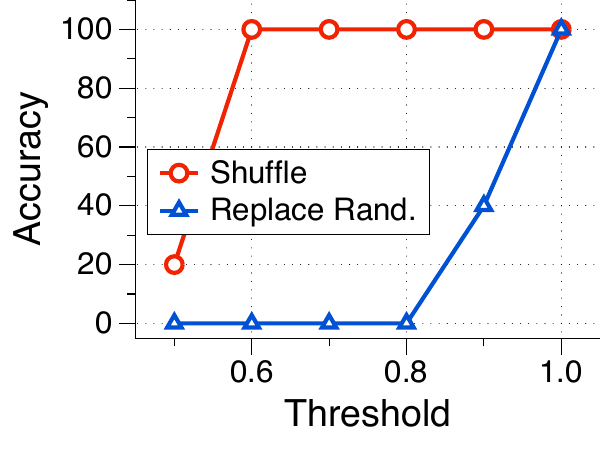}
        \caption{Thres. ($n=10$)}
        \label{subfig:threshold}
    \end{subfigure}
    \caption{Empirical validation results on list operations: \textit{Shuffle} (\cref{subfig:shuffle1,subfig:shuffle0}) and \textit{Replace Random} (\cref{subfig:replace1,subfig:replace0}) with \textit{Random Top-k}, and both tasks with \textit{Confidence Threshold} (\cref{subfig:threshold}).}
    \label{fig:empirical_validation}
\end{figure}
To validate our analysis, we fine-tune LLaDA 1.5~\citep{llada1.5} on each list operation task and perform infilling experiments with pre-filled formatting tokens, leaving only the list item positions to be generated.
\cref{subfig:shuffle1,subfig:shuffle0} validate \cref{eq:topk_shuffle}: accuracy converges to 0 as $n$ increases for $k > 1$, with $k = n$ converging much faster than $k = 2$.
\cref{subfig:replace1} confirms that one-step generation achieves $1/e$ accuracy under temperature sampling, while \cref{subfig:replace0} shows that greedy decoding yields 0.5 accuracy for $k=2$ but 0 for $k=n$.
\cref{subfig:threshold} confirms that \textit{Shuffle} and \textit{Replace Random} exhibit opposite trends with varying $\gamma$, indicating the difficulty of achieving high accuracy for both tasks.
\section{Realistic Benchmark: \ours{}}

While we reveal limitations of parallel decoding in synthetic settings, we demonstrate that these issues persist in real-world scenarios by introducing \ours{}, a realistic benchmark spanning diverse difficulty levels to evaluate dLLMs' parallel decoding.
It consists of 17 tasks across 3 categories: (i) \textbf{Waiting Line} (10 tasks), (ii) \textbf{Text Writing} (5 tasks), and (iii) \textbf{Puzzles} (2 tasks).

\paragraph{Waiting Line}
\textit{Waiting Line} extends synthetic list operations (e.g., \textit{Shuffle}) to realistic customer service scenarios. Given a queue of customers (e.g., [``Susan Fox'', ``Philip Gray'', ``Maria Butler'', ``Albert Sanchez'']), the task performs queue management operations analogous to list operations. The number of customers $n$ serves as a controllable parameter for adjusting benchmark difficulty.



\paragraph{Text Writing}

To expose the quality degradation in parallel decoding, we introduce grammar~\citep{grammar} scores that impose stronger token-level dependencies, which traditional metrics such as ROUGE~\citep{rouge} cannot capture.
We examine five tasks: (i) \textit{Summarization} using SAMSum \citep{samsum}, (ii) \textit{Paraphrasing} using \citet{chatgpt_paraphrases_dataset}, and (iii-v) our proposed \textit{Words-to-Sentence Generation (W2S)} with three difficulty levels.
Both \textit{Summarization} and \textit{Paraphrasing} constrain outputs to rich input context, limiting token candidates and keeping $\mathcal{C}(Y|X)$ small. For scenarios with greater token dependencies, we propose \textit{W2S}, where the goal is to construct a sentence using the given $n$ words, such as \textit{``Construct a single, coherent sentence using the words sand, home, play, and bottle.''} Since the output can be freely generated using minimal input context, the token dependencies become larger depending on what output is generated. We design three difficulty levels: \textit{easy} (related words), \textit{medium} (loosely connected words), and \textit{hard} (unrelated words), with harder variants requiring more creative construction and exhibiting larger $\mathcal{C}(Y|X)$.

\paragraph{Puzzles}

\citet{ye2025beyond,dream} utilized \textit{Sudoku} to demonstrate the superior planning capabilities of dLLMs over autoregressive LLMs. Crucially, every Sudoku puzzle, regardless of difficulty, has a unique solution ($\mathcal{C}(Y|X)=0$).
While \textit{Latin Square}~\citep{latin} shares structural similarities with \textit{Sudoku}, it has many valid solutions ($\mathcal{C}(Y|X) > 0$). We included both to examine how $\mathcal{C}(Y|X)$ affects parallel decoding in structurally similar tasks.


\section{Benchmark Results and Analysis on \ours{}}
\label{sec:exps}
\paragraph{Setup}
For autoregressive LLMs, we use Llama 3.1 8B, Llama 3.2 3B~\citep{llama3}, Qwen2.5 3B/7B~\citep{qwen2.5}, Qwen3 4B~\citep{qwen3}, and Claude 3.5 Haiku~\citep{haiku}.
For dLLMs, we test LLaDA 8B~\citep{llada}, LLaDA 1.5 8B~\citep{llada1.5}, Dream 7B~\citep{dream}, DiffuCoder~\citep{diffucoder}, and closed-source Mercury~\citep{mercury}.
We also evaluate KV caching~\citep{fast-dllm}. 
For unmasking methods, we test: i) \textit{Top-k}: Random, Confidence~\citep{maskgit}, Left-to-Right\footnote{Unmasks $k$ tokens at a time from left to right.}, Margin~\citep{kim2025train}, Entropy~\citep{dream}; and ii) \textit{Threshold}: Confidence~\citep{fast-dllm} with $\gamma \in \{0.5, 0.6, \ldots, 1.0\}$. We also test  \textit{Factor-based}~\citep{fast-dllm}, and semi-AR decoding~\citep{llada}. Further experimental details are in \cref{app:bench_data_details}.
In this section, we primarily focus on LLaDA 1.5, a representative open-source dLLM, and four unmasking methods with results shown in \cref{fig:waiting_line}, while full analysis including additional models, unmasking methods, and the impact of KV caching is in  \cref{app:sec:additional_results}.
When evaluating LLaDA 1.5 on \textit{Waiting Line}, we use $n\in \{3, 4, 5, 6\}$, ensuring task simplicity to focus on parallel decoding effects rather than model capacity limitations.
\begin{figure}[htbp]
    \centering
    \begin{subfigure}{0.195\textwidth}
        \centering
        \includegraphics[width=\linewidth]{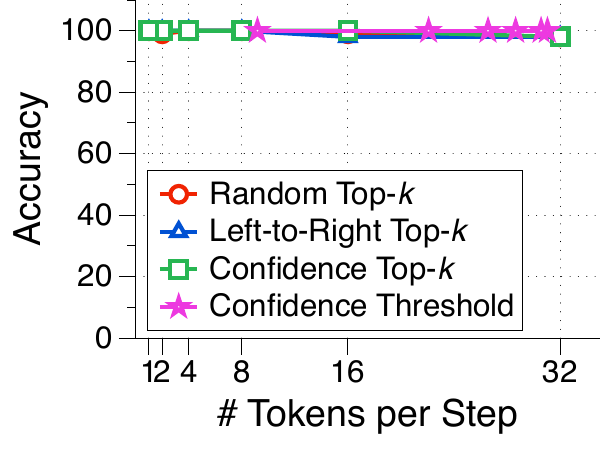}
        \caption{Copy}
        \label{subfig:waiting_line_copy}
    \end{subfigure}
    \hfill
    \begin{subfigure}{0.195\textwidth}
        \centering
        \includegraphics[width=\linewidth]{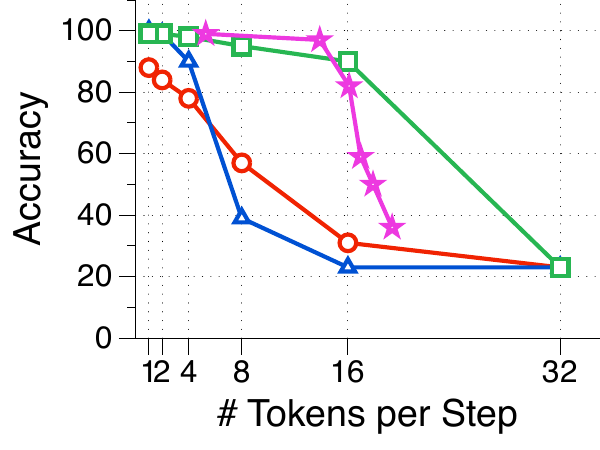}
        \caption{Reverse}
        \label{subfig:waiting_line_reverse}
    \end{subfigure}
    \hfill
    \begin{subfigure}{0.195\textwidth}
        \centering
        \includegraphics[width=\linewidth]{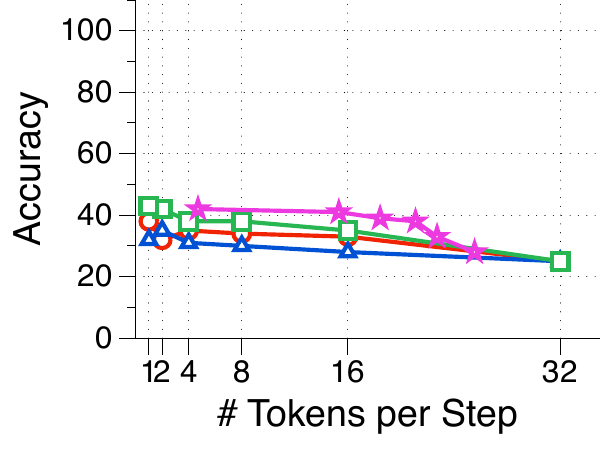}
        \caption{Replace Index}
        \label{subfig:waiting_line_replace_index}
    \end{subfigure}
    \hfill
    \begin{subfigure}{0.195\textwidth}
        \centering
        \includegraphics[width=\linewidth]{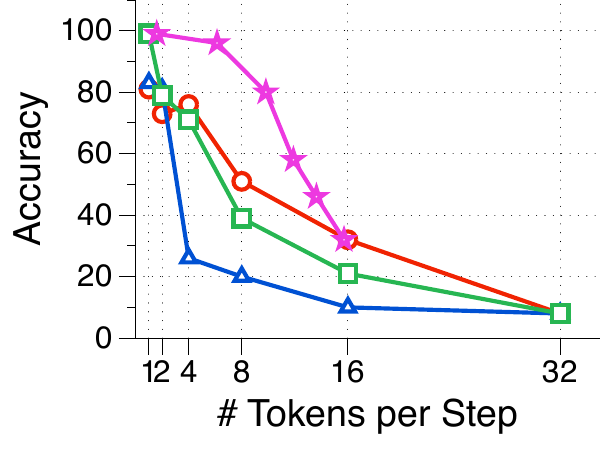}
        \caption{Replace Random}
        \label{subfig:waiting_line_replace_random}
    \end{subfigure}
    \hfill
    \begin{subfigure}{0.195\textwidth}
        \centering
        \includegraphics[width=\linewidth]{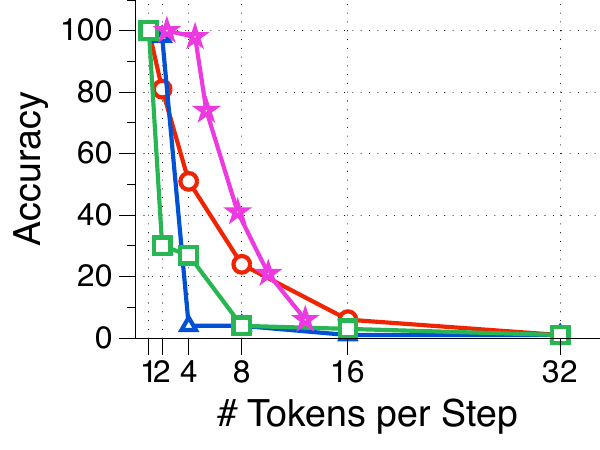}
        \caption{Shuffle}
        \label{subfig:waiting_line_shuffle}
    \end{subfigure}

    \begin{subfigure}{0.195\textwidth}
        \centering
        \includegraphics[width=\linewidth]{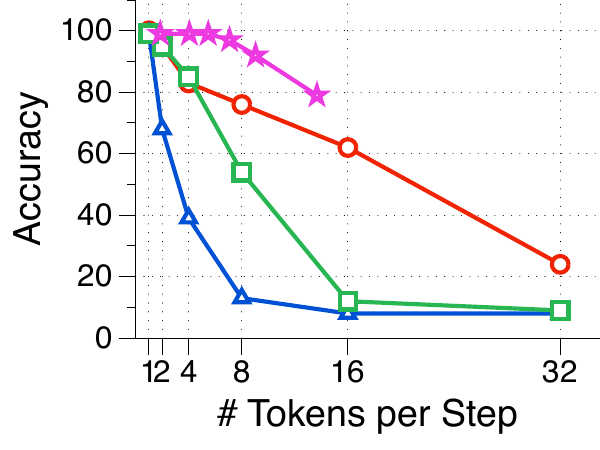}
        \caption{Paraphrasing}
        \label{subfig:text_writing_paraphrasing}
    \end{subfigure}
    \hfill
    \begin{subfigure}{0.195\textwidth}
        \centering
        \includegraphics[width=\linewidth]{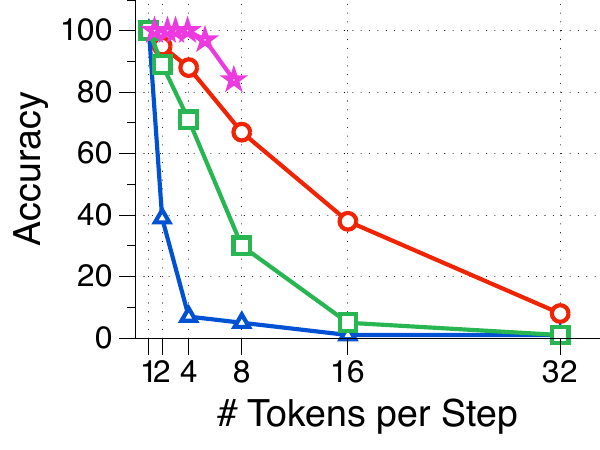}
        \caption{W2S (easy)}
        \label{subfig:text_writing_easy}
    \end{subfigure}
    \hfill
    \begin{subfigure}{0.195\textwidth}
        \centering
        \includegraphics[width=\linewidth]{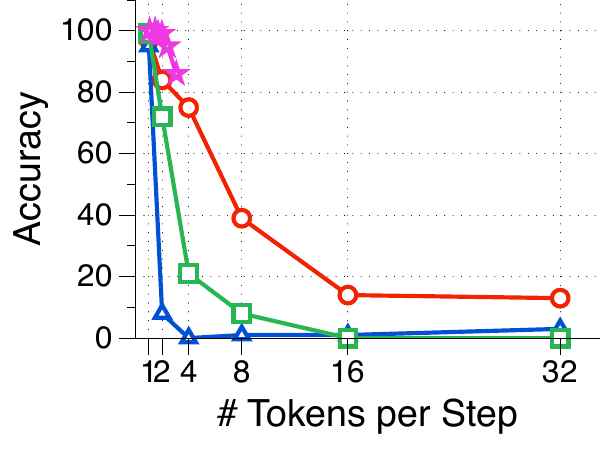}
        \caption{W2S (hard)}
        \label{subfig:text_writing_hard}
    \end{subfigure}
    \hfill
    \begin{subfigure}{0.195\textwidth}
        \centering
        \includegraphics[width=\linewidth]{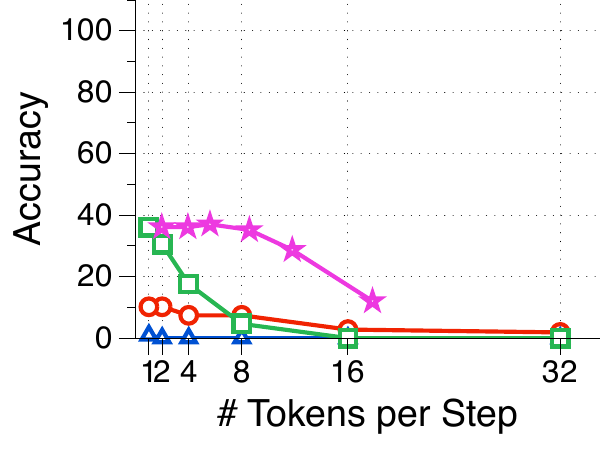}
        \caption{Sudoku}
        \label{subfig:puzzles_sudoku}
    \end{subfigure}
    \hfill
    \begin{subfigure}{0.195\textwidth}
        \centering
        \includegraphics[width=\linewidth]{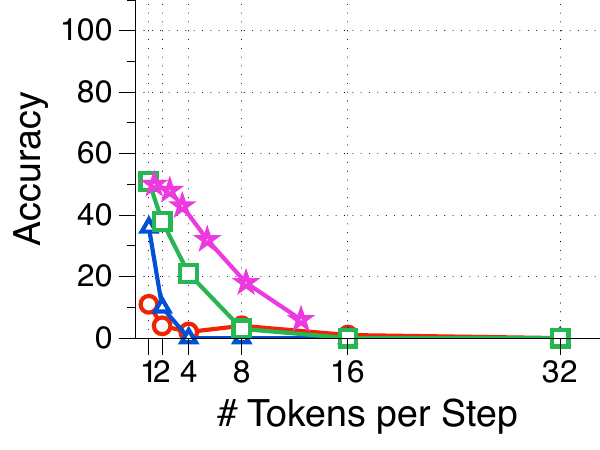}
        \caption{Latin Square}
        \label{subfig:puzzles_latin_square}
    \end{subfigure}
\caption{Benchmark results of LLaDA 1.5~\citep{llada1.5} on \ours{}: \textit{Waiting Line} (\cref{subfig:waiting_line_copy,subfig:waiting_line_reverse,subfig:waiting_line_replace_index,subfig:waiting_line_replace_random,subfig:waiting_line_shuffle}), \textit{Text Writing} (\cref{subfig:text_writing_paraphrasing,subfig:text_writing_easy,subfig:text_writing_hard}), and \textit{Puzzles} (\cref{subfig:puzzles_sudoku,subfig:puzzles_latin_square}).}
    \label{fig:waiting_line}
\end{figure}
\paragraph{Comparison Across Tasks}
For \textit{\textbf{Waiting Line}}, \textit{Copy} maintains near-perfect accuracy regardless of parallelism degree (\cref{subfig:waiting_line_copy}). For \textit{Reverse} (\cref{subfig:waiting_line_reverse}), accuracy varies by unmasking method.
We argue that these differences stem from insufficient model capacity rather than token dependency ($\mathcal{C}(Y|X)=0$).
\textit{Replace Index} (\cref{subfig:waiting_line_replace_index}) and \textit{Replace Random} (\cref{subfig:waiting_line_replace_random}) exhibit contrasting trends. Despite \textit{Replace Random} achieving near-perfect one-by-one decoding accuracy compared to \textit{Replace Index}'s below 50\% accuracy, this advantage reverses under parallelization: \textit{Replace Index} maintains stable accuracy with increasing parallelism, while \textit{Replace Random} degrades rapidly across all unmasking methods.
This reversal aligns with \cref{sec:theory}: $\mathcal{C}(Y|X)=0$ for \textit{Replace Index} but $\mathcal{C}(Y|X)>0$ for \textit{Replace Random}.
\textit{Shuffle} (\cref{subfig:waiting_line_shuffle}) shows the steepest accuracy degradation, dropping from near-perfect to zero more rapidly than \textit{Replace Random} as parallelism increases.
For \textit{\textbf{Text Writing}} (\cref{subfig:text_writing_paraphrasing,subfig:text_writing_easy,subfig:text_writing_hard}), all tasks achieve near-perfect grammar scores with one-by-one decoding, but degrade at increasingly steep rates under parallelization: \textit{Paraphrasing}, \textit{W2S (easy)}, then \textit{W2S (hard)}. This aligns with expected conditional dependencies. \textit{Paraphrasing} merely restructures existing content with rich context, limiting token candidates and keeping $\mathcal{C}(Y|X)$ small. Conversely, \textit{W2S} tasks generate complete sentences from sparse word constraints, expanding the candidate space where each position strongly influences others. \textit{W2S (hard)} uses semantically unrelated words requiring more creative construction than \textit{W2S (easy)}'s naturally coherent word sets, resulting in larger $\mathcal{C}(Y|X)$ and steeper degradation.
For \textit{\textbf{Puzzles}} (\cref{subfig:puzzles_sudoku,subfig:puzzles_latin_square}), \textit{Latin Square} outperforms the harder \textit{Sudoku} under one-by-one decoding, but this advantage disappears with increased parallelism (\textit{Latin Square}: $\mathcal{C}(Y|X)>0$, \textit{Sudoku}: $\mathcal{C}(Y|X)=0$), leading to similar accuracy.

\paragraph{Comparison Across Unmasking Methods}
\textit{Confidence Threshold} outperforms \textit{Top-k} at conservative thresholds through adaptive token selection, but suffers rapid degradation at aggressive thresholds due to unpredictable unmasking spikes, making careful tuning essential.
When comparing among top-k methods, for \textit{Reverse} and \textit{Replace Index}, \textit{Confidence Top-k} outperforms \textit{Random Top-k}. This is because when $\mathcal{C}(Y|X)=0$, model imperfection limits accuracy, making it beneficial to unmask tokens that the model is confident about first.
For \textit{Replace Random} and \textit{Shuffle} ($\mathcal{C}(Y|X)>0$), \textit{Random Top-k} conversely outperforms \textit{Confidence Top-k}.
In conclusion, no universally superior unmasking method exists, with performance varying by task.

\paragraph{Semi-AR Decoding Results}



Semi-AR decoding controls the left-to-right tendency in unmasking order by adjusting block length.
Its effectiveness in parallel decoding depends on the dependency structure.
With local dependencies (e.g., grammatical constraints in \textit{Text Writing}), small blocks enforce left-to-right (local) decoding, causing quality degradation.
With global dependencies (e.g., \textit{Waiting Line}'s distributed items), left-to-right decoding becomes beneficial by capturing formatting tokens around items.
\cref{fig:semi-ar} shows different accuracy trends between \textit{Text Writing} and \textit{Waiting Line} across block sizes (\textit{Random Top-2}), indicating the difficulty of optimizing both tasks simultaneously. 

\paragraph{Performance of Mercury vs. Autoregressive LLMs}
\begin{figure}[htbp]
    \centering
    \begin{subfigure}{0.195\textwidth}
        \centering
        \includegraphics[width=\linewidth]{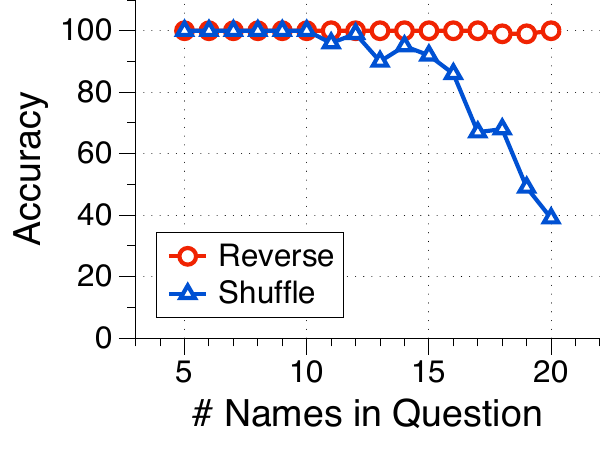}
        \caption{Mercury}
    \end{subfigure}
    \hfill
    \begin{subfigure}{0.195\textwidth}
        \centering
        \includegraphics[width=\linewidth]{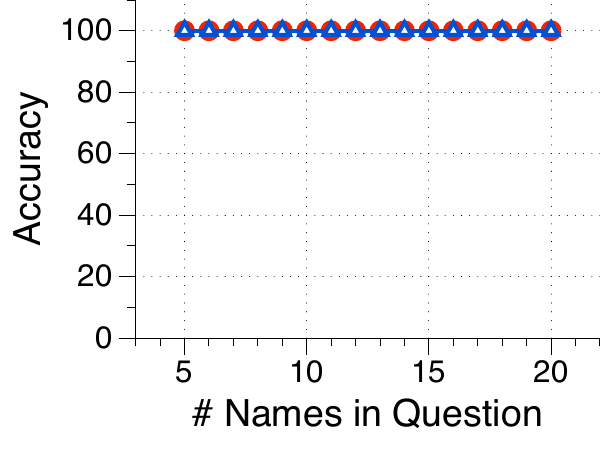}
        \caption{Haiku 3.5}
    \end{subfigure}
    \hfill
    \begin{subfigure}{0.195\textwidth}
        \centering
        \includegraphics[width=\linewidth]{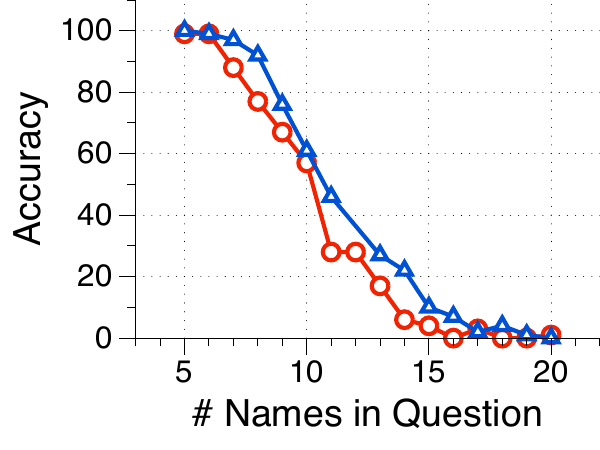}
        \caption{Qwen2.5 3B}
    \end{subfigure}
    \hfill
    \begin{subfigure}{0.195\textwidth}
        \centering
        \includegraphics[width=\linewidth]{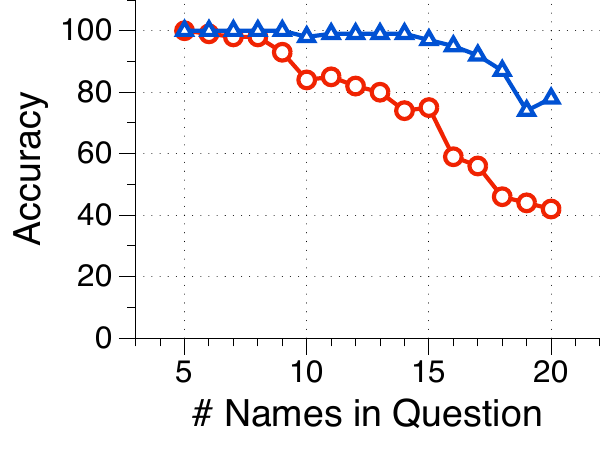}
        \caption{Qwen3 4B}
    \end{subfigure}
    \hfill
    \begin{subfigure}{0.195\textwidth}
        \centering
        \includegraphics[width=\linewidth]{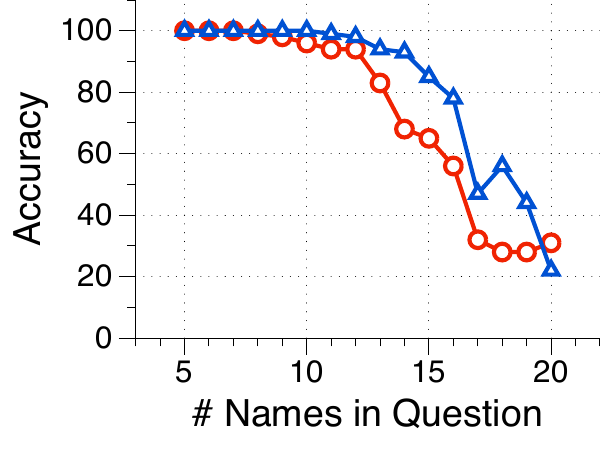}
        \caption{Qwen2.5 7B}
    \end{subfigure}
    \caption{\textit{Waiting Line} results  on Mercury~\citep{mercury} and autoregressive LLMs.}
    \label{fig:results_llms}
\end{figure}
\cref{fig:results_llms} shows \textit{Waiting Line} results with $n \in [5, 20]$, where Mercury and autoregressive LLMs show opposite accuracy trends on \textit{Reverse} and \textit{Shuffle}.
As \textit{Reverse} seems harder (requiring exact reversal) than \textit{Shuffle} (accepting any permutation), autoregressive LLMs score higher on \textit{Shuffle}. However, Mercury shows the opposite: it maintains near-perfect accuracy on \textit{Reverse} but fails with \textit{Shuffle}, with accuracy dropping sharply as $n$ increases, consistent with \cref{sec:theory}. This indicates that the closed-source Mercury struggles to adaptively adjust its parallelism to maintain quality on tasks with high token dependencies.

\paragraph{Broad Coverage of \ours{}}
\cref{fig:clustering} demonstrates \ours{}'s necessity and differentiation from existing benchmarks. The x-axis shows parallelism-induced quality degradation (rightward = greater), while the y-axis shows semi-AR block length effects (upward = better quality with left-to-right ordering).
For \textit{Waiting Line}, tasks with $\mathcal{C}(Y|X)=0$ appear left, while those with $\mathcal{C}(Y|X)>0$ progressively move rightward with increasing degradation, culminating in \textit{Shuffle} at the far right. For \textit{Text Writing}, lower dependency tasks (\textit{paraphrasing}, \textit{summarization}) appear left, while higher dependency \textit{W2S} tasks appear right. In contrast to existing benchmarks like GSM8K~\citep{gsm8k}, MATH~\citep{math}, and IFEval~\citep{ifeval}, which occupy a narrow range, \ours{} spans a broad spectrum of parallel decoding difficulty.
This broad coverage uniquely enables the evaluation of adaptive methods' ability to modulate parallelism according to task difficulty.
Details are in \cref{app:sec:clustering}.




\paragraph{Speed-Quality Trade-off with Oracle Performance}
\begin{figure}[htbp]
    \centering
    \begin{minipage}{0.315\textwidth}
        \centering
        \includegraphics[width=\linewidth]{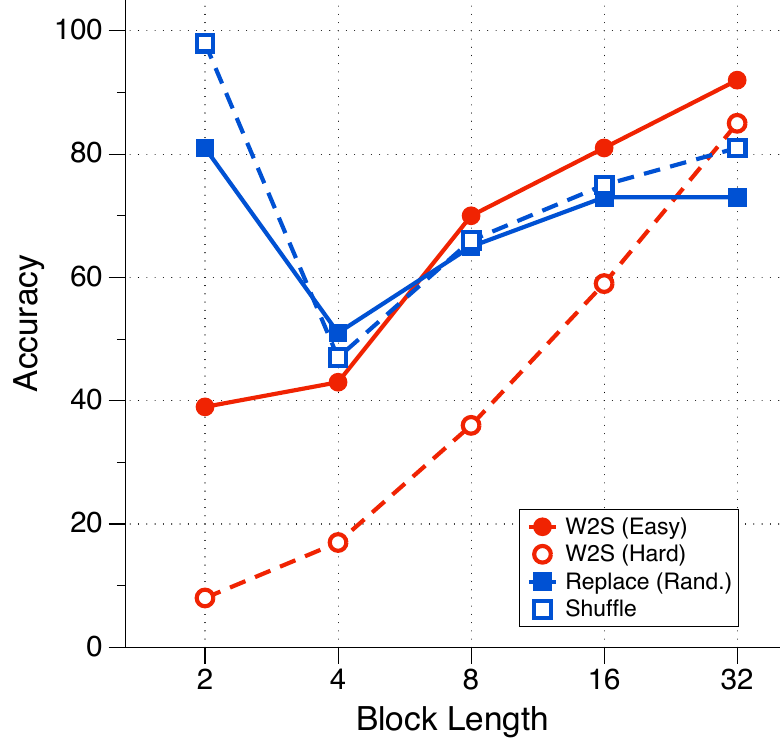}
        \caption{Semi-AR results on \textit{Waiting Line} and \textit{Text Writing}.}
        \label{fig:semi-ar}
    \end{minipage}
    \hfill
    \begin{minipage}{0.315\textwidth}
        \centering
        \includegraphics[width=\linewidth]{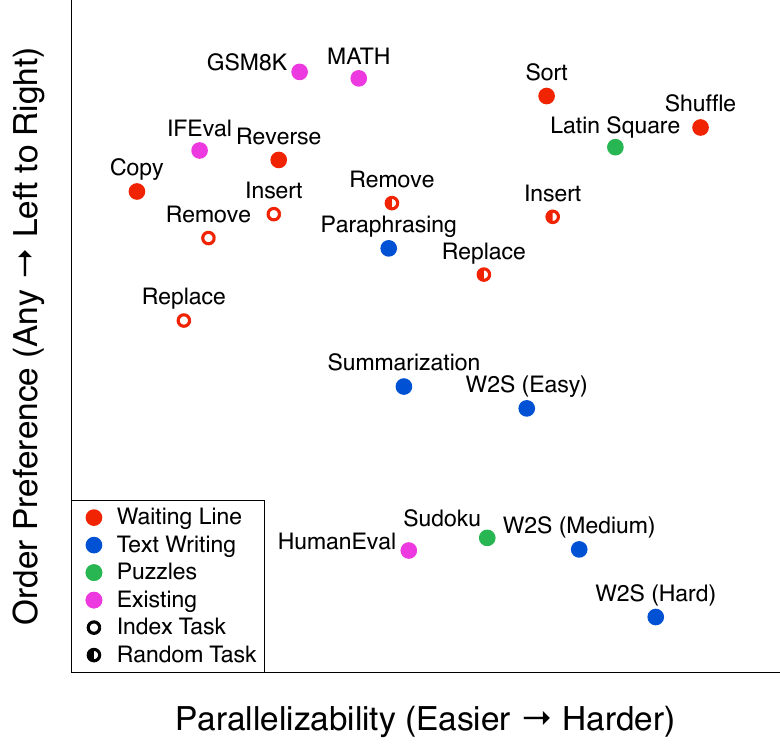}
        \caption{Broad coverage of \ours{}.}
        \label{fig:clustering}
    \end{minipage}
    \hfill
    \begin{minipage}{0.315\textwidth}
        \centering
        \includegraphics[width=\linewidth]{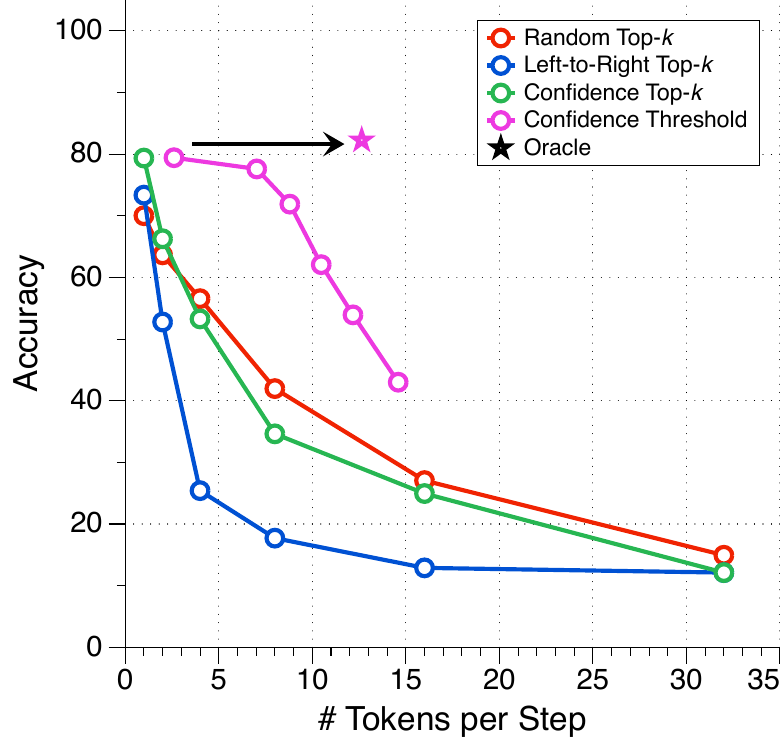}
        \caption{Speed-quality trade-off with oracle performance.}
        \label{fig:trade-off}
    \end{minipage}
\end{figure}
\cref{fig:trade-off} shows the speed-quality trade-off curves for each unmasking method. Static unmasking (top-k) methods show dramatic accuracy drops as tokens-per-step increase, while adaptive unmasking (threshold) achieves superior trade-offs. This raises the question: \textit{Do existing adaptive unmasking methods already provide sufficient speedup-quality trade-offs?} To answer this, we measure oracle performance: the best possible trade-off assuming access to the optimal threshold for each sample that yields the correct answer. The oracle achieves both the best accuracy and a significant speedup over threshold methods with comparable accuracy, demonstrating that per-sample threshold adaptation alone yields significant improvements and suggests a substantial room for future research.
Details are in \cref{app:sec:tradeoff_plot}.

\begin{tcolorbox}[takeawaybox]
\textbf{Takeaway.} Static parallel decoding (e.g., top-k) can suffer severe quality degradation, and adaptive decoding strategies (e.g., threshold) still have significant room for improvement.
\end{tcolorbox}

\section{Exploring Additional Techniques}
\begin{figure}[htbp]
    \centering
    \begin{subfigure}{0.195\textwidth}
        \centering
        \includegraphics[width=\linewidth]{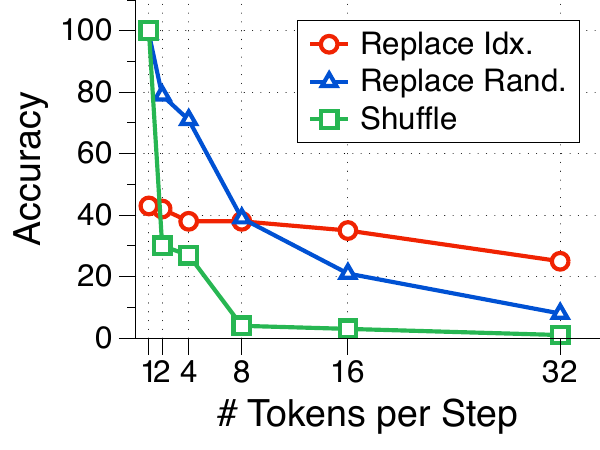}
        \caption{Pretrained}
        \label{fig:advanced_pretrained}
    \end{subfigure}
    \hfill
    \begin{subfigure}{0.195\textwidth}
        \centering
        \includegraphics[width=\linewidth]{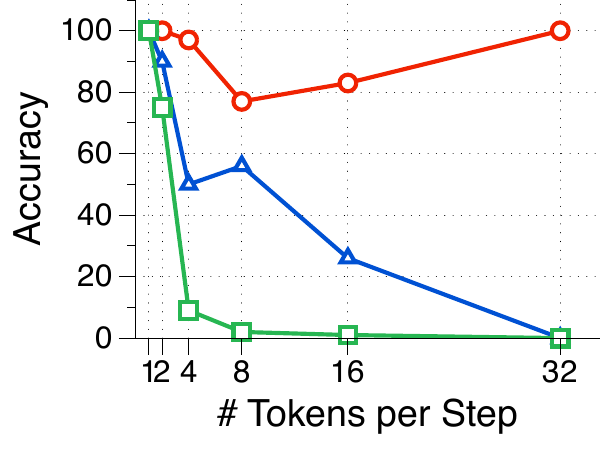}
        \caption{Fine-tuning}
        \label{fig:advanced_finetuned}
    \end{subfigure}
    \hfill
    \begin{subfigure}{0.195\textwidth}
        \centering
        \includegraphics[width=\linewidth]{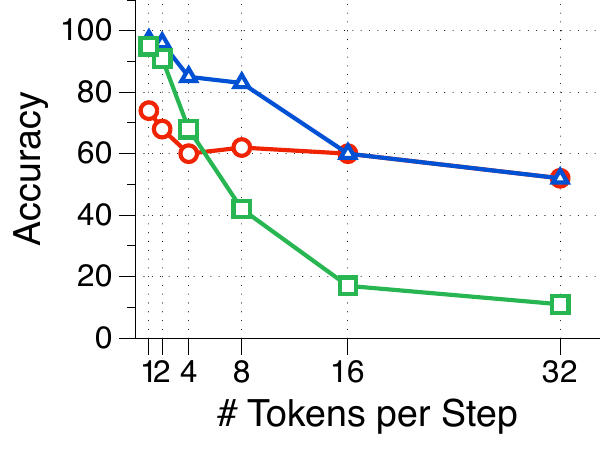}
        \caption{Chain-of-Thought}
        \label{fig:advanced_cot}
    \end{subfigure}
    \hfill
    \begin{subfigure}{0.195\textwidth}
        \centering
        \includegraphics[width=\linewidth]{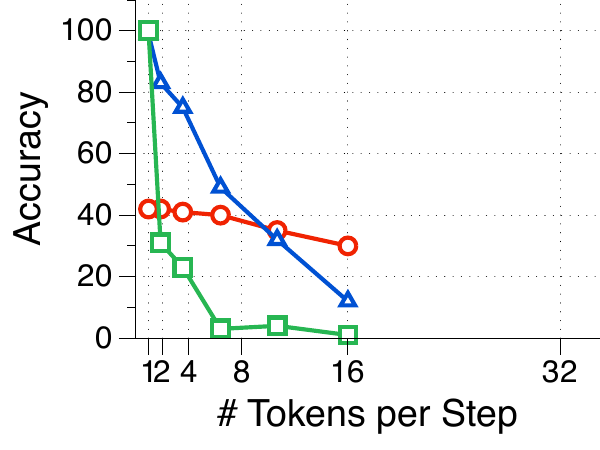}
        \caption{ReMDM}
    \end{subfigure}
    \hfill
    \begin{subfigure}{0.195\textwidth}
        \centering
        \includegraphics[width=\linewidth]{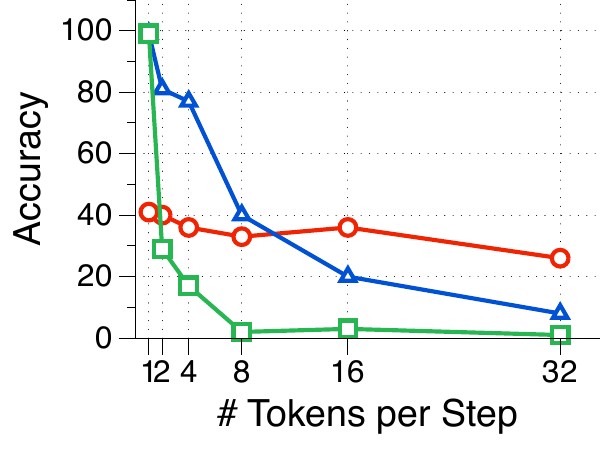}
        \caption{RCR}
    \end{subfigure}
    \caption{\textit{Waiting Line} results using LLaDA 1.5 (\textit{Random Top-k}) with various advanced techniques.}
    \label{fig:advanced}
\end{figure}

We further explore whether benchmark performance can be improved when various advanced techniques for dLLMs are applied. Details and full results are in \cref{app:sec:advanced}.

\paragraph{Additional Unmasking Methods}

Beyond the methods evaluated in our main experiments, several recent works~\citep{eb-sampler,dus,slowfast-sampler,pc-sampler} propose advanced unmasking strategies to improve performance in dLLMs. For an overview of these methods, see \cref{app:subsec:extended_related_work_additonal}. We evaluate their speed-quality trade-offs on \textsc{ParallelBench} and report results in \cref{app:additional_methods}.

\paragraph{Fine-tuning}
We fine-tuned LLaDA 1.5 on \textit{Waiting Line} to examine whether fine-tuning improves parallel decoding performance (\cref{fig:advanced_pretrained,fig:advanced_finetuned}).
One-by-one decoding achieves near 100\% accuracy for all tasks after fine-tuning, including \textit{Replace Index}, which improves from below 50\% to nearly 100\%.
Notably, \textit{Replace Index} maintains high accuracy even with parallel decoding, confirming that ideal models can accurately learn the target distribution under parallel generation when $\mathcal{C}(Y|X)=0$.
However, \textit{Replace Random} and \textit{Shuffle} still degrade with parallel decoding, supporting \cref{sec:theory} that even an ideal model cannot resolve parallel decoding limitations for $\mathcal{C}(Y|X)>0$.
Details and full results are in \cref{app:sec:finetuning}.


\paragraph{Chain-of-Thought Prompting (CoT)} \cref{fig:advanced_pretrained,fig:advanced_cot} show that CoT mitigates quality degradation under increased parallelism by generating intermediate reasoning steps that reduce inter-token dependencies in final answers. However, due to using $8\times$ more output tokens, CoT cannot serve as a fundamental solution for improving the speed-accuracy trade-off under parallel decoding. Details and full results are provided in \cref{app:sec:cot}.



\paragraph{Remasking Samplers}
Most dLLMs use masked diffusion, where tokens unmasked in previous timesteps cannot be refined later.
However, this prevents correcting inaccurate predictions from early timesteps, leading to quality degradation.
Recent training-free samplers, such as ReMDM~\citep{remdm} and RCR~\citep{rcr}, enable remasking for masked diffusion models. However, our tests on \textit{Waiting Line} show no improvements, revealing limitations of training-free approaches.
More recently, several works~\citep{zhang2025corrective,kim2025fine} demonstrate that finetuning dLLMs to support remasking can further improve performance.
Details are in \cref{app:sec:remasking}.

\paragraph{Discrete Diffusion with Uniform Transition Matrix}
Unlike masked diffusion, discrete diffusion with uniform transition matrix enables iterative refinement of all tokens throughout the denoising process, potentially correcting errors from conditional independence assumptions in parallel decoding.
We test this by fine-tuning masked and uniform diffusion models from SEDD~\citep{sedd}.
See \cref{app:secc:sedd} for details and full results.

\section{Conclusion}
This paper investigates the trade-offs of parallel decoding in dLLMs. We provide an information-theoretic analysis and conduct case studies using synthetic list operations, examining both data distribution and decoding strategy perspectives to reveal quantitative insights into the limitations of parallel decoding. Building on these findings, we propose \ours{}, a new benchmark demonstrating that limitations observed in synthetic scenarios similarly manifest in real-world applications.
Using our benchmark, we draw two key conclusions: (1) dLLMs with parallel decoding can suffer severe quality degradation in real-world scenarios, and (2) existing decoding strategies fail to adaptively adjust parallelism based on task difficulty for optimal speed-quality trade-offs.

\paragraph{Limitations and Future Works}
First, while our benchmark comprises 3 realistic categories and 17 tasks, broader coverage of real-world scenarios would be beneficial. Second, we primarily analyzed short output sequences to focus on fundamental characteristics, and tasks requiring longer sequences may yield different results.
While our adjustable-length tasks naturally support longer sequence evaluation and we provide a preliminary analysis in \cref{app:longer_output}, a more thorough study remains an important direction for future work.
Finally, leveraging \ours{} to develop novel unmasking methods that address current parallel decoding limitations is an important goal. 


\section*{Acknowledgment}

Kangwook Lee is supported by NSF Award DMS-2023239, NSF CAREER Award CCF-2339978, Amazon Research Award, and a grant from FuriosaAI.

\bibliography{iclr2026_conference}

@article{llada,
  title={Large language diffusion models},
  author={Nie, Shen and Zhu, Fengqi and You, Zebin and Zhang, Xiaolu and Ou, Jingyang and Hu, Jun and Zhou, Jun and Lin, Yankai and Wen, Ji-Rong and Li, Chongxuan},
  journal={arXiv preprint arXiv:2502.09992},
  year={2025}
}

@article{diffucoder,
  title={DiffuCoder: Understanding and Improving Masked Diffusion Models for Code Generation},
  author={Gong, Shansan and Zhang, Ruixiang and Zheng, Huangjie and Gu, Jiatao and Jaitly, Navdeep and Kong, Lingpeng and Zhang, Yizhe},
  journal={arXiv preprint arXiv:2506.20639},
  year={2025}
}

@article{llada1.5,
  title={LLaDA 1.5: Variance-Reduced Preference Optimization for Large Language Diffusion Models},
  author={Zhu, Fengqi and Wang, Rongzhen and Nie, Shen and Zhang, Xiaolu and Wu, Chunwei and Hu, Jun and Zhou, Jun and Chen, Jianfei and Lin, Yankai and Wen, Ji-Rong and others},
  journal={arXiv preprint arXiv:2505.19223},
  year={2025}
}

@article{mmada,
  title={Mmada: Multimodal large diffusion language models},
  author={Yang, Ling and Tian, Ye and Li, Bowen and Zhang, Xinchen and Shen, Ke and Tong, Yunhai and Wang, Mengdi},
  journal={arXiv preprint arXiv:2505.15809},
  year={2025}
}

@article{llada-v,
  title={Llada-v: Large language diffusion models with visual instruction tuning},
  author={You, Zebin and Nie, Shen and Zhang, Xiaolu and Hu, Jun and Zhou, Jun and Lu, Zhiwu and Wen, Ji-Rong and Li, Chongxuan},
  journal={arXiv preprint arXiv:2505.16933},
  year={2025}
}

@inproceedings{
d3pm,
title={Structured Denoising Diffusion Models in Discrete State-Spaces},
author={Jacob Austin and Daniel D. Johnson and Jonathan Ho and Daniel Tarlow and Rianne van den Berg},
booktitle={Advances in Neural Information Processing Systems},
editor={A. Beygelzimer and Y. Dauphin and P. Liang and J. Wortman Vaughan},
year={2021},
url={https://openreview.net/forum?id=h7-XixPCAL}
}

@inproceedings{
sedd,
title={Discrete Diffusion Modeling by Estimating the Ratios of the Data Distribution},
author={Aaron Lou and Chenlin Meng and Stefano Ermon},
booktitle={Forty-first International Conference on Machine Learning},
year={2024},
url={https://openreview.net/forum?id=CNicRIVIPA}
}

@inproceedings{
stringllm,
title={String{LLM}: Understanding the String Processing Capability of Large Language Models},
author={Xilong Wang and Hao Fu and Jindong Wang and Neil Zhenqiang Gong},
booktitle={The Thirteenth International Conference on Learning Representations},
year={2025},
url={https://openreview.net/forum?id=kTXChtaaNO}
}

@article{mercury,
  title={Mercury: Ultra-Fast Language Models Based on Diffusion},
  author={{Inception Labs} and Khanna, Samar and Kharbanda, Siddhant and Li, Shufan and Varma, Harshit and Wang, Eric and Birnbaum, Sawyer and Luo, Ziyang and Miraoui, Yanis and Palrecha, Akash and others},
  journal={arXiv preprint arXiv:2506.17298},
  year={2025}
}

@article{seed_diffusion,
  title={Seed Diffusion: A Large-Scale Diffusion Language Model with High-Speed Inference},
  author={Song, Yuxuan and Zhang, Zheng and Luo, Cheng and Gao, Pengyang and Xia, Fan and Luo, Hao and Li, Zheng and Yang, Yuehang and Yu, Hongli and Qu, Xingwei and others},
  journal={arXiv preprint arXiv:2508.02193},
  year={2025}
}

@article{d1,
  title={d1: Scaling reasoning in diffusion large language models via reinforcement learning},
  author={Zhao, Siyan and Gupta, Devaansh and Zheng, Qinqing and Grover, Aditya},
  journal={arXiv preprint arXiv:2504.12216},
  year={2025}
}

@article{fast-dllm,
  title={Fast-dllm: Training-free acceleration of diffusion llm by enabling kv cache and parallel decoding},
  author={Wu, Chengyue and Zhang, Hao and Xue, Shuchen and Liu, Zhijian and Diao, Shizhe and Zhu, Ligeng and Luo, Ping and Han, Song and Xie, Enze},
  journal={arXiv preprint arXiv:2505.22618},
  year={2025}
}

@article{llama3,
  title={The llama 3 herd of models},
  author={Grattafiori, Aaron and Dubey, Abhimanyu and Jauhri, Abhinav and Pandey, Abhinav and Kadian, Abhishek and Al-Dahle, Ahmad and Letman, Aiesha and Mathur, Akhil and Schelten, Alan and Vaughan, Alex and others},
  journal={arXiv preprint arXiv:2407.21783},
  year={2024}
}

@misc{dream,
    title = {Dream 7B},
    url = {https://hkunlp.github.io/blog/2025/dream},
    author = {Ye, Jiacheng and Xie, Zhihui and Zheng, Lin and Gao, Jiahui and Wu, Zirui and Jiang, Xin and Li, Zhenguo and Kong, Lingpeng},
    year = {2025}
}

@article{feng2025theoretical,
  title={Theoretical benefit and limitation of diffusion language model},
  author={Feng, Guhao and Geng, Yihan and Guan, Jian and Wu, Wei and Wang, Liwei and He, Di},
  journal={arXiv preprint arXiv:2502.09622},
  year={2025}
}

@inproceedings{
mdlm,
title={Simple and Effective Masked Diffusion Language Models},
author={Subham Sekhar Sahoo and Marianne Arriola and Aaron Gokaslan and Edgar Mariano Marroquin and Alexander M Rush and Yair Schiff and Justin T Chiu and Volodymyr Kuleshov},
booktitle={The Thirty-eighth Annual Conference on Neural Information Processing Systems},
year={2024},
url={https://openreview.net/forum?id=L4uaAR4ArM}
}

@inproceedings{
md4,
title={Simplified and Generalized Masked Diffusion for Discrete Data},
author={Jiaxin Shi and Kehang Han and Zhe Wang and Arnaud Doucet and Michalis Titsias},
booktitle={The Thirty-eighth Annual Conference on Neural Information Processing Systems},
year={2024},
url={https://openreview.net/forum?id=xcqSOfHt4g}
}

@inproceedings{
bd3-lm,
title={Block Diffusion: Interpolating Between Autoregressive and Diffusion Language Models},
author={Marianne Arriola and Subham Sekhar Sahoo and Aaron Gokaslan and Zhihan Yang and Zhixuan Qi and Jiaqi Han and Justin T Chiu and Volodymyr Kuleshov},
booktitle={The Thirteenth International Conference on Learning Representations},
year={2025},
url={https://openreview.net/forum?id=tyEyYT267x}
}

@article{gsm8k,
  title={Training verifiers to solve math word problems},
  author={Cobbe, Karl and Kosaraju, Vineet and Bavarian, Mohammad and Chen, Mark and Jun, Heewoo and Kaiser, Lukasz and Plappert, Matthias and Tworek, Jerry and Hilton, Jacob and Nakano, Reiichiro and others},
  journal={arXiv preprint arXiv:2110.14168},
  year={2021}
}

@article{qwen3,
  title={Qwen3 technical report},
  author={Yang, An and Li, Anfeng and Yang, Baosong and Zhang, Beichen and Hui, Binyuan and Zheng, Bo and Yu, Bowen and Gao, Chang and Huang, Chengen and Lv, Chenxu and others},
  journal={arXiv preprint arXiv:2505.09388},
  year={2025}
}

@inproceedings{
kim2025train,
title={Train for the Worst, Plan for the Best: Understanding Token Ordering in Masked Diffusions},
author={Jaeyeon Kim and Kulin Shah and Vasilis Kontonis and Sham M. Kakade and Sitan Chen},
booktitle={Forty-second International Conference on Machine Learning},
year={2025},
url={https://openreview.net/forum?id=DjJmre5IkP}
}

@article{qwen2.5,
  title={Qwen2.5 Technical Report},
  author={Yang, An and Yang, Baosong and Zhang, Beichen and Hui, Binyuan and Zheng, Bo and Yu, Bowen and Li, Chengyuan and Liu, Dayiheng and Huang, Fei and Wei, Haoran and others},
  journal={arXiv preprint arXiv:2412.15115},
  year={2024}
}

@inproceedings{maskgit,
  title={Maskgit: Masked generative image transformer},
  author={Chang, Huiwen and Zhang, Han and Jiang, Lu and Liu, Ce and Freeman, William T},
  booktitle={Proceedings of the IEEE/CVF conference on computer vision and pattern recognition},
  pages={11315--11325},
  year={2022}
}

@inproceedings{
math,
title={Measuring Mathematical Problem Solving With the {MATH} Dataset},
author={Dan Hendrycks and Collin Burns and Saurav Kadavath and Akul Arora and Steven Basart and Eric Tang and Dawn Song and Jacob Steinhardt},
booktitle={Thirty-fifth Conference on Neural Information Processing Systems Datasets and Benchmarks Track (Round 2)},
year={2021},
url={https://openreview.net/forum?id=7Bywt2mQsCe}
}

@article{humaneval,
  title={Evaluating large language models trained on code},
  author={Chen, Mark and Tworek, Jerry and Jun, Heewoo and Yuan, Qiming and Pinto, Henrique Ponde De Oliveira and Kaplan, Jared and Edwards, Harri and Burda, Yuri and Joseph, Nicholas and Brockman, Greg and others},
  journal={arXiv preprint arXiv:2107.03374},
  year={2021}
}

@article{gpt3,
  title={Language models are few-shot learners},
  author={Brown, Tom and Mann, Benjamin and Ryder, Nick and Subbiah, Melanie and Kaplan, Jared D and Dhariwal, Prafulla and Neelakantan, Arvind and Shyam, Pranav and Sastry, Girish and Askell, Amanda and others},
  journal={Advances in Neural Information Processing Systems},
  volume={33},
  pages={1877--1901},
  year={2020}
}

@article{llama,
  title={LLaMA: Open and Efficient Foundation Language Models},
  author={Touvron, Hugo and Lavril, Thibaut and Izacard, Gautier and Martinet, Xavier and Lachaux, Marie-Anne and Lacroix, Timoth{\'e}e and Rozi{\`e}re, Baptiste and Goyal, Naman and Hambro, Eric and Azhar, Faisal and others},
  journal={arXiv preprint arXiv:2302.13971},
  year={2023}
}

@article{o1,
  title={OpenAI o1 system card},
  author={Jaech, Aaron and Kalai, Adam and Lerer, Adam and Richardson, Adam and El-Kishky, Ahmed and Low, Aiden and Helyar, Alec and Madry, Aleksander and Beutel, Alex and Carney, Alex and others},
  journal={arXiv preprint arXiv:2412.16720},
  year={2024}
}

@article{deepseek-r1,
  title={Deepseek-R1: Incentivizing reasoning capability in LLMs via reinforcement learning},
  author={Guo, Daya and Yang, Dejian and Zhang, Haowei and Song, Junxiao and Zhang, Ruoyu and Xu, Runxin and Zhu, Qihao and Ma, Shirong and Wang, Peiyi and Bi, Xiao and others},
  journal={arXiv preprint arXiv:2501.12948},
  year={2025}
}

@article{code,
  title={A survey on large language models for code generation},
  author={Jiang, Juyong and Wang, Fan and Shen, Jiasi and Kim, Sungju and Kim, Sunghun},
  journal={arXiv preprint arXiv:2406.00515},
  year={2024}
}

@article{eb-sampler,
  title={Accelerated Sampling from Masked Diffusion Models via Entropy Bounded Unmasking},
  author={Ben-Hamu, Heli and Gat, Itai and Severo, Daniel and Nolte, Niklas and Karrer, Brian},
  journal={arXiv preprint arXiv:2505.24857},
  year={2025}
}

@inproceedings{
ye2025beyond,
title={Beyond Autoregression: Discrete Diffusion for Complex Reasoning and Planning},
author={Jiacheng Ye and Jiahui Gao and Shansan Gong and Lin Zheng and Xin Jiang and Zhenguo Li and Lingpeng Kong},
booktitle={The Thirteenth International Conference on Learning Representations},
year={2025},
url={https://openreview.net/forum?id=NRYgUzSPZz}
}

@inproceedings{huang2022learning,
  title={On the learning of non-autoregressive transformers},
  author={Huang, Fei and Tao, Tianhua and Zhou, Hao and Li, Lei and Huang, Minlie},
  booktitle={International conference on machine learning},
  pages={9356--9376},
  year={2022},
  organization={PMLR}
}

@article{rcr,
  title={MDPO: Overcoming the Training-Inference Divide of Masked Diffusion Language Models},
  author={He, Haoyu and Renz, Katrin and Cao, Yong and Geiger, Andreas},
  journal={arXiv preprint arXiv:2508.13148},
  year={2025}
}

@article{remdm,
  title={Remasking discrete diffusion models with inference-time scaling},
  author={Wang, Guanghan and Schiff, Yair and Sahoo, Subham Sekhar and Kuleshov, Volodymyr},
  journal={arXiv preprint arXiv:2503.00307},
  year={2025}
}

@misc{grammar,
  author       = {Jack Morris},
  title        = {language-tool-python},
  howpublished = {\url{https://pypi.org/project/language-tool-python/}},
  year         = {2025},
  note         = {Version 2.9.4}
}

@book{latin,
  title={Latin Squares and Their Applications: Latin Squares and Their Applications},
  author={Keedwell, A Donald and D{\'e}nes, J{\'o}zsef},
  year={2015},
  publisher={Elsevier}
}

@article{sudoku,
  title={Mathematics of sudoku I},
  author={Felgenhauer, Bertram and Jarvis, Frazer},
  journal={Mathematical Spectrum},
  volume={39},
  number={1},
  pages={15--22},
  year={2006},
  publisher={[Oxford, Eng.] Oxford University Press.}
}

@article{samsum,
  title={{SAMS}um corpus: A Human-annotated Dialogue Dataset for Abstractive Summarization},
  author={Gliwa, Bogdan and Mochol, Iwona and Biesek, Maciej and Wawer, Aleksander},
  journal={EMNLP-IJCNLP 2019},
  pages={70},
  year={2019}
}

@misc{haiku,
  title = {{Introducing computer use, a new Claude 3.5 Sonnet, and Claude 3.5 Haiku}},
  author = {{Anthropic}},
  url = {https://www.anthropic.com/news/3-5-models-and-computer-use},
  year={2024}
}

@inproceedings{chatgpt_paraphrases_dataset,
  author={Vorobev, Vladimir and Kuznetsov, Maxim},
  title={ChatGPT paraphrases dataset},
  year={2023}
}

@inproceedings{rouge,
    title = "{ROUGE}: A Package for Automatic Evaluation of Summaries",
    author = "Lin, Chin-Yew",
    booktitle = "Text Summarization Branches Out",
    month = jul,
    year = "2004",
    address = "Barcelona, Spain",
    publisher = "Association for Computational Linguistics",
    url = "https://www.aclweb.org/anthology/W04-1013",
    pages = "74--81",
}

@article{apd,
  title={Accelerating Diffusion LLMs via Adaptive Parallel Decoding},
  author={Israel, Daniel and Broeck, Guy Van den and Grover, Aditya},
  journal={arXiv preprint arXiv:2506.00413},
  year={2025}
}

@misc{gemini2.5,
  title={Gemini 2.5: Our most intelligent AI model},
  author={{Google Deepmind}},
  url={https://blog.google/technology/google-deepmind/gemini-model-thinking-updates-march-2025/},
  year={2025}
}

@inproceedings{bleu,
  title={Bleu: a method for automatic evaluation of machine translation},
  author={Papineni, Kishore and Roukos, Salim and Ward, Todd and Zhu, Wei-Jing},
  booktitle={Proceedings of the 40th annual meeting of the Association for Computational Linguistics},
  pages={311--318},
  year={2002}
}

@article{bertscore,
  title={Bertscore: Evaluating text generation with bert},
  author={Zhang, Tianyi and Kishore, Varsha and Wu, Felix and Weinberger, Kilian Q and Artzi, Yoav},
  journal={arXiv preprint arXiv:1904.09675},
  year={2019}
}

@misc{Gokaslan2019OpenWeb,  
	title={OpenWebText Corpus},
	author={Aaron Gokaslan and Vanya Cohen},
	howpublished={\url{http://Skylion007.github.io/OpenWebTextCorpus}}, 
	year={2019}
}

@inproceedings{
hu2022lora,
title={Lo{RA}: Low-Rank Adaptation of Large Language Models},
author={Edward J Hu and yelong shen and Phillip Wallis and Zeyuan Allen-Zhu and Yuanzhi Li and Shean Wang and Lu Wang and Weizhu Chen},
booktitle={International Conference on Learning Representations},
year={2022},
url={https://openreview.net/forum?id=nZeVKeeFYf9}
}

@article{pc-sampler,
  title={Pc-sampler: Position-aware calibration of decoding bias in masked diffusion models},
  author={Huang, Pengcheng and Liu, Shuhao and Liu, Zhenghao and Yan, Yukun and Wang, Shuo and Chen, Zulong and Xiao, Tong},
  journal={arXiv preprint arXiv:2508.13021},
  year={2025}
}

@article{slowfast-sampler,
  title={Accelerating Diffusion Large Language Models with SlowFast: The Three Golden Principles},
  author={Wei, Qingyan and Zhang, Yaojie and Liu, Zhiyuan and Liu, Dongrui and Zhang, Linfeng},
  journal={arXiv preprint arXiv:2506.10848},
  year={2025}
}

@article{dus,
  title={Plan for Speed--Dilated Scheduling for Masked Diffusion Language Models},
  author={Luxembourg, Omer and Permuter, Haim and Nachmani, Eliya},
  journal={arXiv preprint arXiv:2506.19037},
  year={2025}
}

@article{li2022diffusion,
  title={Diffusion-lm improves controllable text generation},
  author={Li, Xiang and Thickstun, John and Gulrajani, Ishaan and Liang, Percy S and Hashimoto, Tatsunori B},
  journal={Advances in neural information processing systems},
  volume={35},
  pages={4328--4343},
  year={2022}
}

@misc{gpt4.1,
  title = {{Introducing GPT-4.1 in the API}},
  author = {{OpenAI}},
  url = {https://openai.com/index/gpt-4-1},
  year={2025}
}

@article{wino,
  title={Wide-in, narrow-out: Revokable decoding for efficient and effective dllms},
  author={Hong, Feng and Yu, Geng and Ye, Yushi and Huang, Haicheng and Zheng, Huangjie and Zhang, Ya and Wang, Yanfeng and Yao, Jiangchao},
  journal={arXiv preprint arXiv:2507.18578},
  year={2025}
}

@inproceedings{medusa,
title={Medusa: Simple {LLM} Inference Acceleration Framework with Multiple Decoding Heads},
author={Tianle Cai and Yuhong Li and Zhengyang Geng and Hongwu Peng and Jason D. Lee and Deming Chen and Tri Dao},
booktitle={Forty-first International Conference on Machine Learning},
year={2024},
url={https://openreview.net/forum?id=PEpbUobfJv}
}

@article{ifeval,
  title={Instruction-following evaluation for large language models},
  author={Zhou, Jeffrey and Lu, Tianjian and Mishra, Swaroop and Brahma, Siddhartha and Basu, Sujoy and Luan, Yi and Zhou, Denny and Hou, Le},
  journal={arXiv preprint arXiv:2311.07911},
  year={2023}
}

@misc{vicuna,
    title = {Vicuna: An Open-Source Chatbot Impressing GPT-4 with 90\%* ChatGPT Quality},
    url = {https://lmsys.org/blog/2023-03-30-vicuna/},
    author = {Chiang, Wei-Lin and Li, Zhuohan and Lin, Zi and Sheng, Ying and Wu, Zhanghao and Zhang, Hao and Zheng, Lianmin and Zhuang, Siyuan and Zhuang, Yonghao and Gonzalez, Joseph E. and Stoica, Ion and Xing, Eric P.},
    month = {March},
    year = {2023}
}

@article{kim2025cdlm,
  title={CDLM: Consistency Diffusion Language Models For Faster Sampling},
  author={Kim, Minseo and Xu, Chenfeng and Hooper, Coleman and Singh, Harman and Athiwaratkun, Ben and Zhang, Ce and Keutzer, Kurt and Gholami, Amir},
  journal={arXiv preprint arXiv:2511.19269},
  year={2025}
}

@article{kim2025beyond,
  title={Beyond Next-Token Prediction: A Performance Characterization of Diffusion versus Autoregressive Language Models},
  author={Kim, Minseo and Hooper, Coleman and Tomar, Aditya and Xu, Chenfeng and Farajtabar, Mehrdad and Mahoney, Michael W and Keutzer, Kurt and Gholami, Amir},
  journal={arXiv preprint arXiv:2510.04146},
  year={2025}
}

@article{zhang2025corrective,
  title={Corrective Diffusion Language Models},
  author={Zhang, Shuibai and Peng, Fred Zhangzhi and Zhang, Yiheng and Pan, Jin and Chrysos, Grigorios G},
  journal={arXiv preprint arXiv:2512.15596},
  year={2025}
}

@article{kim2025fine,
  title={Fine-tuning masked diffusion for provable self-correction},
  author={Kim, Jaeyeon and Kim, Seunggeun and Lee, Taekyun and Pan, David Z and Kim, Hyeji and Kakade, Sham and Chen, Sitan},
  journal={arXiv preprint arXiv:2510.01384},
  year={2025}
}

@article{wu2025free,
  title={Free Draft-and-Verification: Toward Lossless Parallel Decoding for Diffusion Large Language Models},
  author={Wu, Shutong and Zhang, Jiawei},
  journal={arXiv preprint arXiv:2510.00294},
  year={2025}
}
\bibliographystyle{iclr2026_conference}

\appendix

\clearpage
{\LARGE \textbf{Appendix}} \par 
\startcontents[sections]
\printcontents[sections]{ }{1}{}

\clearpage

\section{Mathematical Proofs}
\label{app:sec:proofs}
\subsection{Proof of \cref{theorem1}}

\begin{proof}
The $T$-step parallel decoding model is defined by the factorization:
\begin{equation}
P_{\theta}(Y|X) = \prod_{i=1}^{T} P_{\theta}(S_i|X, S_{<i})
\end{equation}
Similarly, we can decompose the true data distribution according to the partition $\{S_i\}_{i=1}^T$:
\begin{equation}
P_{\text{data}}(Y|X) = \prod_{i=1}^{T} P_{\text{data}}(S_i|X, S_{<i})
\end{equation}
Substituting these decompositions into the KL divergence formula, we get:
\begin{align}
&\kldiv{P_{\text{data}}(Y|X)}{P_{\theta}(Y|X)} \nonumber \\
&= \mathbb{E}_{P_{\text{data}}} \left[ \log \frac{P_{\text{data}}(Y|X)}{P_{\theta}(Y|X)} \right] \\
&= \mathbb{E}_{P_{\text{data}}} \left[ \log \frac{\prod_{i=1}^{T} P_{\text{data}}(S_i|X, S_{<i})}{\prod_{i=1}^{T} P_{\theta}(S_i|X, S_{<i})} \right] \\
&= \sum_{i=1}^{T} \mathbb{E}_{P_{\text{data}}} \left[ \log \frac{P_{\text{data}}(S_i|X, S_{<i})}{P_{\theta}(S_i|X, S_{<i})} \right] \label{eq:proof_intermediate_step} \\
&= \sum_{i=1}^{T} \mathbb{E}_{S_{<i} \sim P_{\text{data}}} \left[ \kldiv{P_{\text{data}}(S_i|X, S_{<i})}{P_{\theta}(S_i|X, S_{<i})} \right]
\end{align}
At each step $i$, the model assumes conditional independence among the tokens in $S_i$ given the context $(X, S_{<i})$, such that $P_{\theta}(S_i|X, S_{<i}) = \prod_{y_j \in S_i} P_{\theta}(y_j|X, S_{<i})$. Therefore, by applying \cref{eq:one-step} from \cref{sec:prelim}, we have:
\begin{equation}
\kldiv{P_{\text{data}}(S_i|X, S_{<i})}{P_{\theta}(S_i|X, S_{<i})} \ge \mathcal{C}(S_i|X, S_{<i})
\end{equation}
By substituting this lower bound back into our sum, we have:
\begin{align}
\min\nolimits_{\theta} &\kldiv{P_{\text{data}}(Y|X)}{P_{\theta}(Y|X)} \nonumber \\
&= \min\nolimits_{\theta} \sum_{i=1}^{T} \mathbb{E}_{S_{<i} \sim P_{\text{data}}} \left[ \kldiv{P_{\text{data}}(S_i|X, S_{<i})}{P_{\theta}(S_i|X, S_{<i})} \right] \\
&\ge \sum_{i=1}^{T} \mathbb{E}_{S_{<i} \sim P_{\text{data}}} \left[ \min\nolimits_{\theta_i} \kldiv{P_{\text{data}}(S_i|X, S_{<i})}{P_{\theta}(S_i|X, S_{<i})} \right] \\
&\ge \sum_{i=1}^{T} \mathbb{E}_{S_{<i} \sim P_{\text{data}}}[\mathcal{C}(S_i|X, S_{<i})]
\end{align}
This completes the proof.
\end{proof}

\subsection{Proof of \cref{theorem2}}

\begin{lemma}[Subadditivity of Conditional Total Correlation]
\label{lemma:subadditivity}
Let a set of random variables $S$ be partitioned into two disjoint subsets $A$ and $B$ (i.e., $S = A \cup B$). For any conditioning context $Z$, the conditional total correlation of $S$ is lower-bounded as follows:
\begin{equation}
\mathcal{C}(S|Z) \ge \mathcal{C}(A|Z) + \mathbb{E}_{A \sim P_{\text{data}}(A|Z)}[\mathcal{C}(B|Z, A)]
\end{equation}
\end{lemma}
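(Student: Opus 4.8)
The plan is to work directly from the definition of conditional total correlation given in \cref{sec:prelim}, namely $\mathcal{C}(S|Z) = -H(S|Z) + \sum_{s \in S} H(s|Z)$ where every entropy is taken under $P_{\text{data}}$, and to reduce the claimed inequality to the non-negativity of conditional mutual information. The whole argument is essentially entropy bookkeeping followed by one application of ``conditioning reduces entropy.''

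First I would split the sum of marginal entropies across the partition $S = A \cup B$ and apply the entropy chain rule to the joint term, giving $\sum_{s \in S} H(s|Z) = \sum_{s \in A} H(s|Z) + \sum_{s \in B} H(s|Z)$ and $H(S|Z) = H(A|Z) + H(B|Z, A)$. Substituting both into the definition yields
\begin{equation}
\mathcal{C}(S|Z) = -H(A|Z) - H(B|Z,A) + \textstyle\sum_{s\in A} H(s|Z) + \sum_{s \in B} H(s|Z).
\end{equation}

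Next I would expand the target right-hand side. By definition $\mathcal{C}(A|Z) = -H(A|Z) + \sum_{s\in A} H(s|Z)$, while the averaged term satisfies $\mathbb{E}_{A \sim P_{\text{data}}(A|Z)}[\mathcal{C}(B|Z,A)] = -H(B|Z,A) + \sum_{s\in B} H(s|Z,A)$, where $H(B|Z,A)$ and $H(s|Z,A)$ denote the standard conditional entropies that already incorporate the average over $A$. Subtracting these from the displayed expression for $\mathcal{C}(S|Z)$ causes every joint entropy and every $A$-marginal entropy to cancel, leaving only the $B$-token terms:
\begin{equation}
\mathcal{C}(S|Z) - \mathcal{C}(A|Z) - \mathbb{E}_{A}[\mathcal{C}(B|Z,A)] = \textstyle\sum_{s\in B}\bigl(H(s|Z) - H(s|Z,A)\bigr).
\end{equation}

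Finally I would identify each summand as a conditional mutual information, $H(s|Z) - H(s|Z,A) = I(s; A \mid Z) \ge 0$, so the residual is a sum of non-negative terms and the inequality follows, with equality exactly when each token of $B$ is conditionally independent of $A$ given $Z$. There is no serious obstacle here: the only points requiring care are (i) interpreting the expectation over $A$ as the ordinary conditioning that turns $\mathcal{C}(B|Z,A)$ into a difference of standard conditional entropies, so that the averages line up correctly, and (ii) the cancellation bookkeeping that leaves precisely the marginal entropies of the $B$-tokens. Once those are handled, non-negativity of conditional mutual information closes the argument.
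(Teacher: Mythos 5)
Your proof is correct and takes essentially the same route as the paper's: both expand the definitions via the entropy chain rule and reduce the claim to $H(s|Z) \ge H(s|Z,A)$ for each token of $B$, which is exactly the non-negativity of conditional mutual information you invoke. The only cosmetic difference is that you compute the exact residual $\sum_{s \in B} I(s;A\mid Z)$ (which also yields the equality condition), whereas the paper phrases the same fact as a one-step inequality chain.
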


\begin{proof}[Proof of \cref{lemma:subadditivity}]
Using the definition of conditional total correlation ($ \mathcal{C}(S|Z) = \sum_{y \in S} H(y|Z) - H(S|Z) $), we have:
\begin{align}
& \mathcal{C}(A|Z) + \mathbb{E}_{A}[\mathcal{C}(B|Z, A)] \nonumber \\
&= \sum_{y_a \in A} H(y_a|Z) - H(A|Z) + \sum_{y_b \in B} H(y_b|Z, A) - H(B|Z, A) \\
&\le \sum_{y_a \in A} H(y_a|Z) - H(A|Z) + \sum_{y_b \in B} H(y_b|Z) - H(B|Z, A) \quad (\text{since } H(y_b|Z) \ge H(y_b|Z, A)) \\
&= \left( \sum_{y_a \in A} H(y_a|Z) + \sum_{y_b \in B} H(y_b|Z) \right) - (H(A|Z) + H(B|Z, A)) \\
&= \sum_{y \in S} H(y|Z) - H(A, B|Z) \quad \text{(by the chain rule of entropy)} \\
&= \mathcal{C}(S|Z)
\end{align}
This completes the proof.
\end{proof}

\begin{proof}[Proof of \cref{theorem2}]
Let $\{S_i^*\}_{i=1}^T$ be an optimal $T$-step partition that achieves the minimum error bound $\mathcal{L}^*_T$. We construct a $(T+1)$-step partition $\{S'_j\}_{j=1}^{T+1}$ by splitting the last set of the optimal partition, $S_T^*$, into two non-empty, disjoint subsets $A$ and $B$ (i.e., $S_T^* = A \cup B$). The new partition is $\{S_1^*, \dots, S_{T-1}^*, A, B\}$, and we have:
\begin{align}
\mathcal{L}^*_T &= \mathcal{L}_T(\{S_i^*\}_{i=1}^T)\\
&= \sum_{i=1}^{T-1} \mathbb{E}[\mathcal{C}(S_i^*|X, S_{<i}^*)] + \mathbb{E}[\mathcal{C}(S_T^*|X, S_{<T}^*)] \\
&\ge \sum_{i=1}^{T-1} \mathbb{E}[\mathcal{C}(S_i^*|X, S_{<i}^*)] + \mathbb{E}[\mathcal{C}(A|X, S_{<T}^*)] + \mathbb{E}[\mathcal{C}(B|X, S_{<T}^*, A)] \quad \text{(by \cref{lemma:subadditivity})} \\
&= \mathcal{L}_{T+1}(\{S'_j\}_{j=1}^{T+1}) \quad \text{(by \cref{theorem1})} \\
&\ge \min\nolimits_{\{S_k\}_{k=1}^{T+1}} \mathcal{L}_{T+1}(\{S_k\}_{k=1}^{T+1}) \\
&= \mathcal{L}^*_{T+1}
\end{align}
Thus, we have $\mathcal{L}^*_T \ge \mathcal{L}^*_{T+1}$.
\end{proof}

\subsection{Extended Version of \cref{theorem2}}
\label{app:subsec:theorem3}

\begin{theorem}[Monotonicity of Error Bounds for Input-Dependent Partitions]
\label{theorem2_revised}
For any given input $X$, let $\mathcal{L}_T(\{S_i\}_{i=1}^T|X)$ be the error bound for a specific T-step partition $\{S_i\}_{i=1}^T$: 
\begin{equation}
\mathcal{L}_T(\{S_i\}_{i=1}^T|X) := \textstyle\sum\nolimits_{i=1}^{T} \mathbb{E}_{S_{<i} \sim P_{\text{data}}(\cdot|X)}[\mathcal{C}(S_i|X, S_{<i})]
\end{equation}
The optimal T-step error bound, $\mathcal{L}^*_T$, is the expected value of the minimum error bound over all possible partitions for each input: 
$$ \mathcal{L}^*_T := \mathbb{E}_{X \sim P_{\text{data}}} \left[ \min\nolimits_{\{S_i\}_{i=1}^T} \mathcal{L}_T(\{S_i\}_{i=1}^T|X) \right] $$
Then $\mathcal{L}^*_T$ is monotonically decreasing with respect to the number of generation steps:
\begin{equation}
\mathcal{L}^*_T \ge \mathcal{L}^*_{T+1}
\end{equation}
\end{theorem}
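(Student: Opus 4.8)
The plan is to establish the inequality $\mathcal{L}^*_T \ge \mathcal{L}^*_{T+1}$ pointwise in the input $X$ and then integrate. Concretely, for each fixed $X$ I would introduce the per-input optimal bound $\mathcal{L}^*_T(X) := \min_{\{S_i\}_{i=1}^T} \mathcal{L}_T(\{S_i\}_{i=1}^T \mid X)$, so that by definition $\mathcal{L}^*_T = \mathbb{E}_{X \sim P_{\text{data}}}[\mathcal{L}^*_T(X)]$. Since expectation is monotone, it then suffices to prove $\mathcal{L}^*_T(X) \ge \mathcal{L}^*_{T+1}(X)$ for every $X$; the theorem follows immediately by taking $\mathbb{E}_{X}$ of both sides.

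For the pointwise step I would replay the argument from the proof of \cref{theorem2} with $X$ held fixed. Let $\{S_i^*\}_{i=1}^T$ be a minimizing $T$-step partition for this particular $X$, so that it attains $\mathcal{L}^*_T(X)$. Assuming $T < |Y|$, at least one block has size $\ge 2$; splitting such a block (for concreteness the last one, $S_T^*$) into two non-empty disjoint subsets $A$ and $B$ produces a candidate $(T+1)$-step partition $\{S_1^*,\dots,S_{T-1}^*,A,B\}$. The key ingredient is \cref{lemma:subadditivity}: applied with conditioning context $Z = (X, S_{<T}^*)$ and then averaged over $S_{<T}^* \sim P_{\text{data}}(\cdot\mid X)$, it yields $\mathbb{E}[\mathcal{C}(S_T^* \mid X, S_{<T}^*)] \ge \mathbb{E}[\mathcal{C}(A \mid X, S_{<T}^*)] + \mathbb{E}[\mathcal{C}(B \mid X, S_{<T}^*, A)]$. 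Substituting this into the definition of $\mathcal{L}_T(\{S_i^*\}_{i=1}^T \mid X)$ shows the constructed $(T+1)$-partition has per-input bound no larger than $\mathcal{L}^*_T(X)$, and since $\mathcal{L}^*_{T+1}(X)$ is the minimum over all $(T+1)$-partitions for this $X$, we conclude $\mathcal{L}^*_T(X) \ge \mathcal{L}^*_{T+1}(X)$.

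The one point that deserves care — and the reason \cref{theorem2} cannot simply be invoked as a black box — is that the minimization over partitions now sits \emph{inside} the expectation over $X$, so the minimizing partition is permitted to vary with $X$. Fortunately, \cref{lemma:subadditivity} is stated for an arbitrary fixed conditioning context, so the splitting bound holds at each $X$ separately; because I establish the inequality before integrating, no interchange of $\min$ and $\mathbb{E}_X$ is ever required, and the input-dependence of the optimal partition is harmless. I do not anticipate a genuine obstacle: the only real bookkeeping is confirming that the averaging over $S_{<T}^* \sim P_{\text{data}}(\cdot\mid X)$ commutes with the pointwise subadditivity bound, which it does because the lemma already controls the integrand at each realization of the context, so linearity and monotonicity of the conditional expectation close the argument.
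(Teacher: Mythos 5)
Your proof is correct and follows the same two-step skeleton as the paper's: first establish the pointwise inequality $\min_{\{S_i\}} \mathcal{L}_T(\{S_i\}_{i=1}^T|x) \ge \min_{\{S'_j\}} \mathcal{L}_{T+1}(\{S'_j\}_{j=1}^{T+1}|x)$ for each fixed input $x$, then take $\mathbb{E}_X$ and invoke monotonicity of expectation. The only difference lies in how the pointwise step is justified. You replay the splitting argument of \cref{theorem2} at fixed $x$: take an optimal $T$-step partition, split its last block into $A$ and $B$, and apply \cref{lemma:subadditivity} with context $Z=(x,S^*_{<T})$ before averaging over $S^*_{<T}\sim P_{\text{data}}(\cdot|x)$. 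The paper instead obtains the same inequality by a shortcut: it applies \cref{theorem2} \emph{as a black box} to the degenerate data distribution concentrated at the single point $x$, for which the input-independent optimum coincides with the per-input optimum. This means your one methodological remark is slightly off: you claim \cref{theorem2} ``cannot simply be invoked as a black box'' because the minimizing partition varies with $X$, but the Dirac-measure trick shows it can be, since for a point mass there is only one input and the problematic interchange of $\min$ and $\mathbb{E}_X$ never arises. Both routes are sound; yours is more self-contained (it never needs the point-mass construction and makes the role of the subadditivity lemma explicit), while the paper's is shorter. Your caveat that some block must have size at least $2$ (i.e., $T < |Y|$, so that a $(T+1)$-step partition exists at all) is a real, if minor, hypothesis that the paper leaves implicit.
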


\begin{proof}[Proof of \cref{theorem2_revised}]
We begin by establishing a pointwise inequality for any single, arbitrary input $x$. Let us consider a special data distribution concentrated entirely on this single input $x$. 

Applying the conclusion of \cref{theorem2} to this special case directly establishes the following relationship. The optimal T-step error for this distribution (which is simply the optimal error for the input $x$) is greater than or equal to the optimal (T+1)-step error:
\begin{equation}
\label{eq:pointwise_final}
\min\nolimits_{\{S_i\}_{i=1}^T} \mathcal{L}_T(\{S_i\}_{i=1}^T|x) \ge \min\nolimits_{\{S'_j\}_{j=1}^{T+1}} \mathcal{L}_{T+1}(\{S'_j\}_{j=1}^{T+1}|x)
\end{equation}
Taking the expectation of this pointwise inequality \cref{eq:pointwise_final} over the true data distribution $X \sim P_{\text{data}}$ directly yields the final result:
$$ \mathcal{L}^*_T := \mathbb{E}_{X} \left[ \min\nolimits_{\{S_i\}_{i=1}^T} \mathcal{L}_T(\{S_i\}_{i=1}^T|X) \right] \ge \mathbb{E}_{X} \left[ \min\nolimits_{\{S'_j\}_{j=1}^{T+1}} \mathcal{L}_{T+1}(\{S'_j\}_{j=1}^{T+1}|X) \right] =: \mathcal{L}^*_{T+1} $$
Thus, we have $\mathcal{L}^*_T \ge \mathcal{L}^*_{T+1}$.
\end{proof}

\section{Further Discussion on Related Work}
\label{app:sec:extended_related_work}

\subsection{Additional Unmasking Methods}
\label{app:subsec:extended_related_work_additonal}

Recent work on dLLMs has focused on optimizing the unmasking process to improve inference efficiency and generation quality. Besides the unmasking methods studied in this work, several additional methods~\citep{eb-sampler,dus,slowfast-sampler,pc-sampler,wu2025free,apd} target acceleration through parallel decoding. For instance, the EB-Sampler~\citep{eb-sampler} adaptively unmasks the largest group of tokens whose cumulative entropy falls below a threshold, $\gamma$, achieving a 2--3$\times$ speedup. Similarly, the Dilated Unmasking Scheduler (DUS) \citep{dus} uses a deterministic, coarse-to-fine schedule to unmask non-adjacent tokens, reducing the number of denoiser calls within a block of size $B$ from $\mathcal{O}(B)$ to $\mathcal{O}(\log B)$. The SlowFast Sampling \citep{slowfast-sampler} alternates between slow, exploratory phases and fast, parallel decoding phases, yielding up to a 15.6$\times$ speedup.
Other works focus on generation quality.
The PC-Sampler \citep{pc-sampler}, for example, corrects for biases in uncertainty-based sampling by using a composite score that combines a position-aware weight with a frequency-calibrated confidence score, improving performance by over 10\%.

\subsection{Related Benchmarks}

To contextualize our work, we review several key benchmarks commonly used to evaluate the capabilities of LLMs across different domains.

\paragraph{GSM8K} The Grade School Math 8K (GSM8K) dataset~\citep{gsm8k} is a prominent benchmark designed to assess the multi-step mathematical reasoning abilities of LLMs. It consists of thousands of grade-school-level word problems that require a sequence of logical steps and arithmetic operations to solve. The primary evaluation metric is final answer accuracy, where the model's generated solution must exactly match the correct numerical result. Success on GSM8K is considered a strong indicator of a model's capacity for complex, chain-of-thought reasoning.

\paragraph{HumanEval} The HumanEval benchmark~\citep{humaneval} is a standard for evaluating the code generation capabilities of models. The task involves completing Python function bodies based on a given function signature and a descriptive docstring. The dataset contains 164 such programming challenges of varying difficulty. Performance is measured by functional correctness, typically using the pass@$k$~\citep{humaneval} metric. 

\paragraph{SAMSum} The SAMSum dataset~\citep{samsum} is tailored for the task of abstractive summarization of natural, real-world dialogues. It contains short, informal conversations, requiring a model to produce a concise summary that captures the main points of the exchange. The standard evaluation protocol relies on the ROUGE score~\citep{rouge}, which measures the n-gram overlap between the model-generated summary and one or more human-written reference summaries. While ROUGE is effective for assessing content overlap, our work diverges by focusing on the grammatical correctness and fluency of the generated summaries, aspects of quality not fully captured by lexical metrics.

\paragraph{ChatGPT-Paraphrases} The ChatGPT-Paraphrases dataset~\citep{chatgpt_paraphrases_dataset} is designed for evaluating a model's ability to rephrase sentences while preserving their original semantic meaning. The dataset provides pairs of original sentences and their corresponding paraphrases. This benchmark is crucial for assessing a model's fine-grained understanding of language and its capacity for fluent and diverse text generation, which are core skills for sophisticated natural language processing applications. While standard evaluations for this task often rely on semantic similarity metrics, our approach is different. Similar to SAMSum, we are primarily interested in the grammatical evaluation of the output. 


\subsection{Discrete Diffusion Strategies}
\label{app:sec:absorb_vs_uniform}




In discrete diffusion, a common strategy is the absorbing state approach, also known as masked diffusion. This method corrupts data by progressively replacing original tokens with a single, special \texttt{[MASK]} token. The key advantage of this technique lies in its reverse process. The model's task is simplified to ``filling in the blanks" at known \texttt{[MASK]} locations, which provides a very direct and clear learning signal. This is the primary approach used by many prominent open-source models, such as LLaDA~\citep{llada} and Dream~\citep{dream}.

In contrast, the uniform transition approach corrupts data differently. Instead of using a special token, it substitutes an existing token with any other token from the vocabulary, with each possibility having an equal probability of occurrence. This method gradually transforms the input into a sequence of purely random tokens. Consequently, the reverse process is more complex: the model must predict the original token from a noisy, corrupted one, rather than from a designated blank slate. Some recent models, including SEDD~\citep{sedd}, have explored this uniform approach.
\section{Benchmark Dataset Specifications and Evaluation Details}
\label{app:bench_data_details}


\begin{table}[htbp]

\newcolumntype{L}[1]{>{\raggedright\arraybackslash}p{#1}}
  \centering
  \caption{Benchmark tasks and evaluation metrics of \ours{}.}
  \label{app:tab:benchmark}
  {\small
  \begin{tabular}{L{2.0cm}L{4cm}rl}
    \toprule
    \textbf{Category} & \textbf{Task} & \textbf{\#Samples} & \textbf{Metric} \\
    \midrule
    \multirow{10}{*}{\makecell[l]{Waiting Line \\ (one-shot)}} & Copy & 100 & Accuracy \\
     & Sort & 100 & Accuracy \\
     & Reverse & 100 & Accuracy \\
     & Shuffle & 100 & Accuracy \\
     & Replace Index & 100 & Accuracy \\
     & Replace Random & 100 & Accuracy \\
     & Insert Index & 100 & Accuracy \\
     & Insert Random & 100 & Accuracy \\
     & Remove Index & 100 & Accuracy \\
     & Remove Random & 100 & Accuracy \\
    \midrule
    \multirow{5}{*}{\makecell[l]{Text Writing \\ (zero-shot)}} & Summarization & 100 & Grammar, Rouge-L \\
     & Paraphrasing & 100 & Grammar, BERTScore, (1 - BLEU) \\
     & Words-to-Sentence (easy) & 100 & Grammar, Accuracy \\
     & Words-to-Sentence (medium) & 100 & Grammar, Accuracy \\
     & Words-to-Sentence (hard) & 100 & Grammar, Accuracy \\
    \midrule
    \multirow{2}{*}{\makecell[l]{Puzzle \\ (one-shot)}} & Latin Square (4x4) & 100 & Accuracy \\
     & Sudoku (4x4) & 108 & Accuracy \\
    \bottomrule
  \end{tabular}
  }
\end{table}

\cref{app:tab:benchmark} provides a comprehensive overview of \ours{}. We omit the synthetic list operations from this discussion, as their setup is analogous to the \textit{Waiting Line} tasks. Example prompts and corresponding answers for each task are presented in \cref{app:tab:benchmark_prompts_examples}.


\paragraph{Evaluation Details}
For the \textit{Waiting Line} and \textit{Puzzle} tasks, which are relatively unfamiliar to the models, we employ one-shot prompting. In this setting, the model is first presented with a complete example (a prompt and its corresponding solution) before being given the actual test prompt. In contrast, for \textit{Text Writing} tasks, we use standard zero-shot prompting.

All models are evaluated using greedy decoding, with the exception of infilling tasks, for which we use temperature sampling with a temperature of $1.0$. The maximum number of generated tokens is set to 32 for \textit{Waiting Line} and 64 for both \textit{Puzzle} and \textit{Text Writing}.

\paragraph{Evaluation Metrics}
For the \textit{Waiting Line} and \textit{Puzzle} tasks, we use accuracy, defined as a binary score indicating whether the generated output exactly matches one of the valid solutions. For the \textit{Text Writing} benchmarks, we employ several scores consisting of a base grammar score and task-specific metrics. The grammar score, evaluated using the tool from \cite{grammar}, is 1 for grammatically correct outputs and 0 otherwise. The task-specific metrics are as follows:
For \textit{Summarization}, we report ROUGE-L~\citep{rouge} against a reference summary.
For \textit{Paraphrasing}, we measure both semantic similarity using BERTScore~\citep{bertscore} and lexical diversity using $1 - \text{BLEU}$~\citep{bleu} to penalize outputs that are too close to the original sentence.
For \textit{Words-to-Sentence}, we use an inclusion accuracy metric that verifies if all requested words are present in the generated sentence.
For simplicity, we primarily report the grammar score in the main body of the paper. A comprehensive breakdown of all metrics is available in \cref{app:sec:llm_results}.

\paragraph{Dataset Generation}
The tasks for our benchmark were sourced and generated as follows. For \textit{Waiting Line}, we created lists by randomly sampling from a predefined list of first and last names, which were generated with Gemini 2.5 Pro~\citep{gemini2.5}. For \textit{Latin Square}, we generated prompts requesting the creation of Latin squares of size 4x4 with random symbols. For \textit{Sudoku}, we used the existing dataset from \cite{dream}. For \textit{Summarization} and \textit{Paraphrasing}, we utilized 100 examples of the existing SAMSum~\citep{samsum} and \cite{chatgpt_paraphrases_dataset} benchmarks, respectively. Finally, for \textit{Words-to-Sentence}, we used Gemini 2.5 Pro to generate word sets of four words at three difficulty levels: easy (simple, semantically aligned words), medium (more complex but aligned words), and hard (complex words that are challenging to connect meaningfully).

\bgroup
\renewcommand{\arraystretch}{1.5} 

\newcolumntype{L}[1]{>{\raggedright\arraybackslash}p{#1}}
{\small
\begin{longtable}{ L{0.2\textwidth} L{0.45\textwidth} L{0.26\textwidth} }

    \caption{Benchmark prompts and examples. \textbf{Bold} indicates the input part of the prompt.}
  \label{app:tab:benchmark_prompts_examples}\\
    \toprule
    \textbf{Task} & \textbf{Prompt} & \textbf{Example Answer} \\
    \midrule
    \endfirsthead

    \toprule
    \textbf{Task} & \textbf{Prompt} & \textbf{Example Answer} \\
    \midrule
    \endhead

    \bottomrule
    \endlastfoot

Copy & You are managing a waiting line at a customer service desk. You need to record the following people in the order they arrived: \dataarg{[``Billy Ramos", ``Alan Wells", ``Grace Wright"]} Please copy the list exactly and provide only the final list.  & [``Billy Ramos", ``Alan Wells", ``Grace Wright"] \\
Sort & You are managing a waiting line at a customer service desk. The following people should be organized alphabetically by last name for efficient processing: \dataarg{[``David Lewis", ``Patrick Tran", ``Sean Diaz"]} Please sort the list in alphabetical order and provide only the final list.  & [``David Lewis", ``Patrick Tran", ``Sean Diaz"] \\
Reverse & You are managing a waiting line at a customer service desk. The previous staff member put the waiting line in the wrong order. Please reverse the order of the following people in the waiting line to correct it: \dataarg{[``Mark Richardson", ``Kevin Martinez", ``Henry Young"]} Please reverse the order of the list and provide only the final list.  & [``Henry Young", ``Kevin Martinez", ``Mark Richardson"] \\
Shuffle & You are managing a waiting line at a customer service desk. The waiting line should be randomly shuffled to ensure fair service distribution: \dataarg{[``Paul Payne", ``Robert Riley", ``Peter Stone"]} Please randomly shuffle the list and provide only the final list. Ensure the sequence is different from the original.  & [``Robert Riley", ``Paul Payne", ``Peter Stone"] \\
Replace Index & You are managing a waiting line at a customer service desk. The person at position \dataarg{0} must be replaced with ``\dataarg{Henry Warren}": \dataarg{[``Patrick Morgan", ``Eric King", ``Joe Reed"]} Please replace the person at the specified position with ``\dataarg{Henry Warren}" and provide only the final list.  & [``Henry Warren", ``Eric King", ``Joe Reed"] \\
Replace Random & You are managing a waiting line at a customer service desk. One person in the waiting line must be replaced with ``\dataarg{Juan Torres}": \dataarg{[``David Owens", ``Kelly Payne", ``Aaron Freeman"]} Please replace one random person with ``\dataarg{Juan Torres}" and provide only the final list. & [``David Owens", ``Kelly Payne", ``Juan Torres"] \\
Insert Index & You are managing a waiting line at a customer service desk. A new person ``\dataarg{Aaron Stewart}" is inserted into the line at position \dataarg{2}: \dataarg{[``Charlotte Chavez", ``Grace Baker", ``Keith Cooper"]} Please put the new person at the specified position and provide only the final list.  & [``Charlotte Chavez", ``Grace Baker", ``Aaron Stewart", ``Keith Cooper"] \\
Insert Random & You are managing a waiting line at a customer service desk. A new person ``\dataarg{Justin McDonald}" is inserted into the line at a random position: \dataarg{[``Johnny Sullivan", ``Ryan Baker", ``Juan Wilson"]} Please put the new person in the random position and provide only the final list.  & [``Justin McDonald", ``Johnny Sullivan", ``Ryan Baker", ``Juan Wilson"] \\
Remove Index & You are managing a waiting line at a customer service desk. The person at position \dataarg{2} has left the waiting line: \dataarg{[``Sarah Robertson", ``Maria Mitchell", ``Donald Hughes"]} Please remove the person at the specified position and provide only the final list.  & [``Sarah Robertson", ``Maria Mitchell"] \\
Remove Random & You are managing a waiting line at a customer service desk. One person has left the waiting line: \dataarg{[``Karen Kim", ``Jerry Hall", ``Jose Marshall"]} Please remove a random person and provide only the final list.  & [``Jerry Hall", ``Jose Marshall"] \\
Summarization & Summarize the following conversation. Only output the final result.  \dataarg{Steve: Bought the new Dream Theater album 5 minutes ago. I hope it's good. Rob: I have it here on my desk, ready for the first listening. Steve: Ok, I'll tell you later what I think about it. Rob: Same here. See you later!}  Summary: & Steve and Rob will talk about new Dream Theater album, after they finish listening. \\
Paraphrasing & Paraphrase the following sentence. Only output the final result.  Sentence: \dataarg{Any idea of what sweater this is?}  Paraphrase: & Can you identify this sweater? \\
Words-to-Sentence (easy) & Construct a single, coherent sentence using the words \dataarg{dog, park, ball, and throw}. & I love to throw the ball for my dog at the park. \\
Words-to-Sentence (medium) & Construct a single, coherent sentence using the words \dataarg{apple, river, table, and happy}. & I was so happy to sit at the table by the river and eat a crisp apple. \\
Words-to-Sentence (hard) & Construct a single, coherent sentence using the words \dataarg{algorithm, river, symphony, and moss}. & The growth of the moss along the river bank seemed to follow a natural algorithm, a quiet symphony of life unfolding. \\
Latin Square (4x4) & Generate a Latin square of size \dataarg{4} with the symbols \dataarg{[H, 4, C, A]}. Only output the final result as CSV. & H,4,A,C 4,C,H,A C,A,4,H A,H,C,4 \\
Sudoku (4x4) & Fill the positions where the values are 0 in a 4x4 grid with digits 1-4 so that each column, each row, and each of the four 2x2 subgrids that compose the grid contains all of the digits from 1 to 4.  Input: \dataarg{0042 2031 4023 3214} Output:  & 1342 2431 4123 3214 \\

\end{longtable}
}
\egroup

\section{Additional Experimental Results}
\label{app:sec:additional_results}

\subsection{Unmasking Strategies: Margin Top-k and Entropy Top-k}
\begin{figure}[ht!]
    \centering
    \begin{subfigure}{0.195\textwidth}
        \centering
        \includegraphics[width=\linewidth]{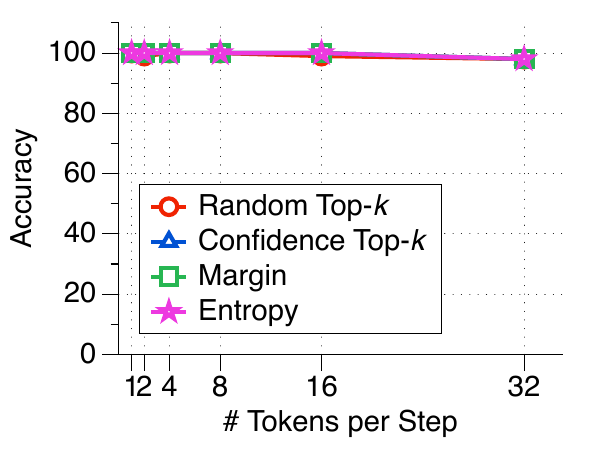}
        \caption{Copy}
    \end{subfigure}
    \hfill
    \begin{subfigure}{0.195\textwidth}
        \centering
        \includegraphics[width=\linewidth]{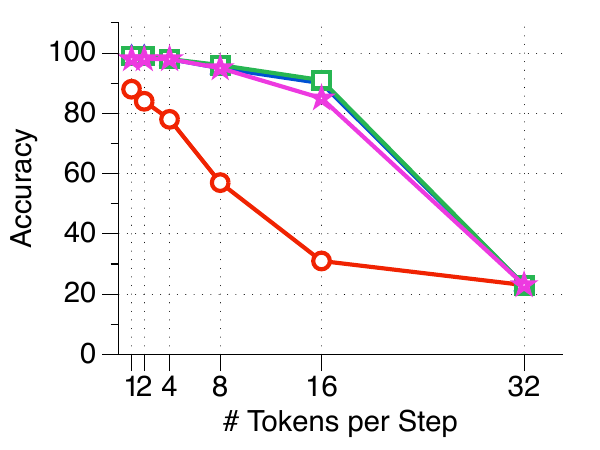}
        \caption{Reverse}
    \end{subfigure}
    \hfill
    \begin{subfigure}{0.195\textwidth}
        \centering
        \includegraphics[width=\linewidth]{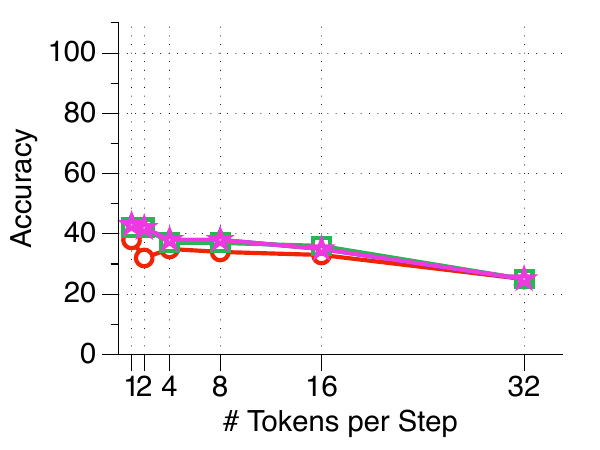}
        \caption{Replace Index}
    \end{subfigure}
    \hfill
    \begin{subfigure}{0.195\textwidth}
        \centering
        \includegraphics[width=\linewidth]{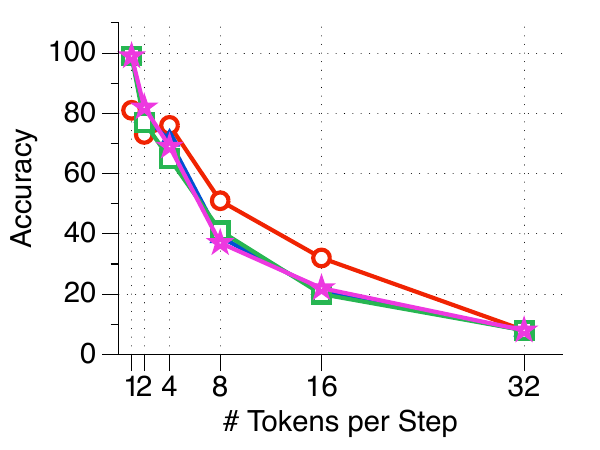}
        \caption{Replace Random}
    \end{subfigure}
    \hfill
    \begin{subfigure}{0.195\textwidth}
        \centering
        \includegraphics[width=\linewidth]{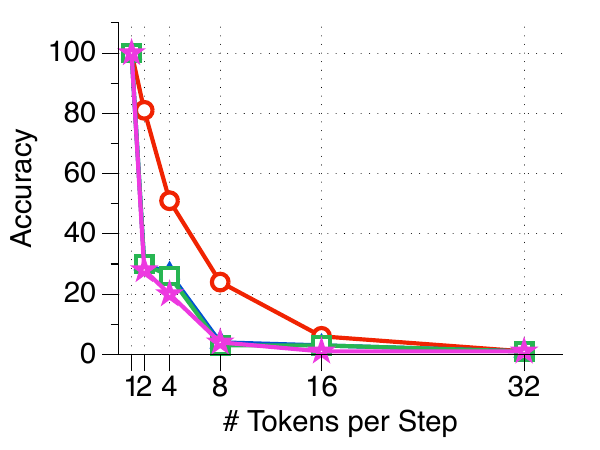}
        \caption{Shuffle}
    \end{subfigure}

\medskip

    \begin{subfigure}{0.195\textwidth}
        \centering
        \includegraphics[width=\linewidth]{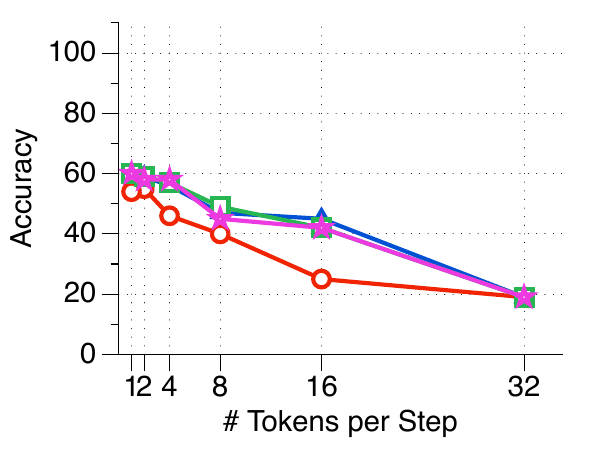}
        \caption{Insert Index}
    \end{subfigure}
    \hfill
    \begin{subfigure}{0.195\textwidth}
        \centering
        \includegraphics[width=\linewidth]{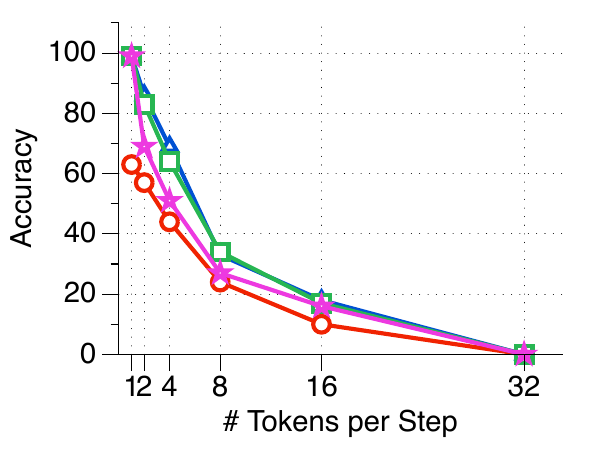}
        \caption{Insert Random}
    \end{subfigure}
    \hfill
    \begin{subfigure}{0.195\textwidth}
        \centering
        \includegraphics[width=\linewidth]{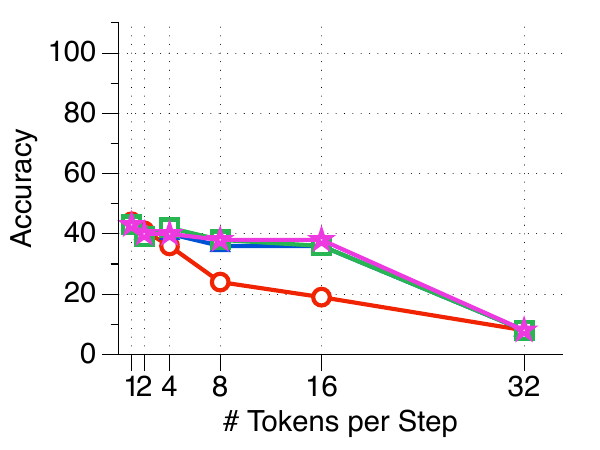}
        \caption{Remove Index}
    \end{subfigure}
    \hfill
    \begin{subfigure}{0.195\textwidth}
        \centering
        \includegraphics[width=\linewidth]{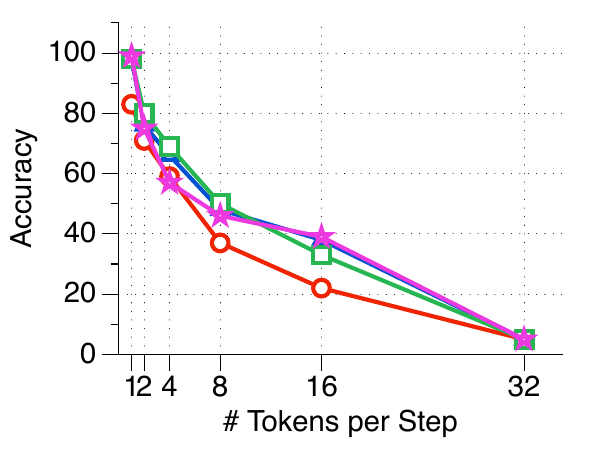}
        \caption{Remove Random}
    \end{subfigure}
    \hfill
    \begin{subfigure}{0.195\textwidth}
        \centering
        \includegraphics[width=\linewidth]{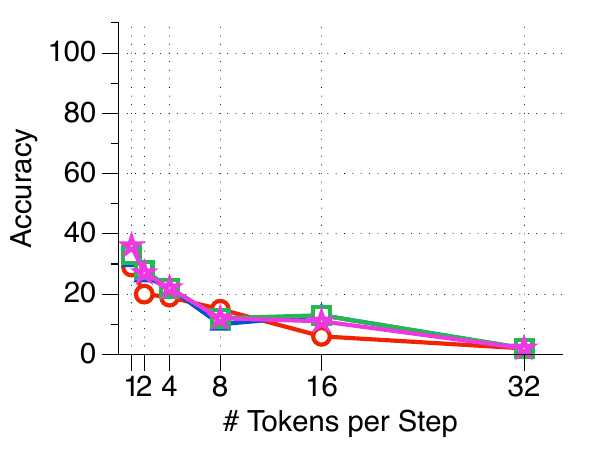}
        \caption{Sort}
    \end{subfigure}
    \caption{Full \textit{Waiting Line} ($n=3, 4, 5, 6$) results using LLaDA 1.5~\citep{llada1.5} with \textit{Margin Top-k} and \textit{Entropy Top-k}.}
    \label{fig:waiting_line_margin_entropy}
\end{figure}
\begin{figure}[ht!]
    \centering
    \begin{subfigure}{0.195\textwidth}
        \centering
        \includegraphics[width=\linewidth]{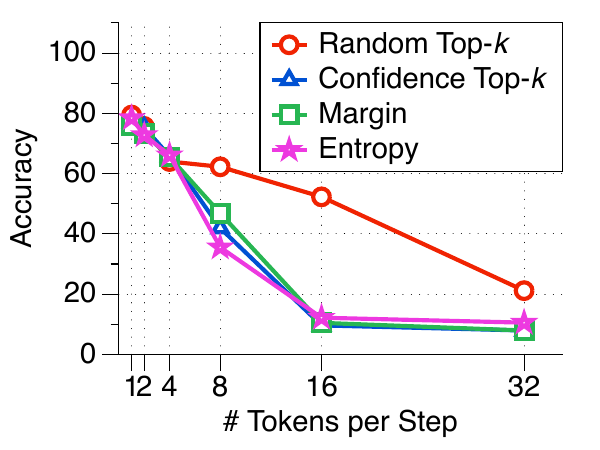}
        \caption{Paraphrasing}
    \end{subfigure}
    \hfill
    \begin{subfigure}{0.195\textwidth}
        \centering
        \includegraphics[width=\linewidth]{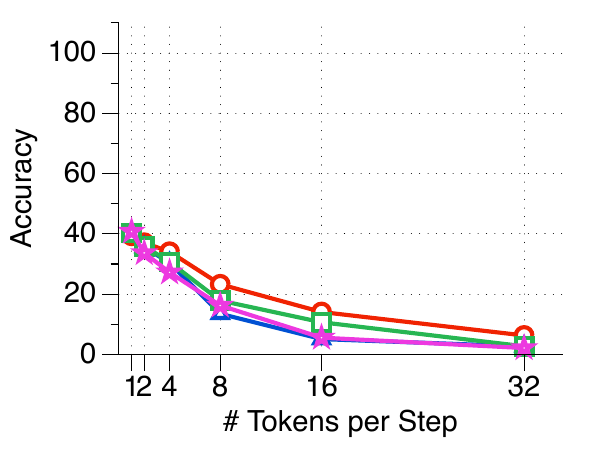}
        \caption{Summarization}
    \end{subfigure}
    \hfill
    \begin{subfigure}{0.195\textwidth}
        \centering
        \includegraphics[width=\linewidth]{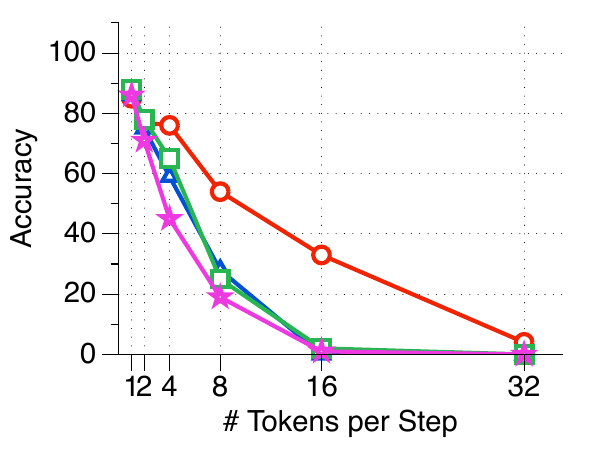}
        \caption{W2S (easy)}
    \end{subfigure}
    \hfill
    \begin{subfigure}{0.195\textwidth}
        \centering
        \includegraphics[width=\linewidth]{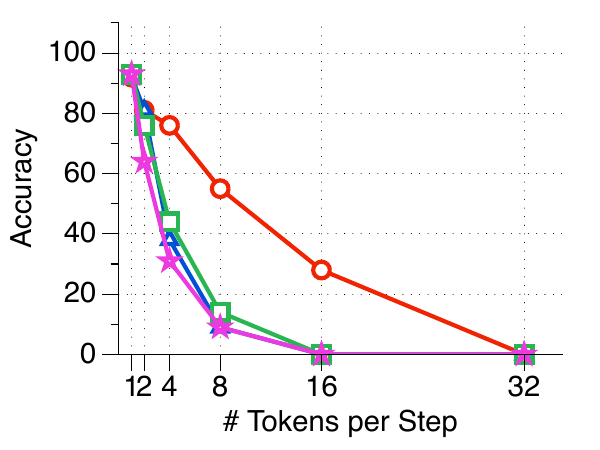}
        \caption{W2S (medium)}
    \end{subfigure}
    \hfill
    \begin{subfigure}{0.195\textwidth}
        \centering
        \includegraphics[width=\linewidth]{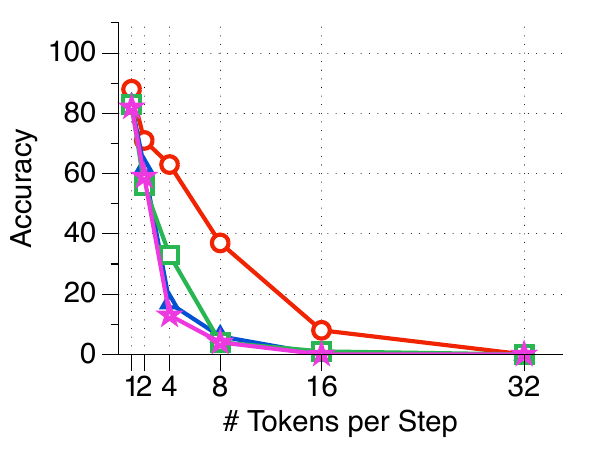}
        \caption{W2S (hard)}
    \end{subfigure}
    \caption{Full \textit{Text Writing} results using LLaDA 1.5~\citep{llada1.5} with \textit{Margin Top-k} and \textit{Entropy Top-k}.}
    \label{fig:text_writing_margin_entropy}
\end{figure}

Recently, two alternatives to confidence-based unmasking have been proposed: \textit{Margin Top-k}~\citep{kim2025train} and \textit{
Entropy Top-k}~\citep{dream}. These methods function similarly by iteratively revealing tokens but employ different metrics to quantify model uncertainty.

\textit{Margin Top-k} calculates the score as the difference between the probabilities of the two most likely token candidates, $p_1$ and $p_2$. The margin, $m = p_1 - p_2$, is large when the model is highly confident in a single token, and small when there is ambiguity between the top candidates.

\textit{Entropy Top-k}, in contrast, uses the Shannon entropy of the entire output probability distribution to measure uncertainty. A low entropy value signifies a peaked, high-confidence distribution, whereas a high entropy value indicates a more uniform and uncertain distribution.

We compare both approaches against \textit{Confidence Top-k} in \cref{fig:waiting_line_margin_entropy} and \cref{fig:text_writing_margin_entropy} to evaluate their relative performance. In our experiments, \textit{Margin Top-k} and \textit{Entropy Top-k} yield perform similarly to \textit{Confidence Top-k}.

\subsection{Factor-based Unmasking}
\begin{figure}[ht!]
    \centering
    \begin{subfigure}{0.195\textwidth}
        \centering
        \includegraphics[width=\linewidth]{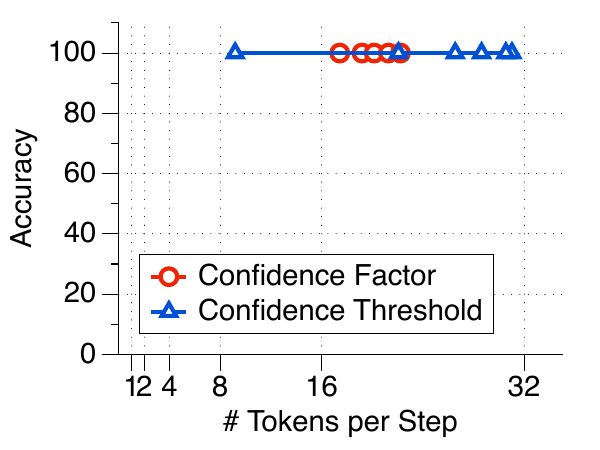}
        \caption{Copy}
    \end{subfigure}
    \hfill
    \begin{subfigure}{0.195\textwidth}
        \centering
        \includegraphics[width=\linewidth]{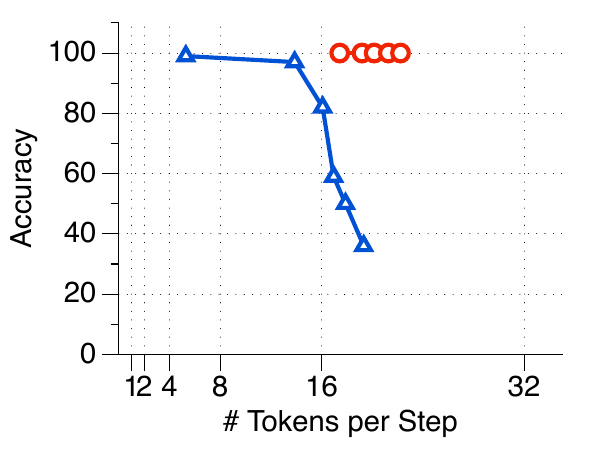}
        \caption{Reverse}
    \end{subfigure}
    \hfill
    \begin{subfigure}{0.195\textwidth}
        \centering
        \includegraphics[width=\linewidth]{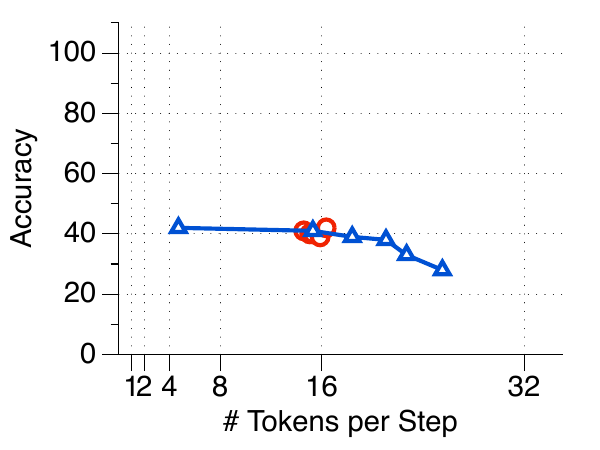}
        \caption{Replace Index}
    \end{subfigure}
    \hfill
    \begin{subfigure}{0.195\textwidth}
        \centering
        \includegraphics[width=\linewidth]{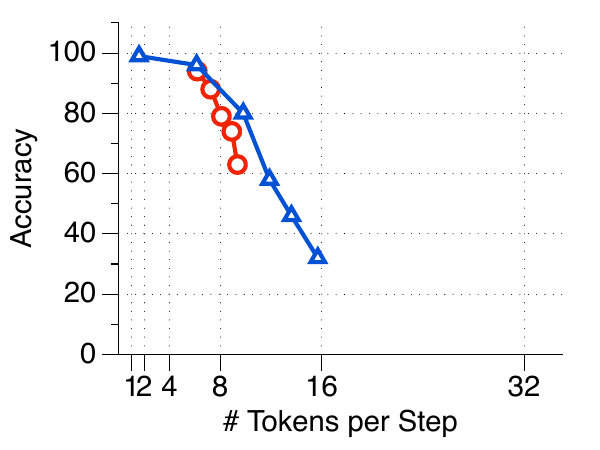}
        \caption{Replace Random}
    \end{subfigure}
    \hfill
    \begin{subfigure}{0.195\textwidth}
        \centering
        \includegraphics[width=\linewidth]{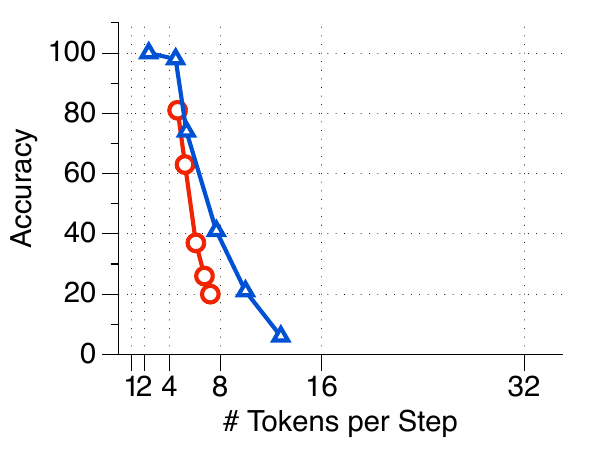}
        \caption{Shuffle}
    \end{subfigure}

\medskip

    \begin{subfigure}{0.195\textwidth}
        \centering
        \includegraphics[width=\linewidth]{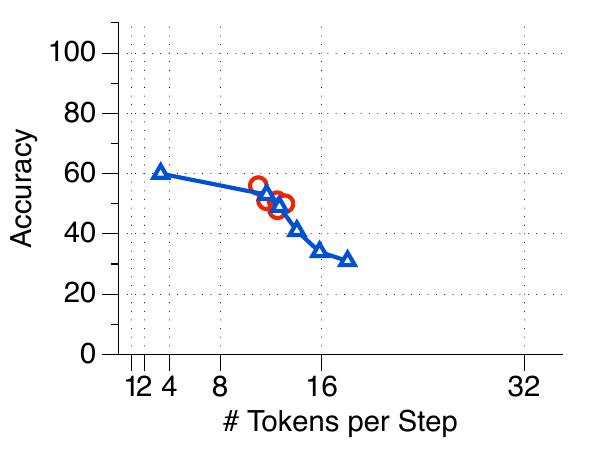}
        \caption{Insert Index}
    \end{subfigure}
    \hfill
    \begin{subfigure}{0.195\textwidth}
        \centering
        \includegraphics[width=\linewidth]{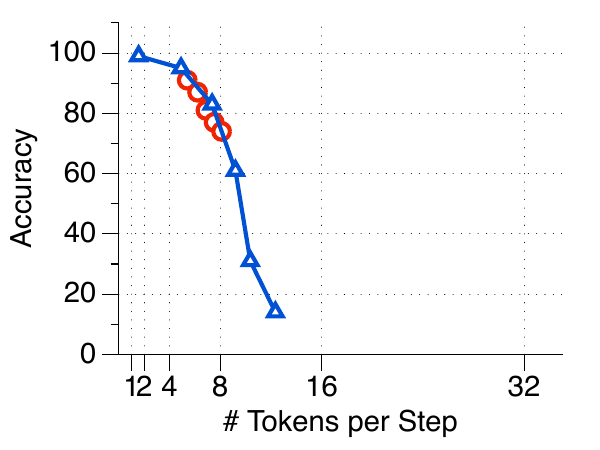}
        \caption{Insert Random}
    \end{subfigure}
    \hfill
    \begin{subfigure}{0.195\textwidth}
        \centering
        \includegraphics[width=\linewidth]{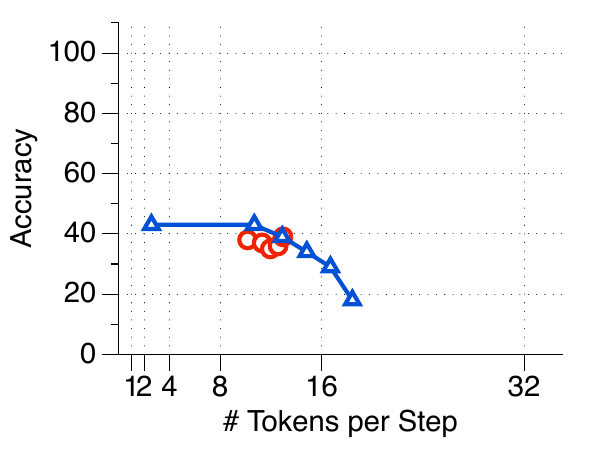}
        \caption{Remove Index}
    \end{subfigure}
    \hfill
    \begin{subfigure}{0.195\textwidth}
        \centering
        \includegraphics[width=\linewidth]{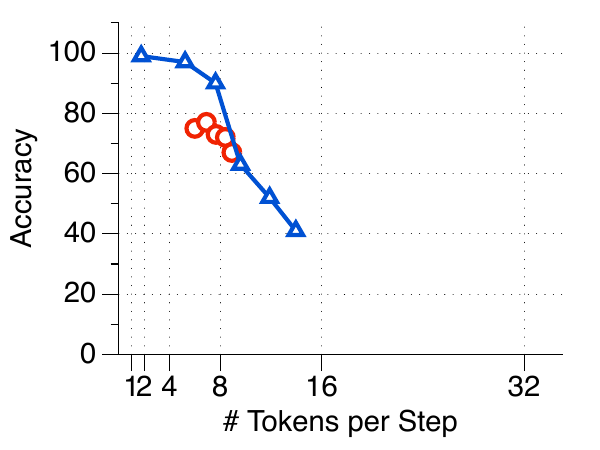}
        \caption{Remove Random}
    \end{subfigure}
    \hfill
    \begin{subfigure}{0.195\textwidth}
        \centering
        \includegraphics[width=\linewidth]{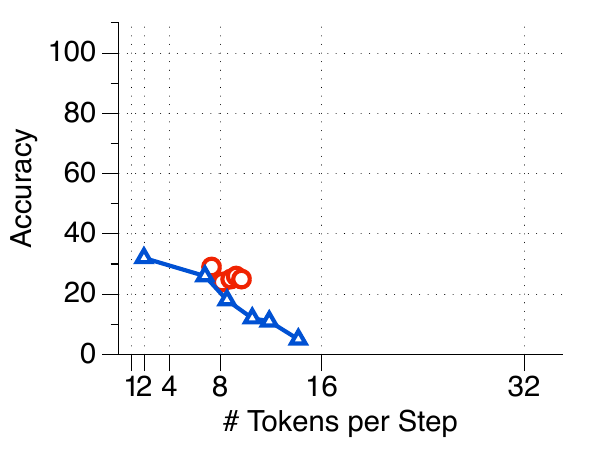}
        \caption{Sort}
    \end{subfigure}
    \caption{Full \textit{Waiting Line} ($n=3, 4, 5, 6$) results using LLaDA 1.5~\citep{llada1.5} with \textit{Factor-based} unmasking.}
    \label{fig:waiting_line_factor}
\end{figure}
\begin{figure}[ht!]
    \centering
    \begin{subfigure}{0.195\textwidth}
        \centering
        \includegraphics[width=\linewidth]{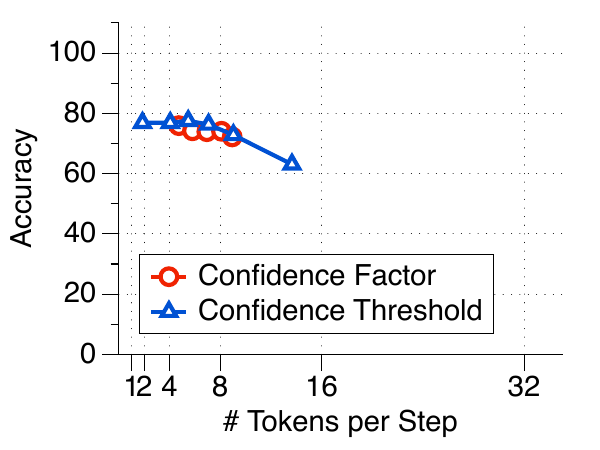}
        \caption{Paraphrasing}
    \end{subfigure}
    \hfill
    \begin{subfigure}{0.195\textwidth}
        \centering
        \includegraphics[width=\linewidth]{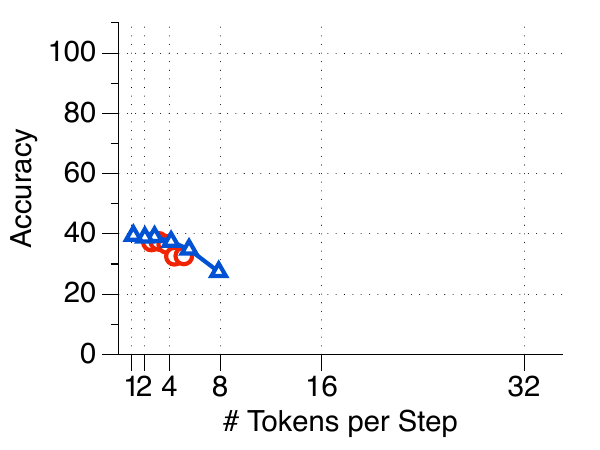}
        \caption{Summarization}
    \end{subfigure}
    \hfill
    \begin{subfigure}{0.195\textwidth}
        \centering
        \includegraphics[width=\linewidth]{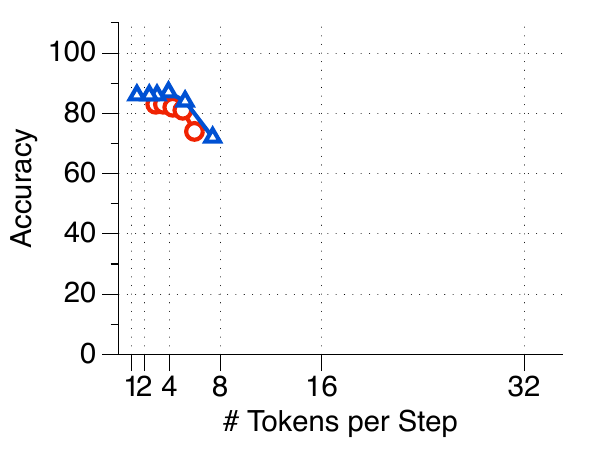}
        \caption{W2S (easy)}
    \end{subfigure}
    \hfill
    \begin{subfigure}{0.195\textwidth}
        \centering
        \includegraphics[width=\linewidth]{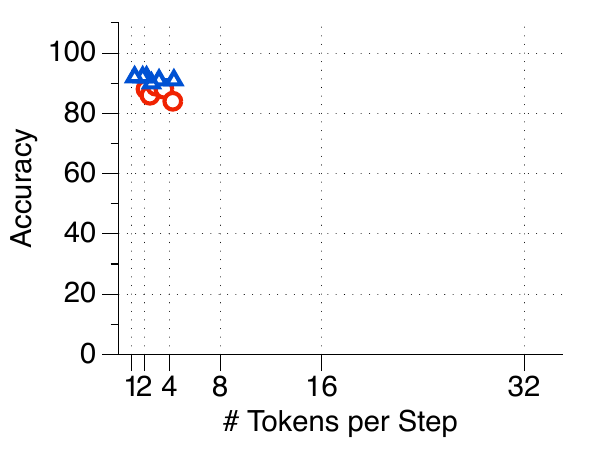}
        \caption{W2S (medium)}
    \end{subfigure}
    \hfill
    \begin{subfigure}{0.195\textwidth}
        \centering
        \includegraphics[width=\linewidth]{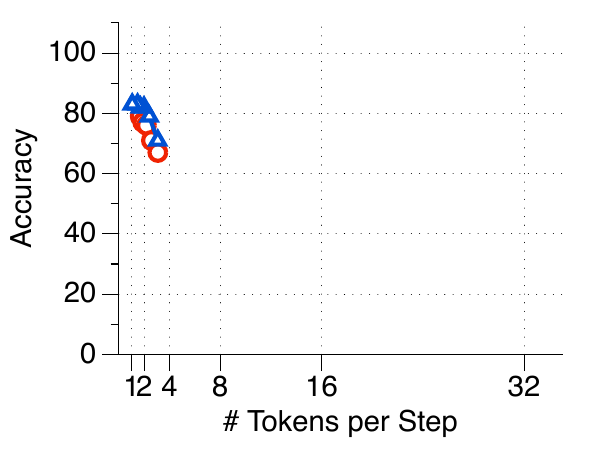}
        \caption{W2S (hard)}
    \end{subfigure}
    \caption{Full \textit{Text Writing} results using LLaDA 1.5~\citep{llada1.5} with \textit{Factor-based} unmasking.}
    \label{fig:text_writing_factor}
\end{figure}

\textit{Factor-based} unmasking~\citep{fast-dllm} is an adaptive method for determining how many tokens to decode in parallel during inference. After calculating the confidence scores for potential next tokens, the scores are sorted in descending order. The strategy then selects the largest number of tokens, $n$, that satisfies the condition:
\[
(n + 1)(1 - c(n)) < f
\]
Here, $c(n)$ represents the confidence of the $n$-th token in the sorted list, and $f$ is a predefined hyperparameter.

In contrast to \textit{Threshold-based} unmasking, which accepts any token whose individual confidence exceeds a fixed value $\tau$, \textit{Factor-based} is dynamic. It evaluates tokens as a group, allowing the degree of parallelism to increase when the model is highly confident and decrease when it is not. 

In \cref{fig:waiting_line_factor} and \cref{fig:text_writing_factor} we compare \textit{Factor-based} unmasking with \textit{Threshold-based} unmasking with the following values for $f \in \{0.7, 1.0, 1.3, 1.6, 1.9$\}. The results show that \textit{Factor-based} unmasking follows a similar trend compared to \textit{Threshold-based} unmasking. However, it has a narrower range of the number of tokens per step in general.

\subsection{Impact of KV Caching}
\label{app:sec:kv_cache}

\begin{figure}[htbp]
    \centering
    \begin{subfigure}{0.32\textwidth}
        \centering
        \includegraphics[width=\linewidth]{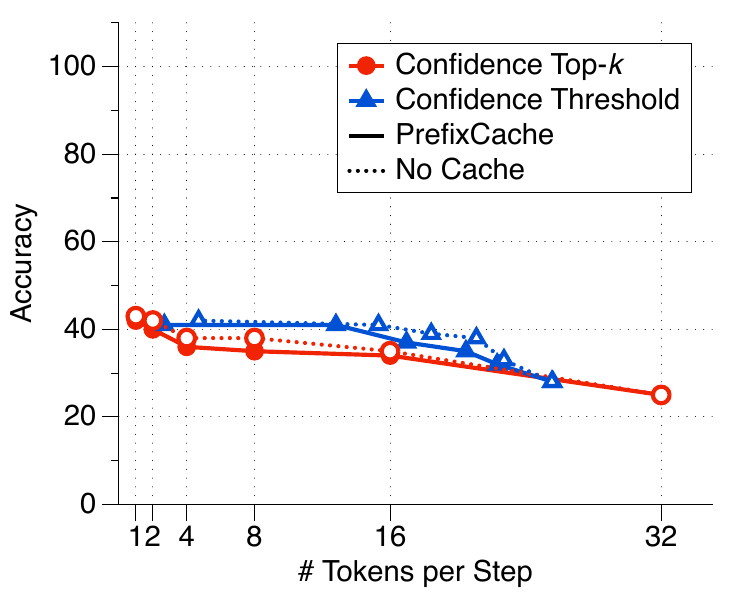}
        \caption{Replace Index}
    \end{subfigure}
    \hfill
    \begin{subfigure}{0.32\textwidth}
        \centering
        \includegraphics[width=\linewidth]{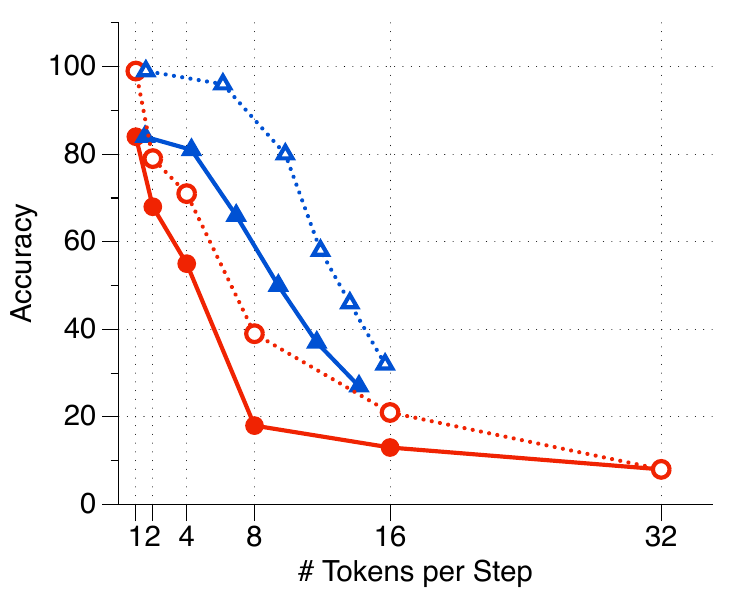}
        \caption{Replace Random}
    \end{subfigure}
    \hfill
    \begin{subfigure}{0.32\textwidth}
        \centering
        \includegraphics[width=\linewidth]{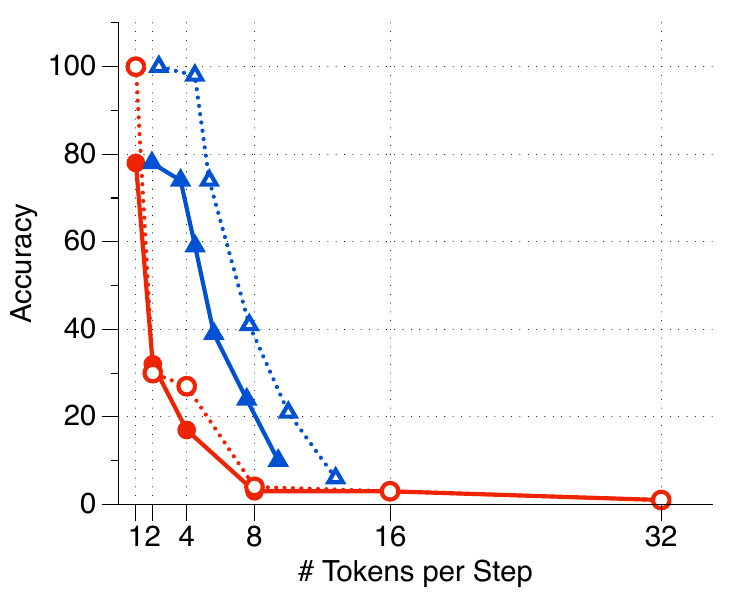}
        \caption{Shuffle}
    \end{subfigure}
    \caption{Fast-dLLM results on \textit{Replace Index, Replace Random}, and \textit{Shuffle} using PrefixCache.}
    \label{fig:results_kv_cache}
\end{figure}


Unlike autoregressive LLMs, dLLMs cannot effectively utilize KV cache, causing significant latency at each timestep. Recent training-free KV cache methods~\citep{fast-dllm} claim dramatic speedups with minimal accuracy loss on benchmarks like GSM8K~\citep{gsm8k} and HumanEval~\citep{humaneval}.
In \cref{fig:results_kv_cache} we evaluate whether PrefixCache from Fast-dLLM~\citep{fast-dllm}, maintains effectiveness on \ours{} under parallel decoding. The results show that caching decreased accuracy in general. Nevertheless it maintains a similar trend to no caching where parallel decoding decreases performance. Complete results for PrefixCache are in \cref{fig:waiting_line_kv_single}.

\subsection{Semi-autoregressive Decoding}
\label{app:sec:semi_ar}

Semi-autoregressive decoding partitions the input sequence into blocks of equal length. These blocks are processed sequentially from left to right, while the tokens within each individual block are decoded in parallel. In the edge case where the block length is one, this process becomes equivalent to standard autoregressive decoding, generating a single token at a time. We investigate the impact of varying block lengths on performance for several unmasking methods, fixing the number of tokens decoded in parallel to $k=2$ (\cref{fig:waiting_line_semi_ar,fig:text_writing_semi_ar}).  

In \textit{Text Writing} tasks, performance improves as block length increases. This suggests that global dependencies are crucial for grammatical correctness, whereas smaller blocks often lead to errors and lower scores.
Conversely, the \textit{Waiting Line} tasks show an opposite trend. Here, left-to-right (decoding length 2) enhances performance by leveraging local dependencies to better match formatting tokens with the surrounding content.


\begin{figure}[ht!]
    \centering
    \begin{subfigure}{0.195\textwidth}
        \centering
        \includegraphics[width=\linewidth]{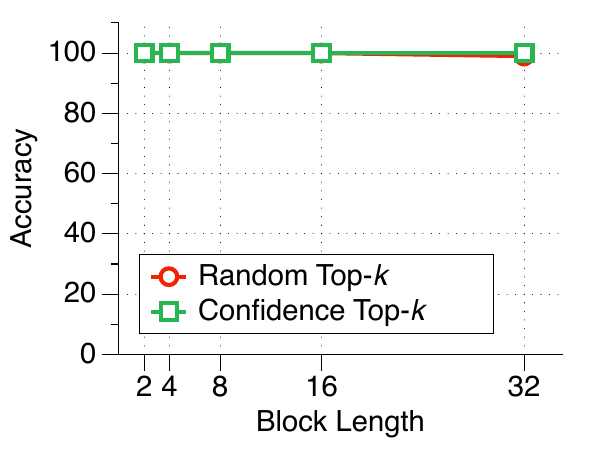}
        \caption{Copy}
    \end{subfigure}
    \hfill
    \begin{subfigure}{0.195\textwidth}
        \centering
        \includegraphics[width=\linewidth]{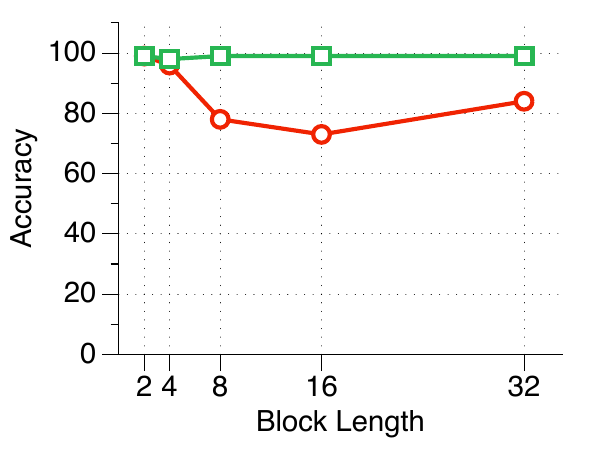}
        \caption{Reverse}
    \end{subfigure}
    \hfill
    \begin{subfigure}{0.195\textwidth}
        \centering
        \includegraphics[width=\linewidth]{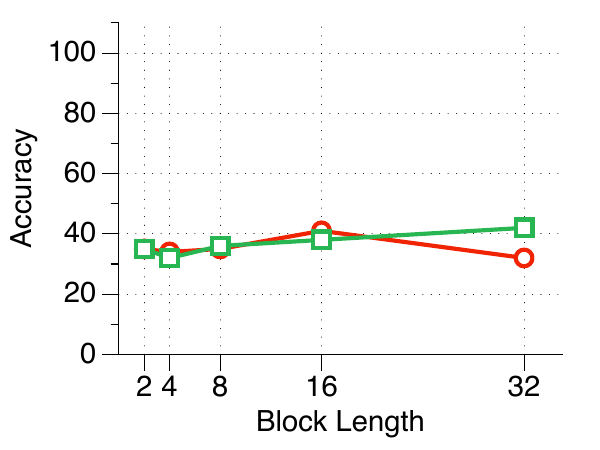}
        \caption{Replace Index}
    \end{subfigure}
    \hfill
    \begin{subfigure}{0.195\textwidth}
        \centering
        \includegraphics[width=\linewidth]{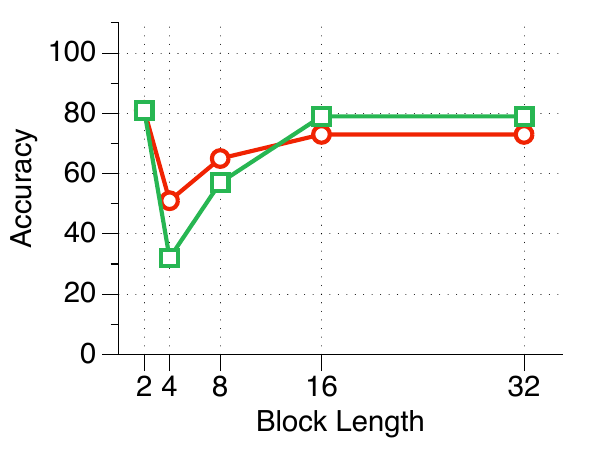}
        \caption{Replace Random}
    \end{subfigure}
    \hfill
    \begin{subfigure}{0.195\textwidth}
        \centering
        \includegraphics[width=\linewidth]{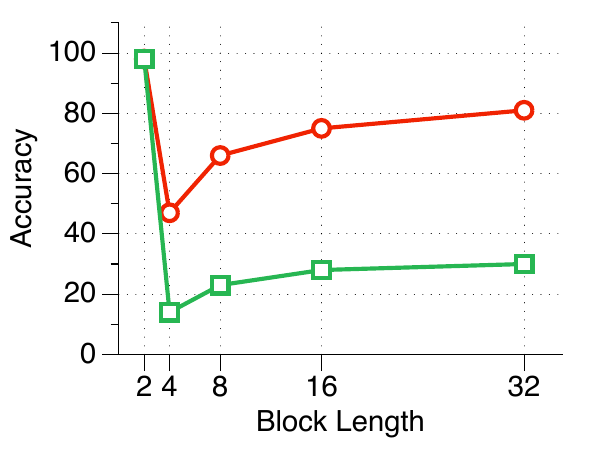}
        \caption{Shuffle}
    \end{subfigure}

\medskip

    \begin{subfigure}{0.195\textwidth}
        \centering
        \includegraphics[width=\linewidth]{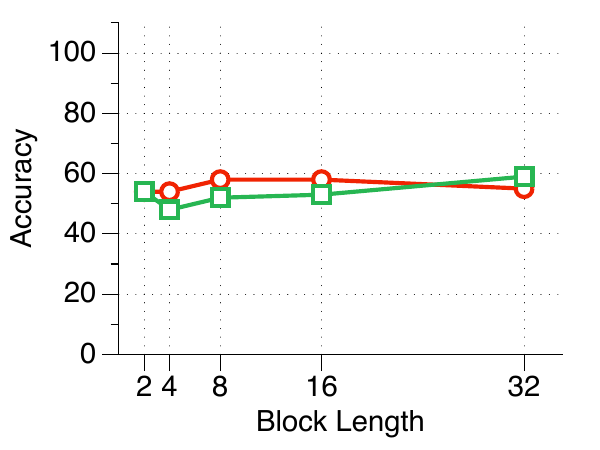}
        \caption{Insert Index}
    \end{subfigure}
    \hfill
    \begin{subfigure}{0.195\textwidth}
        \centering
        \includegraphics[width=\linewidth]{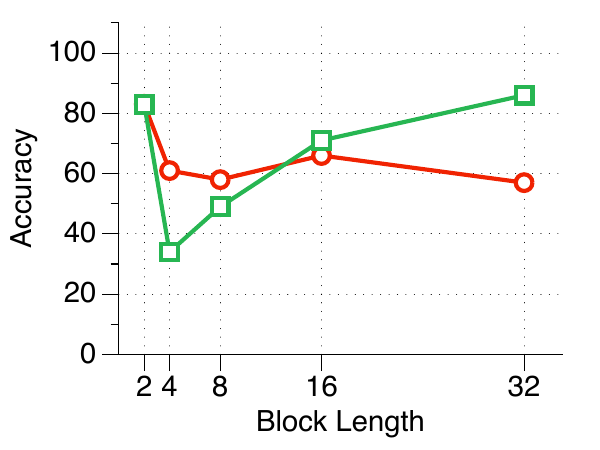}
        \caption{Insert Random}
    \end{subfigure}
    \hfill
    \begin{subfigure}{0.195\textwidth}
        \centering
        \includegraphics[width=\linewidth]{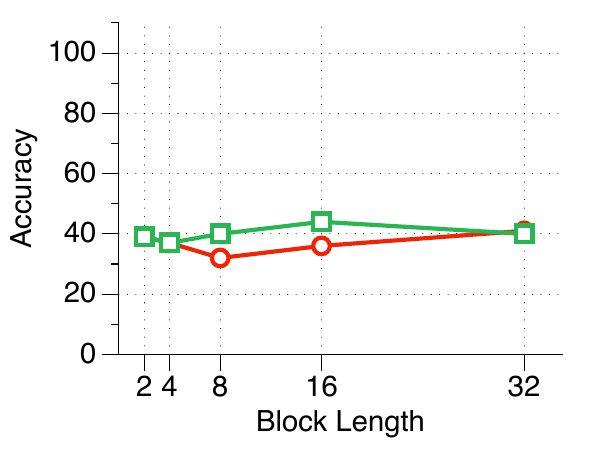}
        \caption{Remove Index}
    \end{subfigure}
    \hfill
    \begin{subfigure}{0.195\textwidth}
        \centering
        \includegraphics[width=\linewidth]{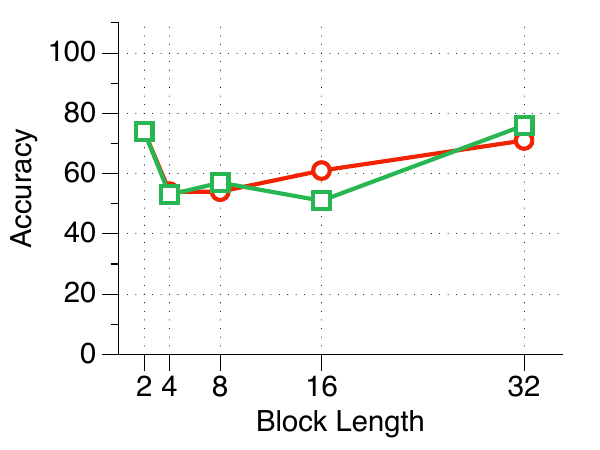}
        \caption{Remove Random}
    \end{subfigure}
    \hfill
    \begin{subfigure}{0.195\textwidth}
        \centering
        \includegraphics[width=\linewidth]{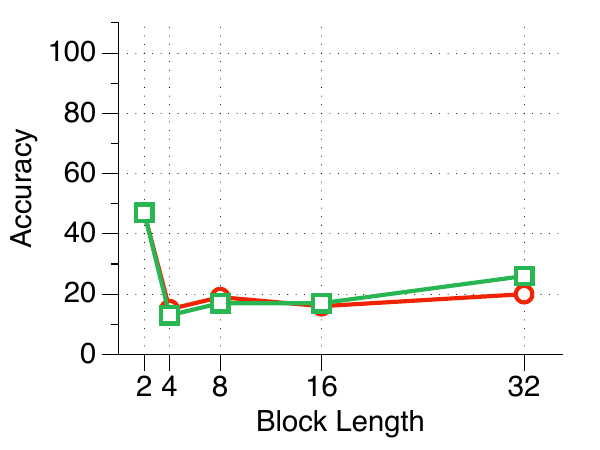}
        \caption{Sort}
    \end{subfigure}
    \caption{Full \textit{Waiting Line} ($n=3, 4, 5, 6$) results using LLaDA 1.5~\citep{llada1.5} with semi-autoregressive decoding ($k=2$).}
    \label{fig:waiting_line_semi_ar}
\end{figure}
\begin{figure}[ht!]
    \centering
    \begin{subfigure}{0.195\textwidth}
        \centering
        \includegraphics[width=\linewidth]{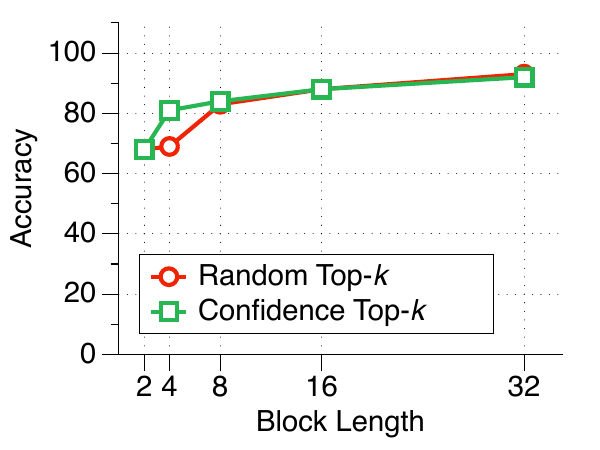}
        \caption{Paraphrasing}
    \end{subfigure}
    \hfill
    \begin{subfigure}{0.195\textwidth}
        \centering
        \includegraphics[width=\linewidth]{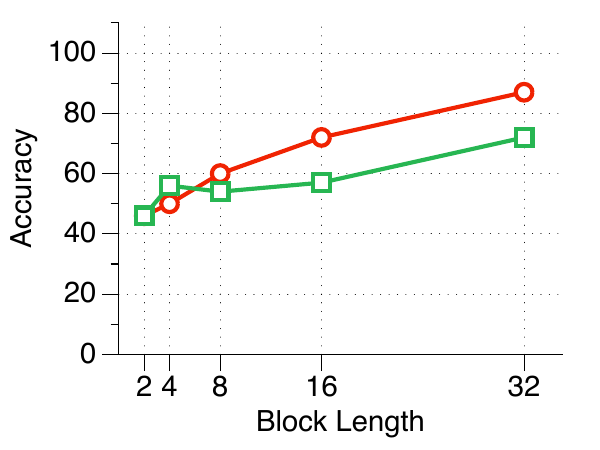}
        \caption{Summarization}
    \end{subfigure}
    \hfill
    \begin{subfigure}{0.195\textwidth}
        \centering
        \includegraphics[width=\linewidth]{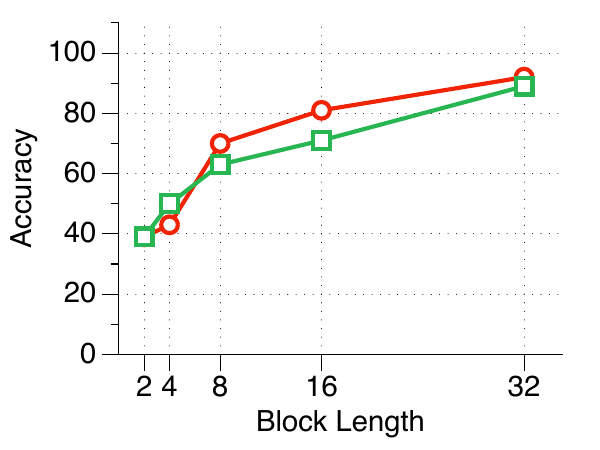}
        \caption{W2S (easy)}
    \end{subfigure}
    \hfill
    \begin{subfigure}{0.195\textwidth}
        \centering
        \includegraphics[width=\linewidth]{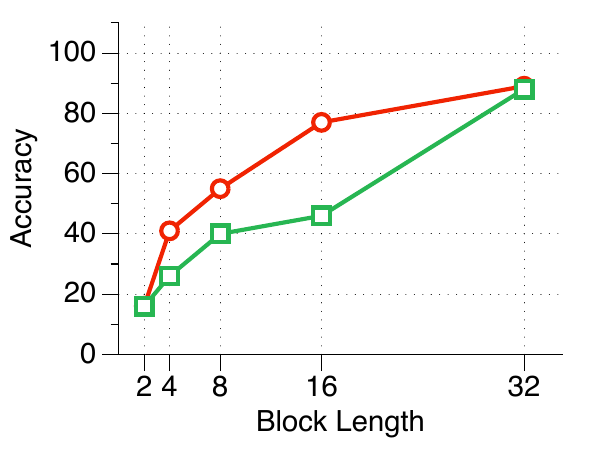}
        \caption{W2S (medium)}
    \end{subfigure}
    \hfill
    \begin{subfigure}{0.195\textwidth}
        \centering
        \includegraphics[width=\linewidth]{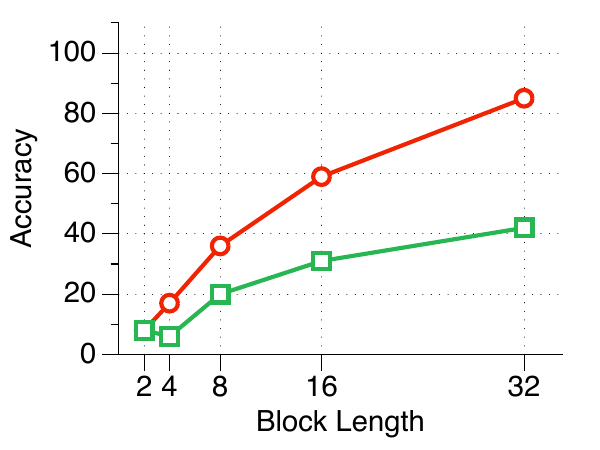}
        \caption{W2S (hard)}
    \end{subfigure}
    \caption{Full \textit{Text Writing} results using LLaDA 1.5~\citep{llada1.5} with semi-autoregressive decoding ($k=2$).}
    \label{fig:text_writing_semi_ar}
\end{figure}


\subsection{Benchmark Results on Longer Outputs}
\label{app:longer_output}

We intentionally focus on short-output settings to isolate the effects of parallel decoding from model capacity limitations. In practice, even SOTA open-source dLLMs such as LLaDA and Dream achieve only around 50\% one-by-one decoding accuracy on Waiting Line tasks when n increases, despite these tasks being conceptually simple. This indicates that longer outputs quickly introduce capacity bottlenecks that would confound our analysis.
Importantly, \ours{} is designed so that output length can be freely adjusted through hyperparameters, enabling longer-output evaluations as future, higher-capacity dLLMs become available.

For completeness, we also report results on longer outputs. For \textit{Waiting Line}, we increase $n$ from $\{3, 4, 5, 6\}$ to $\{21, 22, 23, 24\}$ and the max generation length from 32 to 128. For \textit{Paraphrasing}, we use the XSum dataset and increase the generation limit from 64 to 256. For \textit{Words-to-Sentence}, we expand from single-sentence generation ($n = 4$) to ten-sentence generation ($n = 7$), increasing the generation limit from 64 to 256.

As shown, except for very simple tasks such as \textit{Copy}, one-by-one decoding already performs poorly in longer-output settings. Furthermore, even \textit{Copy} undergoes sharp degradation as parallelism increases, while other tasks rapidly collapse to near-zero accuracy.
For \textit{Text Writing} tasks, both \textit{Paraphrasing} and \textit{Words-to-Sentence} similarly converge toward near-zero accuracy as parallelism increases.
This suggests that limited model capacity significantly amplifies parallel decoding degradation for longer outputs.

\begin{figure}[ht!]
    \centering
    \begin{subfigure}{0.195\textwidth}
        \centering
        \includegraphics[width=\linewidth]{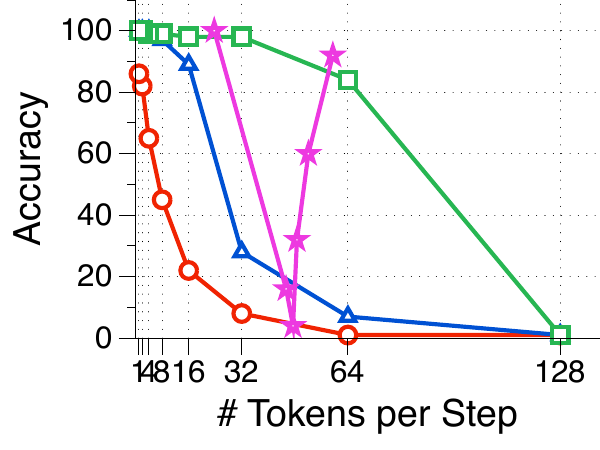}
        \caption{Copy}
    \end{subfigure}
    \hfill
    \begin{subfigure}{0.195\textwidth}
        \centering
        \includegraphics[width=\linewidth]{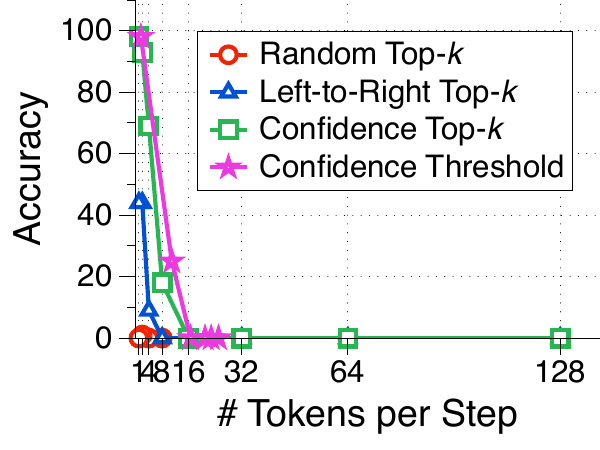}
        \caption{Reverse}
    \end{subfigure}
    \hfill
    \begin{subfigure}{0.195\textwidth}
        \centering
        \includegraphics[width=\linewidth]{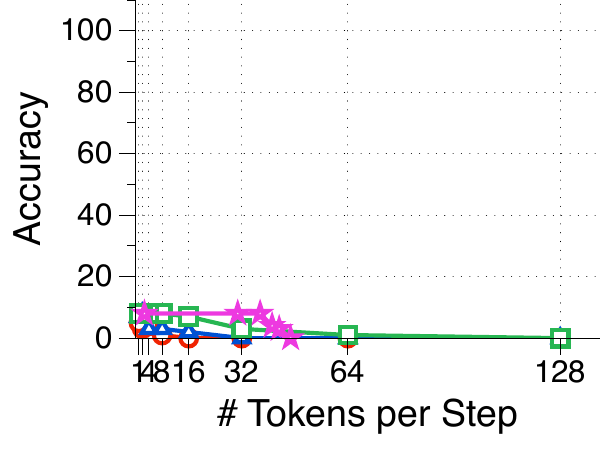}
        \caption{Replace Index}
    \end{subfigure}
    \hfill
    \begin{subfigure}{0.195\textwidth}
        \centering
        \includegraphics[width=\linewidth]{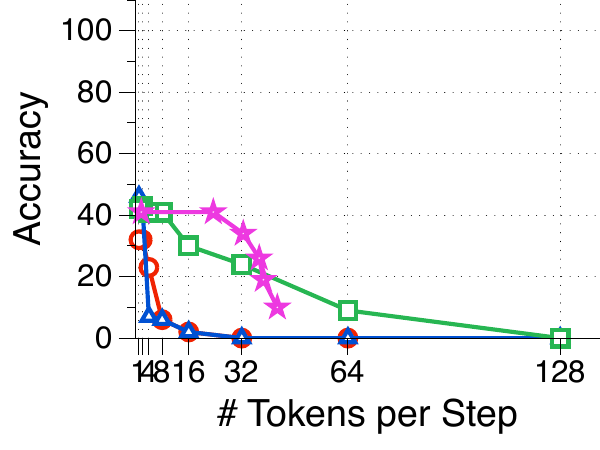}
        \caption{Replace Random}
    \end{subfigure}
    \hfill
    \begin{subfigure}{0.195\textwidth}
        \centering
        \includegraphics[width=\linewidth]{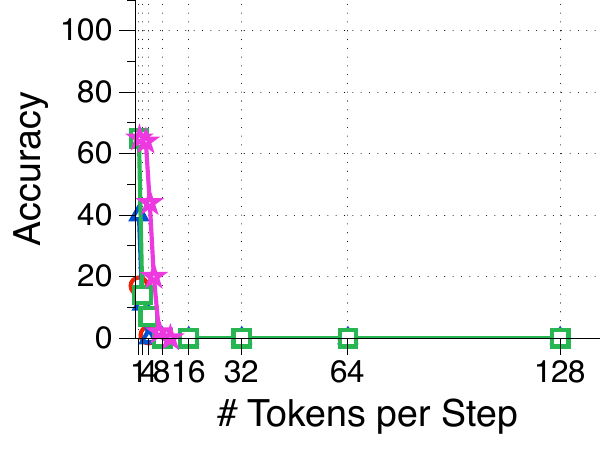}
        \caption{Shuffle}
    \end{subfigure}

\bigskip

    \begin{subfigure}{0.195\textwidth}
        \centering
        \includegraphics[width=\linewidth]{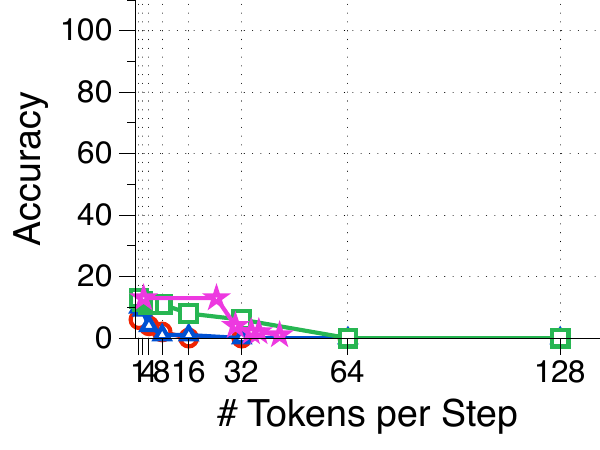}
        \caption{Insert Index}
    \end{subfigure}
    \hfill
    \begin{subfigure}{0.195\textwidth}
        \centering
        \includegraphics[width=\linewidth]{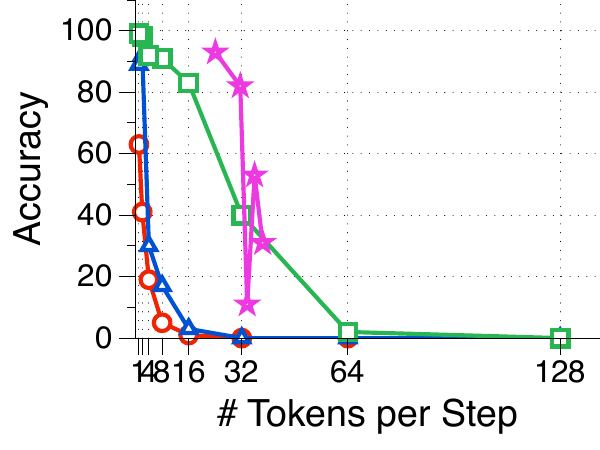}
        \caption{Insert Random}
    \end{subfigure}
    \hfill
    \begin{subfigure}{0.195\textwidth}
        \centering
        \includegraphics[width=\linewidth]{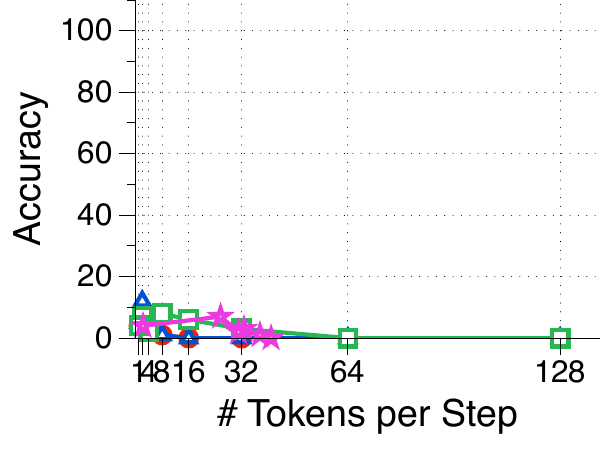}
        \caption{Remove Index}
    \end{subfigure}
    \hfill
    \begin{subfigure}{0.195\textwidth}
        \centering
        \includegraphics[width=\linewidth]{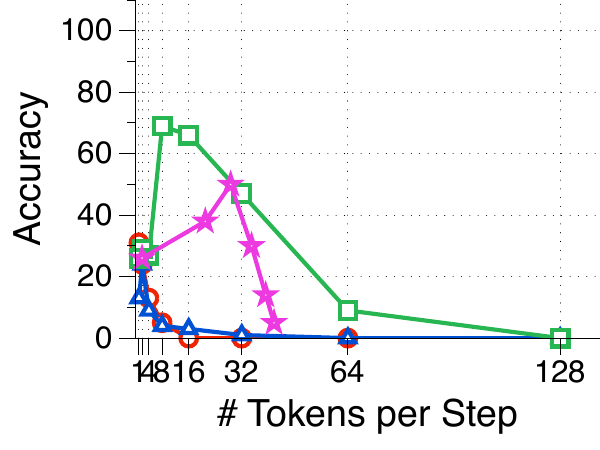}
        \caption{Remove Random}
    \end{subfigure}
    \hfill
    \begin{subfigure}{0.195\textwidth}
        \centering
        \includegraphics[width=\linewidth]{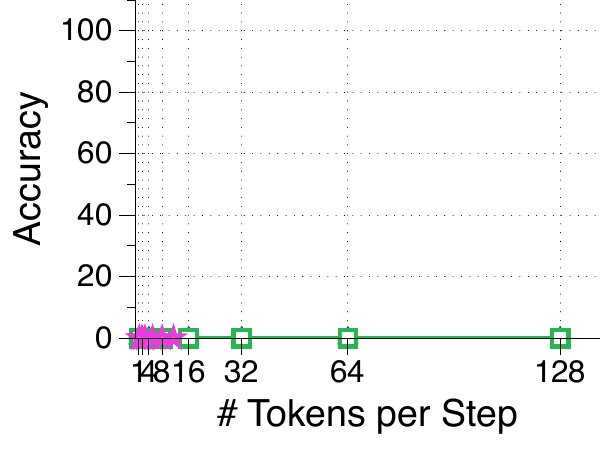}
        \caption{Sort}
    \end{subfigure}

    \caption{Longer \textit{Waiting Line} ($n=21, 22, 23, 24$) results using LLaDA 1.5~\citep{llada1.5}.}
    \label{fig:longer_waiting_line}
\end{figure}

\begin{figure}[ht!]
    \centering
    \hfill
    \begin{subfigure}{0.195\textwidth}
        \centering
        \includegraphics[width=\linewidth]{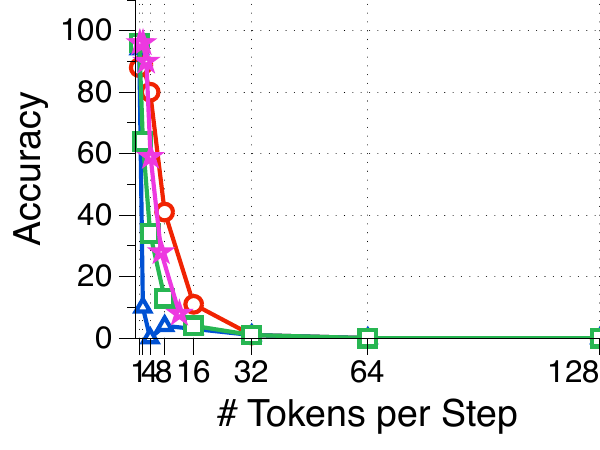}
        \caption{Paraphrasing}
    \end{subfigure}
    \hfill
    \begin{subfigure}{0.195\textwidth}
        \centering
        \includegraphics[width=\linewidth]{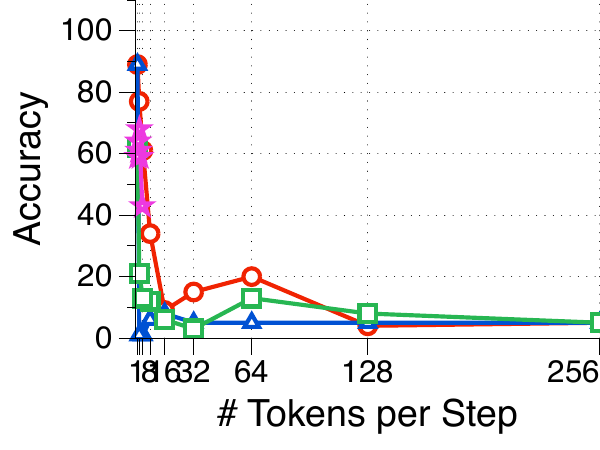}
        \caption{W2S (easy)}
    \end{subfigure}
    \hfill
    \begin{subfigure}{0.195\textwidth}
        \centering
        \includegraphics[width=\linewidth]{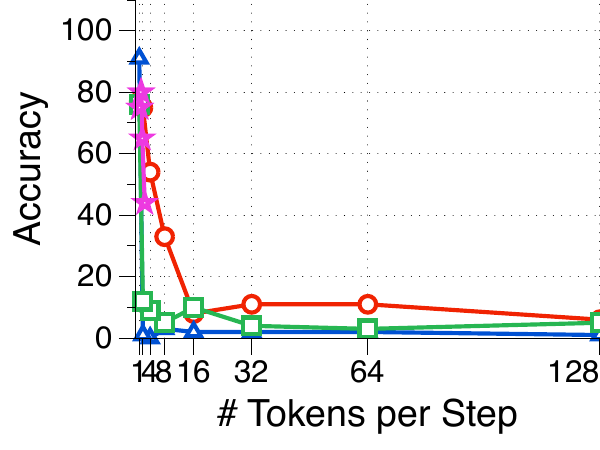}
        \caption{W2S (medium)}
    \end{subfigure}
    \hfill
    \begin{subfigure}{0.195\textwidth}
        \centering
        \includegraphics[width=\linewidth]{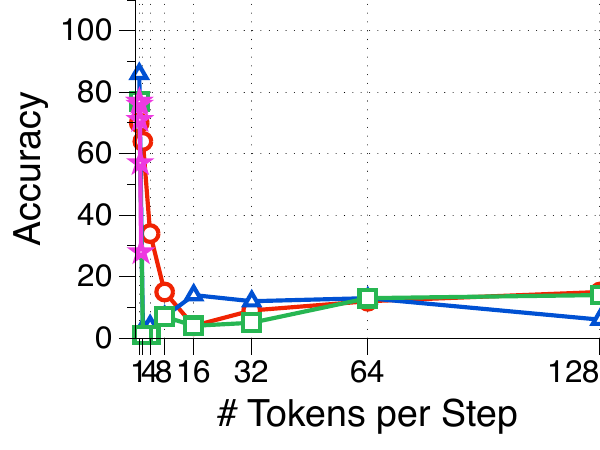}
        \caption{W2S (hard)}
    \end{subfigure}
    \hfill
    \caption{Longer \textit{Text Writing} results using LLaDA 1.5~\citep{llada1.5}.}
    \label{fig:longer_text_writing}
\end{figure}

\subsection{Benchmark Results with Additional Metrics}
\label{app:additional_metrics}

For \textit{Text Writing} tasks, we also report the BERTScore~\citep{bertscore}, ROUGE-L~\citep{rouge}, and Inclusion Accuracy metrics introduced in~\cref{app:bench_data_details}. BERTScore and ROUGE-L are substantially less sensitive to parallelism, showing only mild degradation as the number of tokens per step increases. By contrast, inclusion accuracy behaves similarly to the standard accuracy metric: as parallelism increases, models increasingly fail to include all required content, leading to a clear decline in inclusion accuracy.

\begin{figure}[ht!]
    \centering
    \begin{subfigure}{0.195\textwidth}
        \centering
        \includegraphics[width=\linewidth]{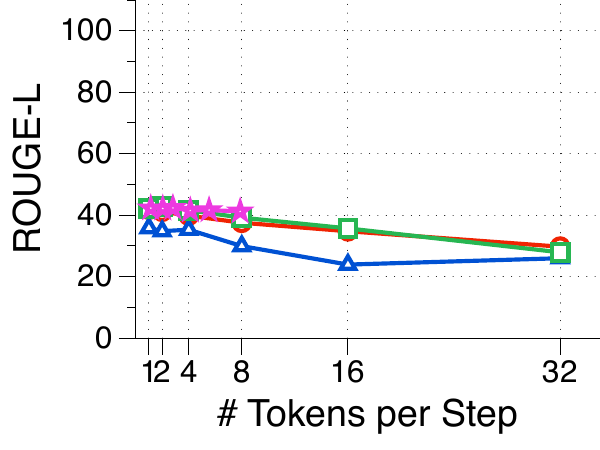}
        \caption{Summarization}
    \end{subfigure}
    \hfill
    \begin{subfigure}{0.195\textwidth}
        \centering
        \includegraphics[width=\linewidth]{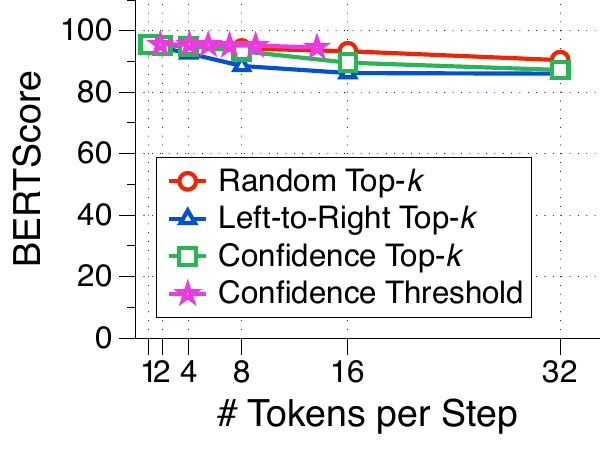}
        \caption{Paraphrasing}
    \end{subfigure}
    \hfill
    \begin{subfigure}{0.195\textwidth}
        \centering
        \includegraphics[width=\linewidth]{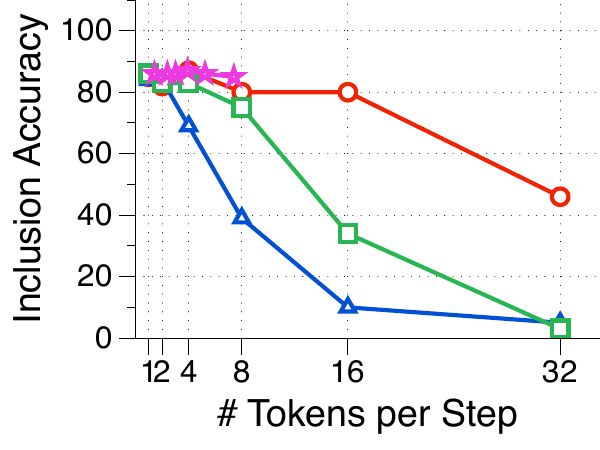}
        \caption{W2S (easy)}
    \end{subfigure}
    \hfill
    \begin{subfigure}{0.195\textwidth}
        \centering
        \includegraphics[width=\linewidth]{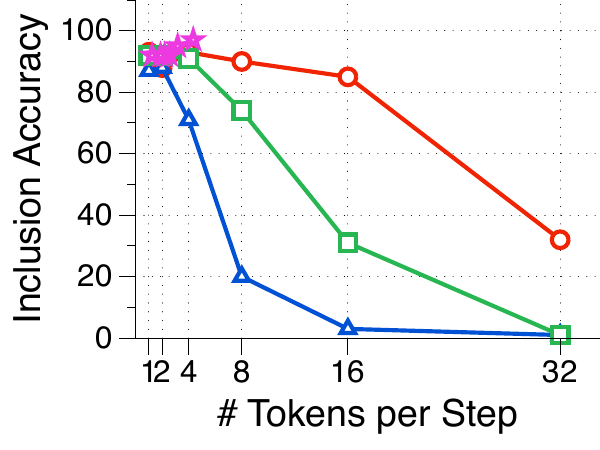}
        \caption{W2S (medium)}
    \end{subfigure}
    \hfill
    \begin{subfigure}{0.195\textwidth}
        \centering
        \includegraphics[width=\linewidth]{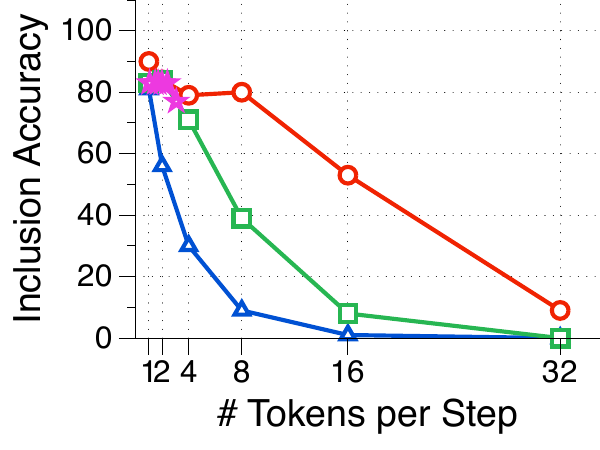}
        \caption{W2S (hard)}
    \end{subfigure}

    \caption{Benchmark results with additional metrics using LLaDA 1.5~\citep{llada1.5}. We use ROUGE-L for \textit{Summarization}, BERTScore for \textit{Paraphrasing}, 
and an inclusion accuracy for \textit{W2S}.}
    \label{fig:additional_metrics}
\end{figure}

\section{Additional Techniques}
\label{app:sec:advanced}

\subsection{Benchmark Results on Additional Parallel Decoding Methods}
\label{app:additional_methods}

We benchmark the speed-quality trade-offs of the newly released methods SlowFast Sampling ~\citep{slowfast-sampler}, DUS ~\citep{dus}, WINO~\citep{wino}, and APD ~\citep{apd}. Since APD currently supports only Dream, we report results for APD on Dream 7B and the rest on LLaDA 1.5.

SlowFast Sampling and DUS show worse trade-off curves than the confidence-threshold method on \ours{}, whereas WINO shows a better curve, indicating emerging progress toward intelligent parallel decoding. However, all remain far below the oracle, leaving substantial room for improvement. For APD, performance surpasses the confidence-threshold method at high parallelism but falls short of it at low parallelism, again suggesting considerable room for future advances.

\begin{figure}[ht!]
    \centering
    \begin{subfigure}{0.45\textwidth}
        \centering
            \hfill
        \includegraphics[width=\linewidth]{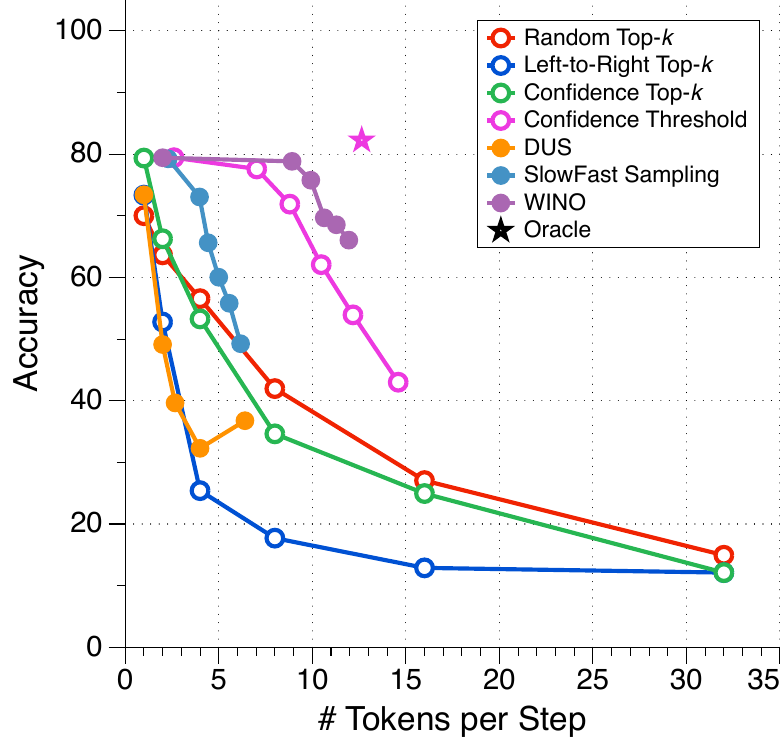}
        \caption{Speed-quality trade-offs of additional parallel decoding methods evaluated on LLaDA 1.5.}
    \end{subfigure}
    \hfill
    \begin{subfigure}{0.45\textwidth}
        \centering
        \includegraphics[width=\linewidth]{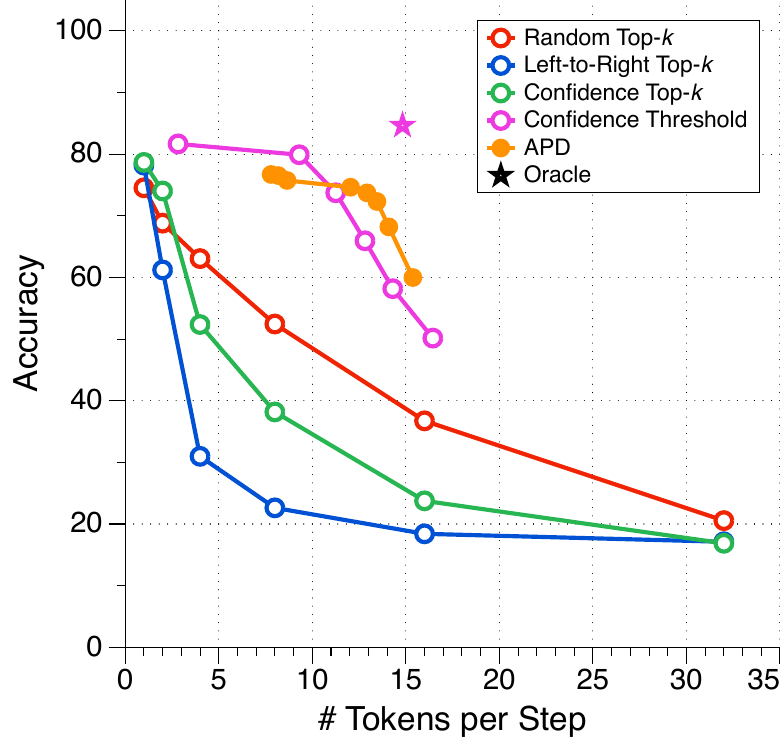}
        \caption{Speed-quality trade-offs of additional parallel decoding methods evaluated on Dream 7B.}
    \end{subfigure}
    \hfill
    \caption{Speed-quality trade-offs of additional parallel decoding methods.}
    \label{fig:additional_methods}
\end{figure}

\subsection{Impact of Fine-tuning}
\label{app:sec:finetuning}
\begin{figure}[ht!]
    \centering
    \begin{subfigure}{0.195\textwidth}
        \centering
        \includegraphics[width=\linewidth]{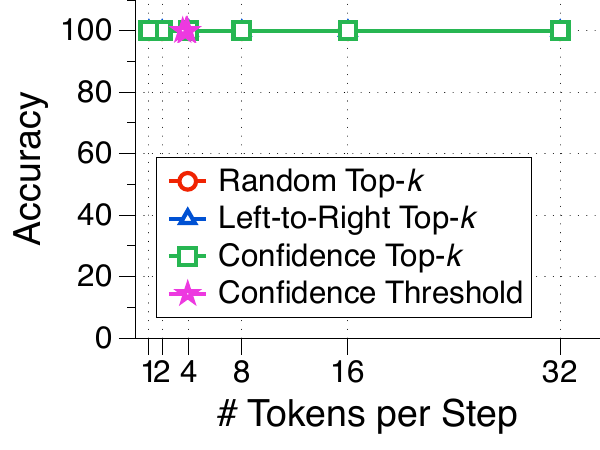}
        \caption{Copy}
    \end{subfigure}
    \hfill
    \begin{subfigure}{0.195\textwidth}
        \centering
        \includegraphics[width=\linewidth]{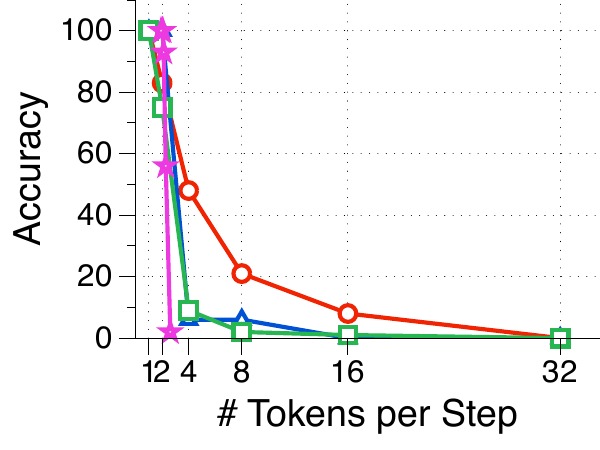}
        \caption{Reverse}
    \end{subfigure}
    \hfill
    \begin{subfigure}{0.195\textwidth}
        \centering
        \includegraphics[width=\linewidth]{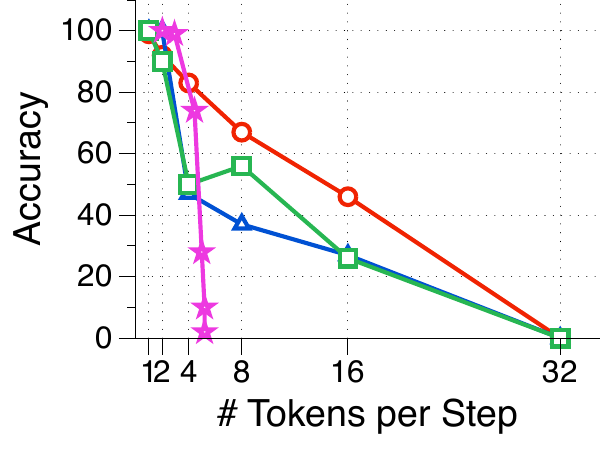}
        \caption{Replace Index}
    \end{subfigure}
    \hfill
    \begin{subfigure}{0.195\textwidth}
        \centering
        \includegraphics[width=\linewidth]{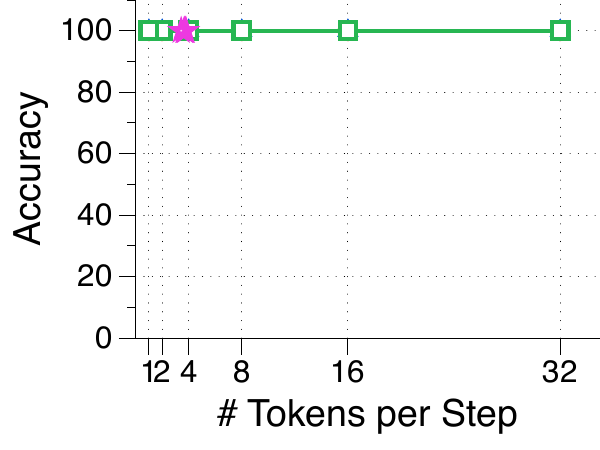}
        \caption{Replace Random}
    \end{subfigure}
    \hfill
    \begin{subfigure}{0.195\textwidth}
        \centering
        \includegraphics[width=\linewidth]{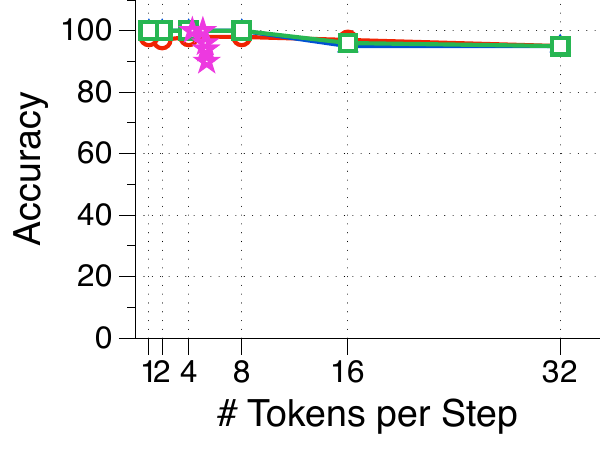}
        \caption{Shuffle}
    \end{subfigure}

\medskip

    \begin{subfigure}{0.195\textwidth}
        \centering
        \includegraphics[width=\linewidth]{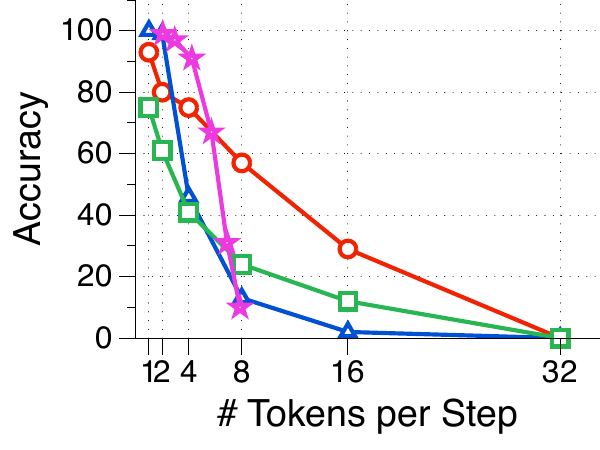}
        \caption{Insert Index}
    \end{subfigure}
    \hfill
    \begin{subfigure}{0.195\textwidth}
        \centering
        \includegraphics[width=\linewidth]{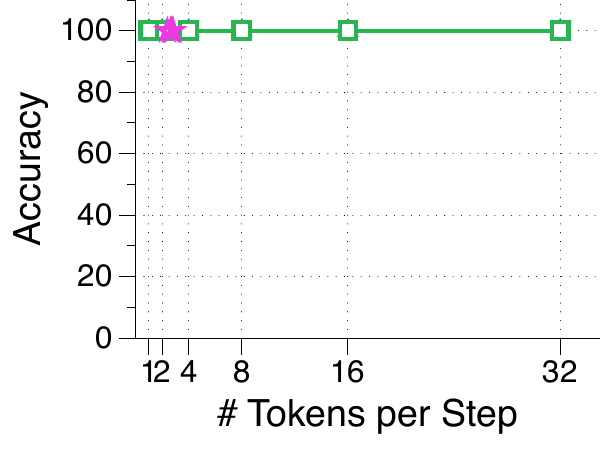}
        \caption{Insert Random}
    \end{subfigure}
    \hfill
    \begin{subfigure}{0.195\textwidth}
        \centering
        \includegraphics[width=\linewidth]{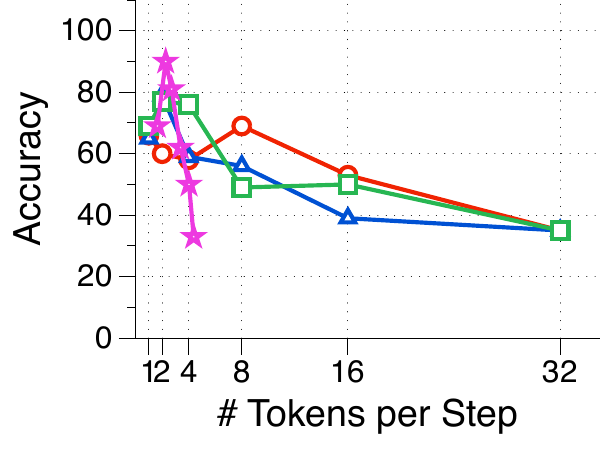}
        \caption{Remove Index}
    \end{subfigure}
    \hfill
    \begin{subfigure}{0.195\textwidth}
        \centering
        \includegraphics[width=\linewidth]{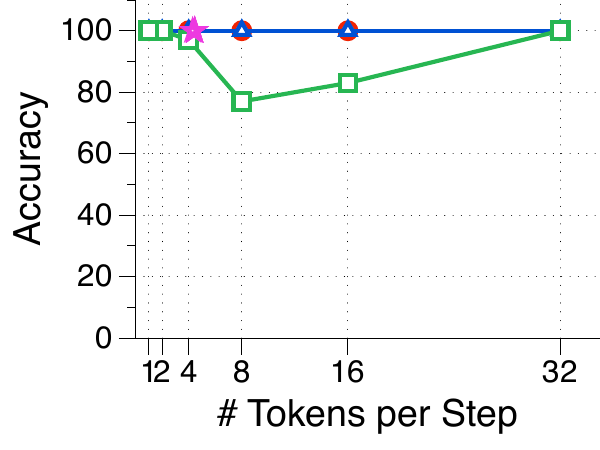}
        \caption{Remove Random}
    \end{subfigure}
    \hfill
    \begin{subfigure}{0.195\textwidth}
        \centering
        \includegraphics[width=\linewidth]{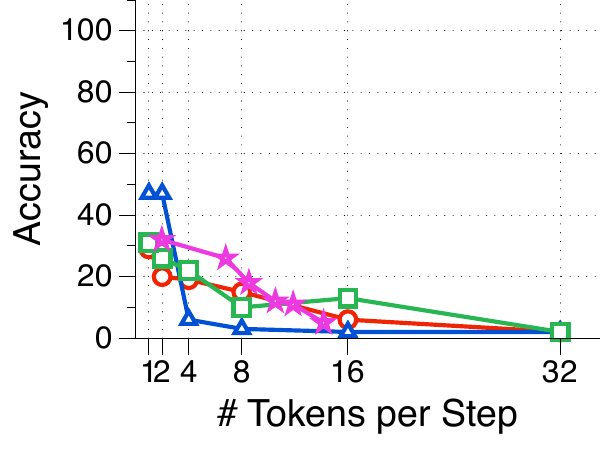}
        \caption{Sort}
    \end{subfigure}
    \caption{Full \textit{Waiting Line} ($n=3, 4, 5, 6$) results using fine-tuned LLaDA 1.5~\citep{llada1.5}.}
    \label{fig:waiting_line_llada1.5-finetuned}
\end{figure}

\begin{figure}[ht!]
    \centering
    \begin{subfigure}{0.195\textwidth}
        \centering
        \includegraphics[width=\linewidth]{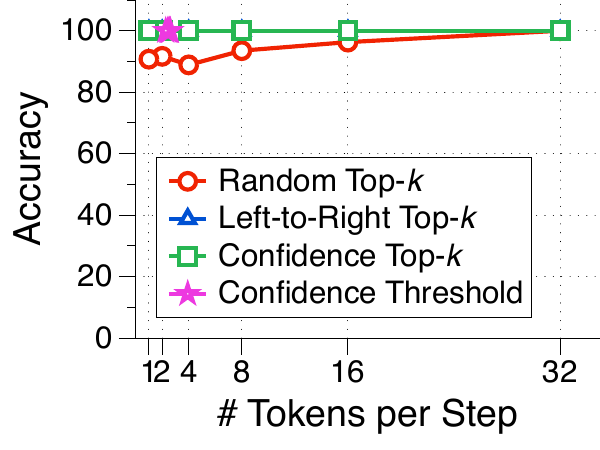}
        \caption{Sudoku}
    \end{subfigure}
    \begin{subfigure}{0.195\textwidth}
        \centering
        \includegraphics[width=\linewidth]{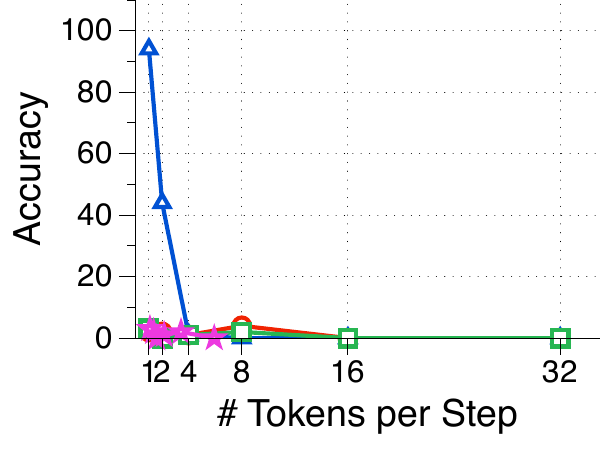}
        \caption{Latin Square}
    \end{subfigure}
    \caption{\textit{Puzzle} results using fine-tuned LLaDA 1.5~\citep{llada1.5}.}
    \label{fig:puzzles_llada1.5-finetuned}
\end{figure}

We fine-tuned the LLaDA 1.5 model on each task separately. For each task, we generated distinct training and validation sets of 20,000 and 5,000 examples, respectively. The model was trained for 10 epochs using the AdamW optimizer with a batch size of 32, a learning rate of $1 \times 10^{-5}$, a warmup rate of 0.05, and a cosine scheduler. We employed LoRA~\citep{hu2022lora} for efficient training, configured with $r=128$ and $\alpha=256$, and applied it exclusively to the query, key, and value projection layers, with a dropout rate of 0.05. All experiments were conducted on a single A100 GPU and took approximately 2 hours to complete.
As detailed in \cref{fig:waiting_line_llada1.5-finetuned,fig:puzzles_llada1.5-finetuned}, fine-tuning improved overall performance but was insufficient to overcome tasks inherently challenging for parallel decoding, such as \textit{Shuffle},  \textit{Latin Square}, and \textit{Insert/Remove/Replace Random}.

\subsection{Chain-of-Thought (CoT) Prompting}
\label{app:sec:cot}
\begin{figure}[htbp]
    \centering
    \begin{subfigure}{0.32\textwidth}
        \centering
        \includegraphics[width=\linewidth]{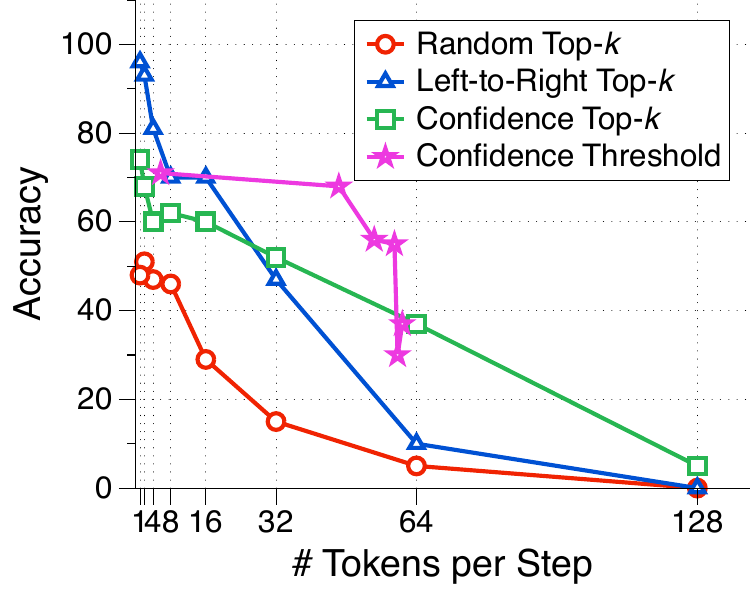}
        \caption{Replace Index}
    \end{subfigure}
    \hfill
    \begin{subfigure}{0.32\textwidth}
        \centering
        \includegraphics[width=\linewidth]{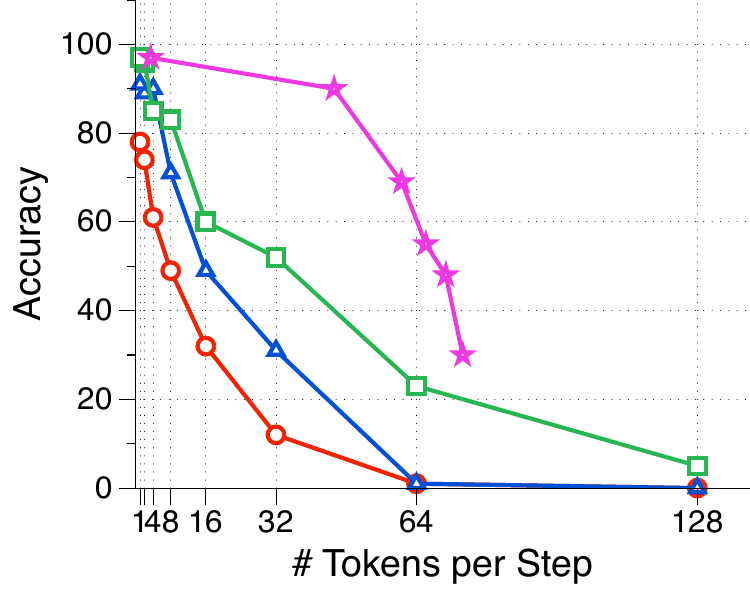}
        \caption{Replace Random}
    \end{subfigure}
    \hfill
    \begin{subfigure}{0.32\textwidth}
        \centering
        \includegraphics[width=\linewidth]{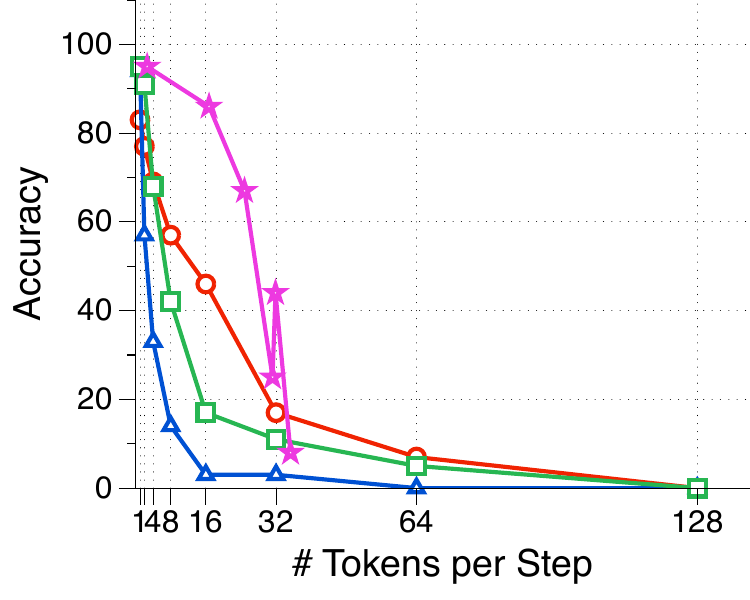}
        \caption{Shuffle}
    \end{subfigure}
    \caption{One-shot CoT results on \textit{Replace Index, Replace Random}, and \textit{Shuffle}. The one-shot examples for each sample are generated by GPT-4.1 Mini~\citep{gpt4.1}.}
    \label{fig:results_cot}
\end{figure}

To investigate the impact of explicit reasoning on LLaDA 1.5's performance on the \textit{Waiting Line} tasks, we tested Chain-of-Thought (CoT) prompting. The one-shot examples were generated using OpenAI's GPT-4.1-mini-2025-04-14 API~\citep{gpt4.1}. 

The output token limit was increased from 32 to 256 to accommodate the verbose, multi-step nature of the CoT outputs. Our prompt was structured to include the trigger phrase, ``Think step by step and give your final answer at the end as `The answer is: \{answer\}'." after the user's question. The full prompt template is shown in \cref{tab:prompt_cot}.

\begin{table}[h]
\caption{An example of an one-shot example for \textit{Shuffle} (CoT) generated with GPT-4.1 Mini~\citep{gpt4.1}.}
\label{tab:prompt_cot}
\small
\centering
\begin{tabularx}{\linewidth}{p{1.5cm}X}
    \toprule
    Question & You are managing a waiting line at a customer service desk.
    
    The waiting line should be randomly shuffled to ensure fair service distribution: 
    
    [``Thomas Holmes", ``Keith Ramos", ``Victoria Collins", ``Roger Hughes", ``Austin Dunn"].
    
    Please randomly shuffle the list. Ensure the sequence is different 
    from the original.
    
    Think step by step and give your final answer at the end as `The answer is: \{answer\}'. \\
    \midrule
    Response & Step 1: Identify the original list:
    
    [``Thomas Holmes", ``Keith Ramos", ``Victoria Collins", ``Roger Hughes", ``Austin Dunn"]

    \ 

    Step 2: Randomly shuffle the list ensuring the new sequence is different from the original.
    
    One possible shuffled sequence could be:
    
    [``Austin Dunn", ``Victoria Collins", ``Roger Hughes", ``Thomas Holmes", ``Keith Ramos"]

    \ 

    Step 3: Verify the shuffled list is different from the original:
    
    Original first element: ``Thomas Holmes"
    
    Shuffled first element: ``Austin Dunn"
    
    They are different, so the shuffle is valid.

    \ 

    The answer is: [``Austin Dunn", ``Victoria Collins", ``Roger Hughes", ``Thomas Holmes", ``Keith Ramos"] \\
    \bottomrule
\end{tabularx}
\end{table}

As shown in \cref{fig:results_cot}, Chain-of-Thought (CoT) improves performance, especially for parallel decoding, but at the cost of a higher token count.

\subsection{Remasking Samplers: RCR and ReMDM}
\label{app:sec:remasking}
\begin{figure}[htbp]
    \centering
    \begin{subfigure}{0.32\textwidth}
        \centering
        \includegraphics[width=\linewidth]{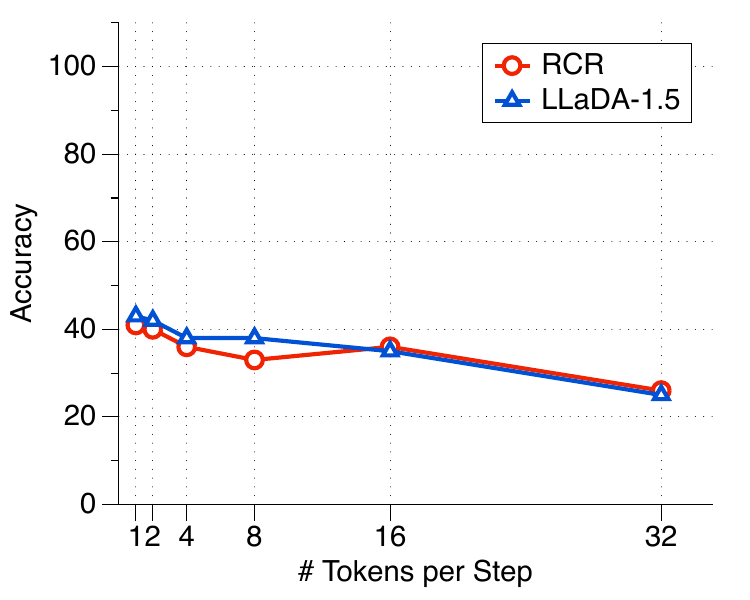}
        \caption{Replace Index}
    \end{subfigure}
    \hfill
    \begin{subfigure}{0.32\textwidth}
        \centering
        \includegraphics[width=\linewidth]{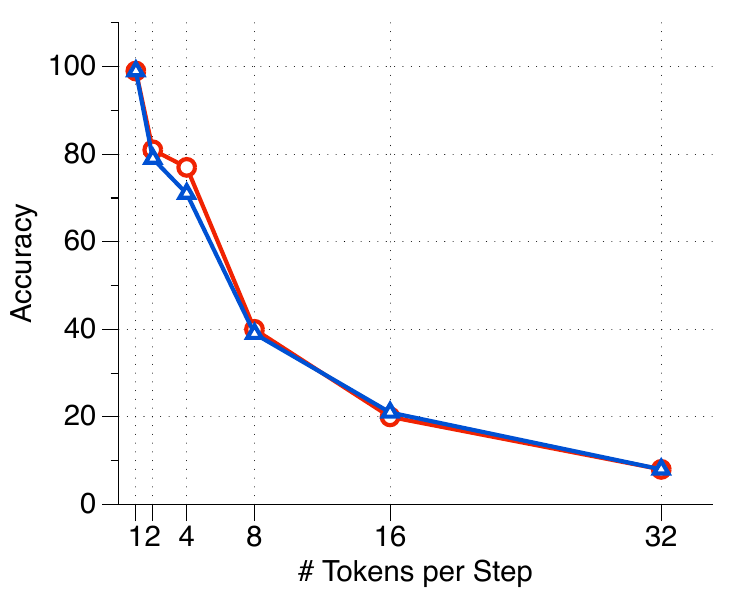}
        \caption{Replace Random}
    \end{subfigure}
    \hfill
    \begin{subfigure}{0.32\textwidth}
        \centering
        \includegraphics[width=\linewidth]{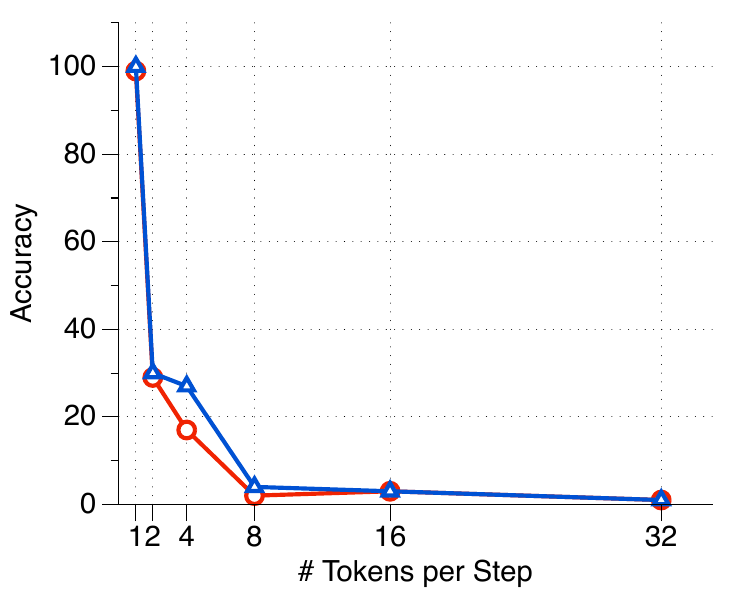}
        \caption{Shuffle}
    \end{subfigure}
    \caption{RCR~\citep{rcr} results on \textit{Replace Index, Replace Random}, and \textit{Shuffle}.}
    \label{fig:results_rcr}
\end{figure}
\begin{figure}[htbp]
    \centering
    \begin{subfigure}{0.32\textwidth}
        \centering
        \includegraphics[width=\linewidth]{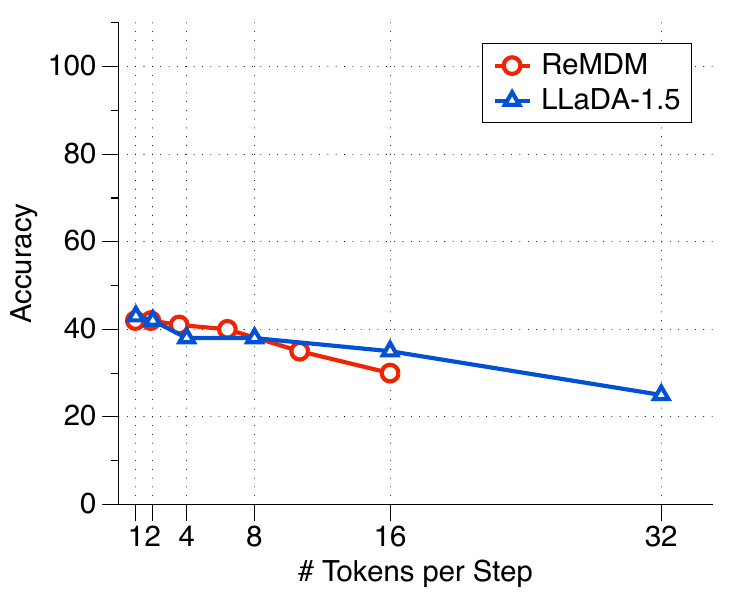}
        \caption{Replace Index}
    \end{subfigure}
    \hfill
    \begin{subfigure}{0.32\textwidth}
        \centering
        \includegraphics[width=\linewidth]{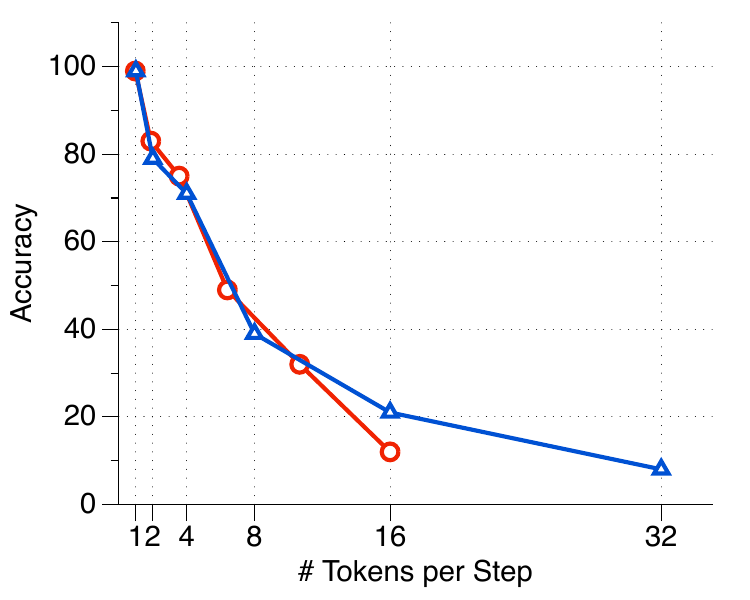}
        \caption{Replace Random}
    \end{subfigure}
    \hfill
    \begin{subfigure}{0.32\textwidth}
        \centering
        \includegraphics[width=\linewidth]{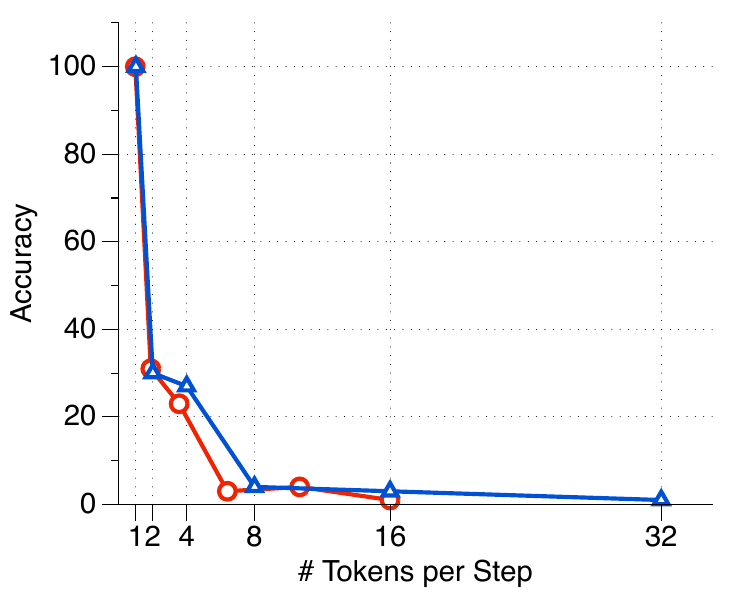}
        \caption{Shuffle}
    \end{subfigure}
    \caption{ReMDM~\citep{remdm} results on \textit{Replace Index, Replace Random}, and \textit{Shuffle}.}
    \label{fig:results_remdm}
\end{figure}

We evaluated two training-free decoding strategies that allow for token revision: Running Confidence Remasking (RCR)~\citep{rcr} and Remasking Diffusion Model (ReMDM)~\citep{remdm}, using their official codebases. RCR works by tracking the running maximum confidence for each position over time and continuously remasking tokens that remain uncertain. In contrast, ReMDM performs $n$ revision steps when most of the output is generated, targeting the $k$ tokens with the lowest confidence. For our experiments with ReMDM, we set $k=1$ for simplicity and tested various values for $n$. We found that a single revision step ($n=1$) achieved optimal performance, with no further improvements observed from additional steps.

In our experiments (\cref{fig:results_rcr,fig:results_remdm}), we notice no significant improvements from RCR or ReMDM.

\subsection{Discrete Diffusion with Uniform Transition Matrix}
\label{app:secc:sedd}

This section evaluates two variants of the 90-million-parameter SEDD model~\citep{sedd}: absorb and uniform. For the absorb variant, we employed the publicly available pre-trained model\footnote{\url{https://github.com/louaaron/Score-Entropy-Discrete-Diffusion}}. For the uniform variant, we pre-trained the model from scratch on the OpenWebText dataset~\citep{Gokaslan2019OpenWeb}. The pre-training process required 24 hours on eight A100 GPUs.

Subsequently, both variants were fine-tuned on each task. We trained for 32 epochs using the AdamW optimizer with a batch size of 64, a learning rate of $3 \times 10^{-4}$, and a cosine scheduler with a 0.25 warmup ratio. Each fine-tuning run was conducted on a single A100 GPU and completed in approximately one hour.


\begin{figure}[htbp]
    \centering
    \begin{subfigure}{0.32\textwidth}
        \centering
        \includegraphics[width=\linewidth]{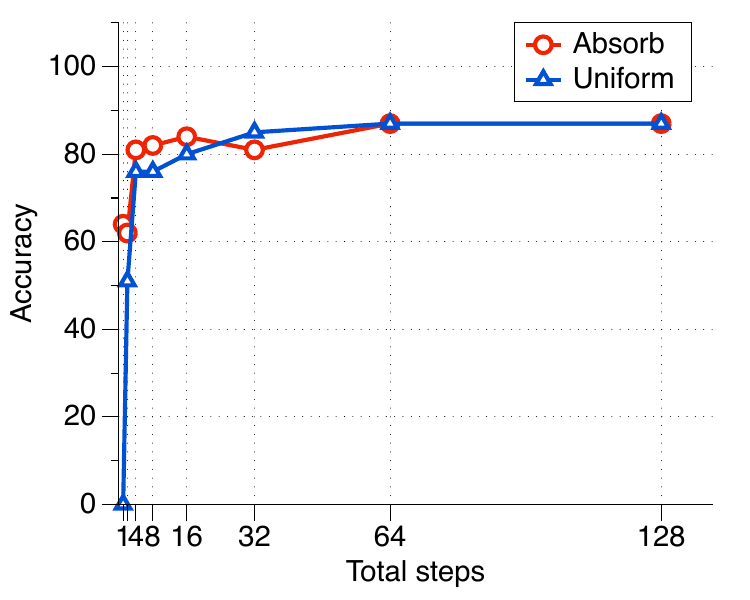}
        \caption{Replace Index}
    \end{subfigure}
    \hfill
    \begin{subfigure}{0.32\textwidth}
        \centering
        \includegraphics[width=\linewidth]{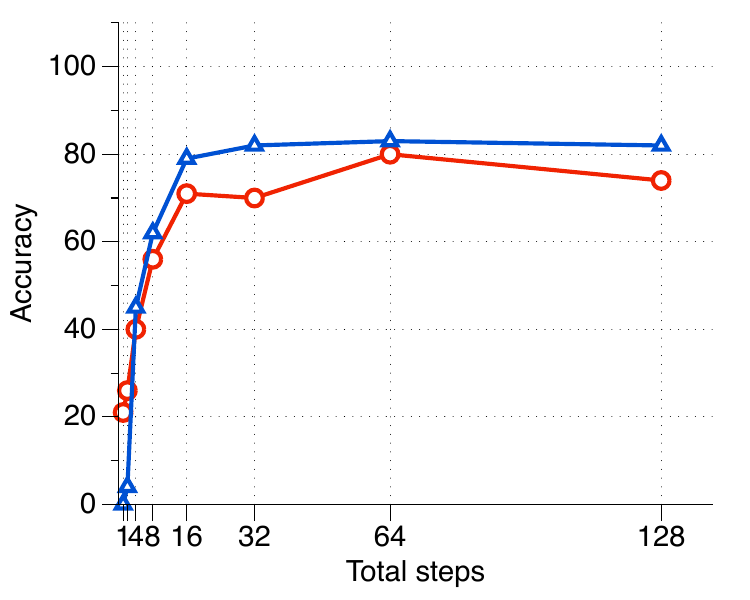}
        \caption{Replace Random}
    \end{subfigure}
    \hfill
    \begin{subfigure}{0.32\textwidth}
        \centering
        \includegraphics[width=\linewidth]{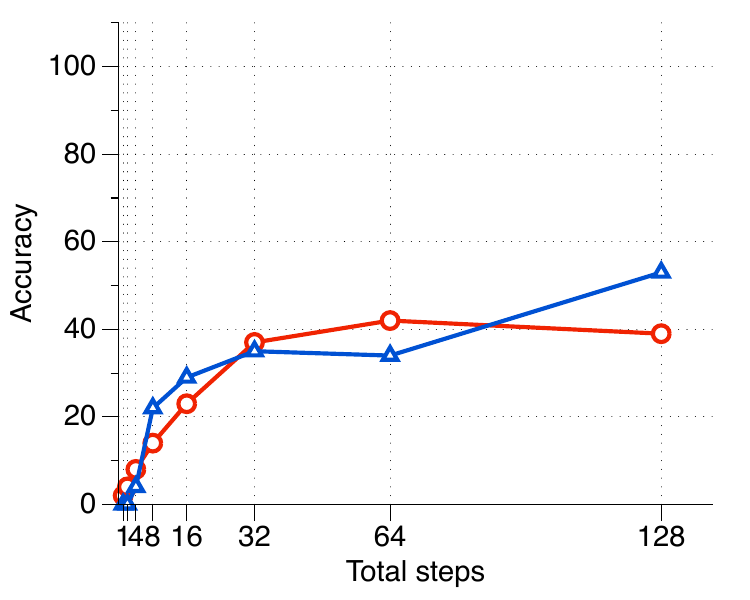}
        \caption{Shuffle}
    \end{subfigure}
    \caption{SEDD~\citep{sedd} results on \textit{Replace Index, Replace Random}, and \textit{Shuffle}.}
    \label{fig:results_sedd}
\end{figure}
\section{Task Characterization}

\subsection{Characterizing Tasks by Parallelizability and Decoding Order}\label{app:sec:clustering}

We analyze how the unmasking order and the degree of parallel token decoding influence downstream scores for each task in our benchmark (\cref{fig:clustering}). We plot the tasks in a 2D scatter plot where the x-axis represents parallelizability and the y-axis represents the benefit from different decoding orders. Tasks on the right are harder to parallelize than those on the left. Tasks on the top benefit from a semi-autoregressive order, while those on the bottom perform better with any-order decoding. The results are averaged over the LLaDA 1.5 and Dream 7B models and across four unmasking strategies: \textit{Random Top-k}, \textit{Confidence Top-k}, \textit{Margin Top-k}, and \textit{Entropy Top-k}.

To define the x-axis (parallelizability), we evaluate the model with an increasing number of parallel tokens, $k$, while using a full any-order decoding scheme. We then compute a metric representing the point at which performance begins to drop. This metric is the center of mass of the scores, calculated as $\frac{\sum_k (k \cdot \text{score}_k)}{\sum_k \text{score}_k}$. A smaller value on the x-axis therefore indicates that the task is harder to parallelize. We observe that the \textit{Shuffle} task is the most difficult to parallelize. \textit{Words-to-Sentence} is also challenging, as the model must successfully connect words and insert appropriate connecting language while maintaining grammatical correctness. Interestingly, the \textit{Latin Square} task is harder to parallelize than \textit{Sudoku}, potentially because multiple valid solutions exist, forcing the model to commit to one solution during generation. Similarly, \textit{Insert Random} proves harder to parallelize than \textit{Insert Index}.

To define the y-axis (influence of decoding order), we analyze how semi-autoregressive decoding affects performance. We fix the number of parallel tokens to $k=2$ and evaluate the downstream metric for various block lengths. We then fit a line to these scores and use its slope as our metric. A positive slope indicates that performance increases as the block length decreases (enforcing a more autoregressive process), while a negative slope indicates that larger blocks or fully any-order decoding is preferable. Our results show that most tasks have a negative value, suggesting that the models generally prefer any-order decoding. Text generation tasks, in particular, fall into this category.
We also include four existing benchmarks, GSM8K~\citep{gsm8k}, HumanEval~\citep{humaneval}, IFEval~\citep{ifeval}, and MATH~\citep{math} in our analysis. Both tasks appear on the left side of the plot, indicating that models can solve them in a more parallel fashion. This finding underlines the need for more challenging benchmarks like the proposed \ours{} to properly test the capabilities of dLLMs.

Finally, \cref{tab:waiting_line_dep_constr,tab:text_writing_puzzle_dep_constr} provide a comprehensive taxonomy of the evaluated tasks, characterizing them by their ideal token dependency and the scope of their constraints. As detailed in \cref{tab:waiting_line_dep_constr}, \textit{Waiting Line} tasks primarily exhibit zero dependency for deterministic tasks, though tasks involving randomness or shuffling introduce global constraints that significantly increase token dependency. In contrast, \cref{tab:text_writing_puzzle_dep_constr} highlights how \textit{Text Writing} tasks maintain local constraints but scale in dependency from low to very high as the difficulty increases. The \textit{Puzzles} category reflects a similar divergence: \textit{Sudoku} is treated as a deterministic task with zero dependency, while the \textit{Latin Square} task allows for multiple valid solutions, thereby introducing high dependency.

\begin{table}[h]
\centering
\small
\caption{\textit{Waiting Line}: Dependencies and Constraints.}
\resizebox{\textwidth}{!}{%
\begin{tabular}{lcccccccccc}
\toprule
\textbf{Category} & \multicolumn{10}{c}{\textbf{Waiting Line}} \\
\cmidrule(lr){2-11}
\textbf{Task} & Copy & Reverse & \makecell{Replace\\Index} & \makecell{Replace\\Random} & Shuffle & \makecell{Insert\\Index} & \makecell{Insert\\Random} & \makecell{Remove\\Index} & \makecell{Remove\\Random} & Sort \\
\midrule
\makecell[l]{\textbf{(Ideal) Token}\\\textbf{Dependency}} & Zero & Zero & Zero & Medium & \makecell{Very\\High} & Zero & Medium & Zero & Medium & Zero \\
\midrule
\makecell[l]{\textbf{Constraint}\\\textbf{Type}} & N/A & N/A & N/A & Global & Global & N/A & Global & N/A & Global & N/A \\
\bottomrule
\end{tabular}}
\label{tab:waiting_line_dep_constr}
\end{table}

\begin{table}[h]
\centering
\caption{\textit{Text Writing} and \textit{Puzzles}: Dependencies and Constraints.}
\resizebox{0.75\textwidth}{!}{%
\begin{tabular}{lccccccc}
\toprule
\textbf{Category} & \multicolumn{5}{c}{\textbf{Text Writing}} & \multicolumn{2}{c}{\textbf{Puzzles}} \\
\cmidrule(lr){2-6} \cmidrule(lr){7-8}
\textbf{Task} & \makecell{Para-\\phrasing} & \makecell{Summari-\\zation} & \makecell{W2S\\(Easy)} & \makecell{W2S\\(Medium)} & \makecell{W2S\\(Hard)} & Sudoku & \makecell{Latin\\Square} \\
\midrule
\makecell[l]{\textbf{(Ideal) Token}\\\textbf{Dependency}} & Low & Low & Medium & High & \makecell{Very\\High} & Zero & \makecell{Very\\High} \\
\midrule
\makecell[l]{\textbf{Constraint}\\\textbf{Type}} & Local & Local & Local & Local & Local & N/A & Global \\
\bottomrule
\end{tabular}}
\label{tab:text_writing_puzzle_dep_constr}
\end{table}



\subsection{Speed-Quality Trade-off}\label{app:sec:tradeoff_plot}

Figure~\ref{fig:trade-off} illustrates the trade-off between the number of tokens decoded in parallel and the resulting downstream accuracy for each unmasking method. For our experiments, we utilized the LLaDA 1.5 model~\citep{llada1.5}.
The accuracy is averaged over all 17 benchmark tasks across the three categories: \textit{Waiting Line}, \textit{Text Writing}, and \textit{Puzzles}.

The oracle performance, marked by a star in the plot, represents an empirical upper bound. To compute this, we employ the \textit{Confidence Threshold} method.
For each sample, we identify the minimum threshold (equivalent to the minimum number of total performed steps) that achieves the best possible accuracy.
We then average the accuracy and number of steps of these optimal thresholds across all samples and tasks. This simulates a hypothetical unmasking strategy that knows in advance the precise minimum effort required to solve each task perfectly.

The results indicate that for \textit{Top-k} unmasking methods, accuracy degrades rapidly as the number of tokens per step increases. \textit{Confidence Threshold}, on the other hand, exhibits a much more favorable curve, where the drop-off in accuracy occurs at a significantly higher number of parallel tokens. Nevertheless, a substantial gap remains when compared to the oracle. While all methods perform comparably when unmasking a single token per step, the oracle maintains high accuracy at a far greater decoding parallelism. This suggests that, in theory, unmasking could be performed much faster without sacrificing accuracy if a more effective sampling method were available.

\subsection{Speed-Quality Trade-off in Existing Benchmarks}
\label{app:existing_trade-off}

In this section, we assess the behaviour of parallel decoding on tasks not specifically designed for benchmarking parallel decoding performance, reporting speed–quality trade-offs. \cref{fig:existing_trade-off} shows the results for GSM8K~\citep{gsm8k}, IFEval~\citep{ifeval}, and MATH~\citep{math}, using LLaDA 1.5 with varying degrees of parallelism. All benchmarks exhibit a similar trend of decreasing accuracy as the number of tokens per step increases. Even when decoding one token per step, these benchmarks already exhibit substantial difficulty, which implies that model capacity remains a dominant factor that cannot be separated from the analysis. In contrast to our benchmark tasks, all existing tasks show slight accuracy degradation as parallelism increases. Due to these two properties, existing benchmarks do not permit a clear evaluation of whether a parallel decoding method can effectively control or adapt its level of parallelism in relation to task difficulty.

\begin{figure}[ht!]
    \centering
    \begin{subfigure}{0.32\textwidth}
        \centering
            \hfill
        \includegraphics[width=\linewidth]{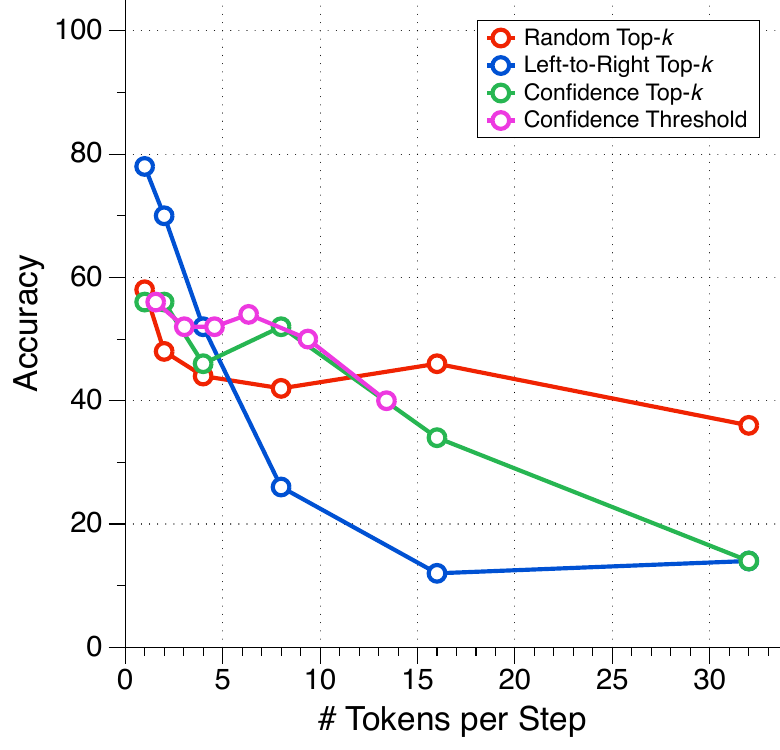}
        \caption{Speed-quality trade-offs on GSM8K using LLaDA 1.5.}
    \end{subfigure}
    \hfill
    \begin{subfigure}{0.32\textwidth}
        \centering
        \includegraphics[width=\linewidth]{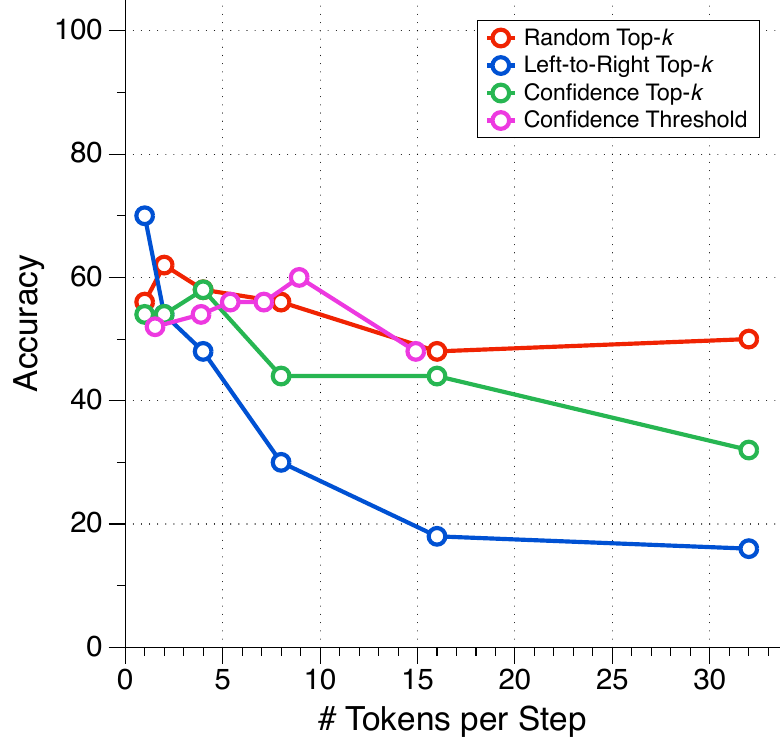}
        \caption{Speed-quality trade-offs on IFEval using LLaDA 1.5.}
    \end{subfigure}
    \hfill
    \begin{subfigure}{0.32\textwidth}
        \centering
        \includegraphics[width=\linewidth]{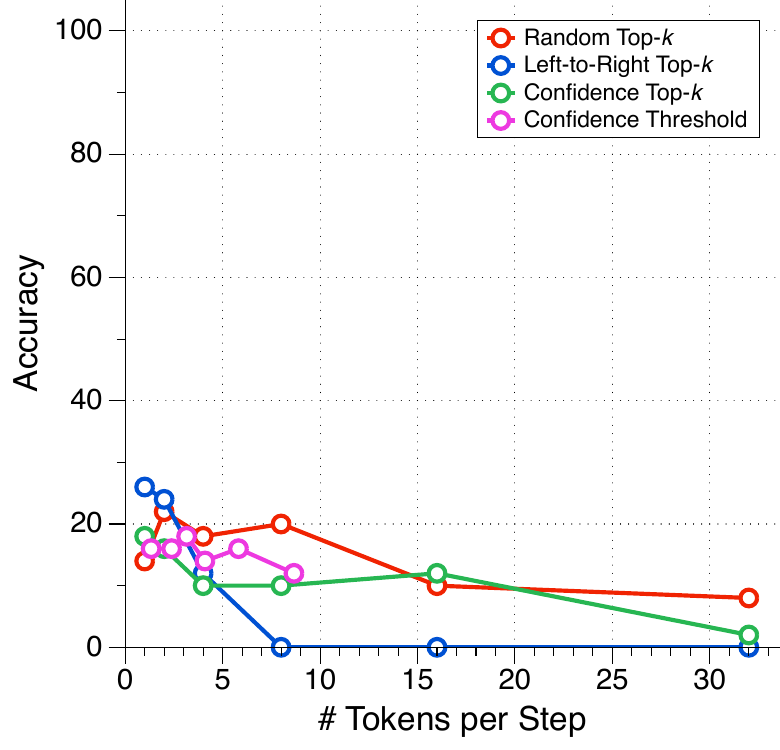}
        \caption{Speed-quality trade-offs on MATH using LLaDA 1.5.}
    \end{subfigure}
    \caption{Speed-quality trade-offs of existing benchmarks.}
    \label{fig:existing_trade-off}
\end{figure}

\subsection{Time-Quality Trade-off}
\label{app:latency}

We provide actual wall-clock latency measurements and their corresponding time–quality curves in \cref{fig:latency}. We obtained these results using a single A100 80GB GPU with a batch size of 1.

Notably, the time-quality curve mirrors the speed–quality curve shown in \cref{fig:trade-off} (where the x-axis represents tokens per step). This indicates that analyzing the trade-off using tokens per step rather than hardware-dependent latency measurements is sufficient and reliable.

\begin{figure}[ht!]
\centering
    \includegraphics[width=0.4\linewidth]{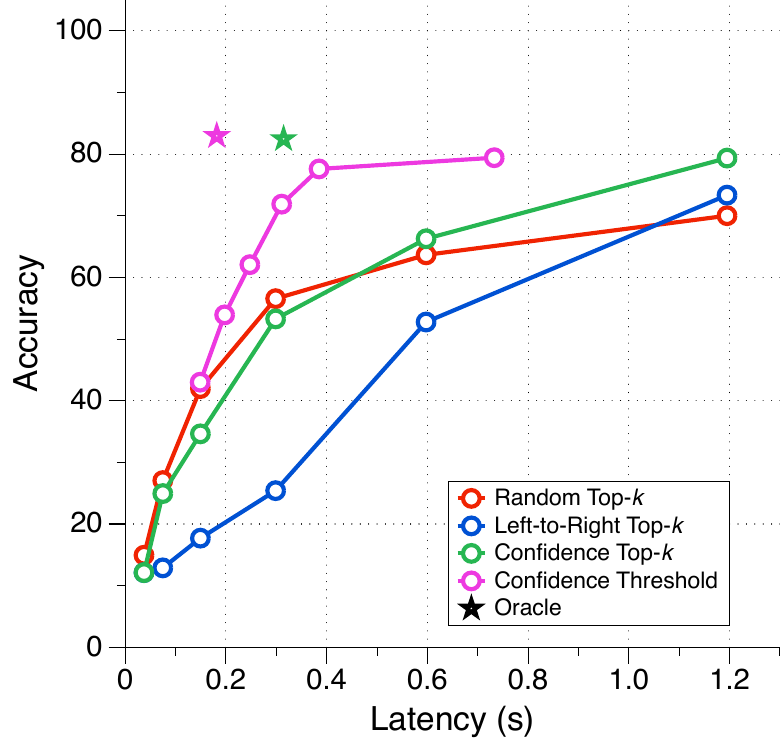}
    \caption{
Time-quality trade-off.
}
    \label{fig:latency}
\end{figure}

\section{Failure Cases}

We present several failure cases observed when using LLaDA 1.5~\citep{llada1.5} with \textit{Confidence Top-k} ($k=2$). These examples highlight specific weaknesses inherent to parallel decoding.

\paragraph{Incorrect Swapping of Names in \textit{Waiting Line (Shuffle)}}
\begin{itemize}[
    widest=\textbf{Model Output:}, 
    leftmargin=*,                 
    align=right,                    
]
\item[\textbf{Input:}] [``Jack Burns", ``Jessica Martin", ``Tyler Lopez"]
\item[\textbf{Model Output:}] [``Jessica Lopez", ``Jack Burns", ``Tyler Martin"]
\item[\textbf{Analysis:}] The model incorrectly swaps the first and last names of two persons. This flaw highlights the risk of the conditional independence assumption in dLLMs.
\end{itemize}

\paragraph{Word Repetition in \textit{Summarization}}
\begin{itemize}[
    widest=\textbf{Model Output:}, 
    leftmargin=*,                 
    align=right,                    
]
\item[\textbf{Input:}] ``Summarize the following conversation. [...] Daniel: Cool."
\item[\textbf{Model Output:}] ``Missy gets out of work at 6 pm and she drinks drinks after dinner."
\item[\textbf{Analysis:}] The model incorrectly repeats the word ``drinks." This is a common artifact in parallel decoding, where the lack of one-by-one conditioning can cause the model to generate the same token in adjacent positions.
\end{itemize}

\paragraph{Grammatical Error in \textit{Paraphrasing}}
\begin{itemize}[
    widest=\textbf{Model Output:}, 
    leftmargin=*,                 
    align=right,                    
]
\item[\textbf{Input:}] ``Paraphrase the sentence: `How do I deal with a rude student?'"
\item[\textbf{Model Output:}] ``What strategies can I use to handle an disrespectful student?"
\item[\textbf{Analysis:}] The output contains a grammatical error in ``an disrespectful". This demonstrates that the model fails to enforce local syntactic rules, as the choice of article depends on the subsequent token, a dependency that is weakened in a parallel generation scheme.
\end{itemize}

\paragraph{Grammatical Error in \textit{Words-to-Sentence}}
\begin{itemize}[
    widest=\textbf{Model Output:}, 
    leftmargin=*,                 
    align=right,                    
]
\item[\textbf{Input:}] ``Construct a single, coherent sentence using the words ball, bridge, elephant, and open."
\item[\textbf{Model Output:}] ``The elephant walked across the the old bridge, looking carrying an open ball."
\item[\textbf{Analysis:}] The generated sentence is syntactically flawed, due to both a duplicated article (``the the") and a wrong verb sequence (``looking carrying"). This failure to form a coherent clause underscores the difficulty parallel decoding has in maintaining long-range grammatical structure.
\end{itemize}

\section{Comprehensive Experimental Results}
\label{app:sec:llm_and_full_results}

\subsection{Comparison with Large Language Models}
\label{app:sec:llm_results}

\Cref{app:tab:benchmark_waiting_line,app:tab:benchmark_text_writing,app:tab:benchmark_puzzle} present a performance comparison between open-source dLLMs, a closed-source dLLM Mercury~\citep{mercury}, and several popular autoregressive LLMs, including Qwen~\citep{qwen2.5,qwen3}, Llama~\citep{llama3}, and Claude Haiku 3.5~\citep{haiku}.
To create a more challenging benchmark for this comparison, we increased the list length of the \textit{Waiting Line} task to $n=15$ with 128 output tokens.
For the open-source dLLMs, we set the number of tokens decoded in parallel to $k=2$ and did not use semi-autoregressive decoding.
In contrast, we had no control over Mercury's parallel decoding settings, which we assume are handled internally.

The results indicate that even a commercial model like Mercury cannot achieve a perfect score on \textit{Shuffle}, yet it successfully solves \textit{Reverse} with 100\% accuracy.
Furthermore, the relatively simple task of producing a \textit{Latin Square} proved rather challenging for Mercury, whereas it was easier for most LLMs that decode tokens one by one.
Nevertheless, Mercury exceeds or matches the performance of open-source models on most tasks.

\clearpage

\begin{table}[ht!]
    \centering
    \caption{Benchmark results for \textit{Waiting Line}.}
    \resizebox{\linewidth}{!}{
    \begin{tabular}{l c c c c c c c c c c } 
        \toprule
        \multirow{2.5}{*}{\textbf{Model / Dataset}} & \multicolumn{10}{c}{\textbf{Waiting Line}} \\
        \cmidrule(lr){2-11}
        & Copy & Sort & Rev. & Shuff. & Ins. Ind. & Ins. Rand. & Rem. Ind. & Rem. Rand. & Rep. Ind. & Rep. Rand. \\
        \midrule
LLaDA~\citep{llada}  & \textbf{100.0} & \textbf{0.0} & 97.0 & 21.0 & 14.0 & 85.0 & 18.0 & \underline{82.0} & 15.0 & 58.0 \\
LLaDA 1.5~\citep{llada1.5}  & \textbf{100.0} & \textbf{0.0} & \underline{99.0} & 18.0 & 14.0 & \underline{86.0} & \underline{21.0} & 77.0 & 15.0 & 70.0 \\
Dream 7B~\citep{dream}  & \textbf{100.0} & \textbf{0.0} & 91.0 & \underline{87.0} & \underline{27.0} & 81.0 & \textbf{32.0} & 62.0 & \underline{27.0} & \textbf{99.0} \\
DiffuCoder~\citep{diffucoder}  & \underline{58.0} & \textbf{0.0} & 28.0 & 32.0 & 9.0 & 51.0 & 10.0 & 35.0 & 19.0 & 57.0 \\
Mercury~\citep{mercury}  & \textbf{100.0} & \textbf{0.0} & \textbf{100.0} & \textbf{92.0} & \textbf{63.0} & \textbf{95.0} & 15.0 & \textbf{90.0} & \textbf{28.0} & \underline{98.0} \\
\midrule
Qwen2.5 3B~\citep{qwen2.5}  & \textbf{100.0} & 0.0 & 4.0 & 10.0 & 39.0 & \underline{96.0} & 19.0 & 72.0 & 19.0 & 53.0 \\
Qwen2.5 7B~\citep{qwen2.5}  & \textbf{100.0} & 1.0 & 65.0 & 85.0 & 39.0 & \textbf{100.0} & 11.0 & \underline{73.0} & 16.0 & 88.0 \\
Qwen3 4B~\citep{qwen3}  & \textbf{100.0} & 0.0 & 75.0 & \underline{97.0} & \textbf{77.0} & 89.0 & \textbf{43.0} & 47.0 & \textbf{67.0} & \underline{92.0} \\
Llama 3.1 8B~\citep{llama3}  & \textbf{100.0} & \textbf{64.0} & \textbf{100.0} & 96.0 & \underline{50.0} & 93.0 & 22.0 & 63.0 & 23.0 & 88.0 \\
Llama 3.2 3B~\citep{llama3}  & \textbf{100.0} & \underline{30.0} & \underline{83.0} & 92.0 & 20.0 & 35.0 & 17.0 & 64.0 & 13.0 & 70.0 \\
Claude Haiku 3.5~\citep{haiku}  & \textbf{100.0} & 5.0 & \textbf{100.0} & \textbf{100.0} & 31.0 & \textbf{100.0} & \underline{29.0} & \textbf{100.0} & \underline{32.0} & \textbf{100.0} \\
        \bottomrule
    \end{tabular}}
    \label{app:tab:benchmark_waiting_line}
\end{table}

\begin{table}[ht!]
    \centering
    \caption{Benchmark results for \textit{Text Writing}.}
    \resizebox{\linewidth}{!}{
    \begin{tabular}{l ccc cc cc cc cc} 
        \toprule
        \multirow{4}{*}{\textbf{Model / Dataset}} & \multicolumn{11}{c}{\textbf{Text Writing}} \\
        \cmidrule(lr){2-12}
        & \multicolumn{3}{c}{Paraphrasing} & \multicolumn{2}{c}{Summarization} & \multicolumn{2}{c}{W2S (easy)} & \multicolumn{2}{c}{W2S (medium)} & \multicolumn{2}{c}{W2S (hard)} \\
        \cmidrule(lr){2-4} \cmidrule(lr){5-6} \cmidrule(lr){7-8} \cmidrule(lr){9-10} \cmidrule(lr){11-12}
        & Grammar & BERT & 1-BLEU & Grammar & ROUGE-L & Grammar & Acc. & Grammar & Acc. & Grammar & Acc. \\
        \midrule

LLaDA~\citep{llada}  & \underline{96.0} & \textbf{95.2} & 82.1 & \underline{85.0} & \textbf{43.2} & 86.0 & \underline{80.0} & \underline{92.0} & 90.0 & \underline{73.0} & 82.0 \\
LLaDA 1.5~\citep{llada1.5}  & 95.0 & \textbf{95.2} & \underline{83.4} & 77.0 & \underline{42.9} & \underline{89.0} & \textbf{83.0} & 88.0 & \underline{92.0} & 72.0 & \textbf{84.0} \\
Dream 7B~\citep{dream}  & 91.0 & 93.5 & 52.5 & \underline{85.0} & 40.3 & \underline{89.0} & 61.0 & 75.0 & 75.0 & 56.0 & \underline{83.0} \\
DiffuCoder~\citep{diffucoder}  & 86.0 & \underline{94.2} & 54.7 & 78.0 & 41.9 & 87.0 & 53.0 & 87.0 & 72.0 & \underline{73.0} & 59.0 \\
Mercury~\citep{mercury}  & \textbf{98.0} & \textbf{95.2} & \textbf{84.4} & \textbf{93.0} & 31.1 & \textbf{99.0} & \underline{80.0} & \textbf{100.0} & \textbf{94.0} & \textbf{99.0} & \textbf{84.0} \\

        \midrule

Qwen2.5 3B~\citep{qwen2.5}  & \textbf{100.0} & 94.7 & 84.8 & 95.0 & 38.2 & \underline{99.0} & 54.0 & \textbf{99.0} & 80.0 & 96.0 & 73.0 \\
Qwen2.5 7B~\citep{qwen2.5}  & \underline{99.0} & \textbf{95.5} & 83.5 & \underline{97.0} & \underline{38.9} & \underline{99.0} & 71.0 & 95.0 & 82.0 & \underline{98.0} & 75.0 \\
Qwen3 4B~\citep{qwen3}  & 98.0 & \underline{94.9} & 86.2 & 93.0 & 34.8 & 97.0 & 78.0 & 95.0 & 81.0 & \textbf{99.0} & \underline{86.0} \\
Llama 3.1 8B~\citep{llama3}  & \underline{99.0} & 93.3 & \underline{92.0} & \underline{97.0} & \textbf{39.4} & 97.0 & \underline{92.0} & 96.0 & \textbf{98.0} & \textbf{99.0} & \textbf{90.0} \\
Llama 3.2 3B~\citep{llama3}  & \underline{99.0} & 92.9 & 88.8 & \underline{97.0} & 36.4 & 98.0 & 75.0 & 96.0 & 94.0 & \textbf{99.0} & 70.0 \\
Claude Haiku 3.5~\citep{haiku}  & \underline{99.0} & 92.3 & \textbf{96.2} & \textbf{99.0} & 34.3 & \textbf{100.0} & \textbf{93.0} & \underline{98.0} & \underline{97.0} & \textbf{99.0} & 84.0 \\
        
        \bottomrule
    \end{tabular}}
    \label{app:tab:benchmark_text_writing}
\end{table}

\begin{table}[ht!]
    \centering
    \caption{Benchmark results for \textit{Puzzle}.}
    \resizebox{0.45\linewidth}{!}{
    \begin{tabular}{l c c} 
        \toprule
        \multirow{2.5}{*}{\textbf{Model / Dataset}} & \multicolumn{2}{c}{\textbf{Puzzle}} \\
        \cmidrule(lr){2-3}
        & Sudoku & Latin Square \\
        \midrule
        LLaDA~\citep{llada} & 14.8 & \underline{36.0} \\
        LLaDA 1.5~\citep{llada1.5} & \underline{30.6} & 35.3 \\
        Dream 7B~\citep{dream} & \textbf{95.4} & 34.7 \\
        DiffuCoder~\citep{diffucoder} & 16.7 & 10.0 \\
        Mercury~\citep{mercury} & 11.1 & \textbf{59.0} \\
        \midrule
        Qwen2.5 3B~\citep{qwen2.5} & \underline{2.8} & 16.0 \\
        Qwen2.5 7B~\citep{qwen2.5} & 1.9 & 88.0 \\
        Qwen3 4B~\citep{qwen3} & 0.0 & \underline{90.0} \\
        Llama 3.1 8B~\citep{llama3} & 0.0 & 24.0 \\
        Llama 3.2 3B~\citep{llama3} & 0.0 & 12.0 \\
        Claude Haiku 3.5~\citep{haiku} & \textbf{21.3} & \textbf{98.0} \\
        \bottomrule
    \end{tabular}}
    \label{app:tab:benchmark_puzzle}
\end{table}

\clearpage

\subsection{Complete Benchmark Results}
\begin{figure}[ht!]
    \centering
    \begin{subfigure}{0.195\textwidth}
        \centering
        \includegraphics[width=\linewidth]{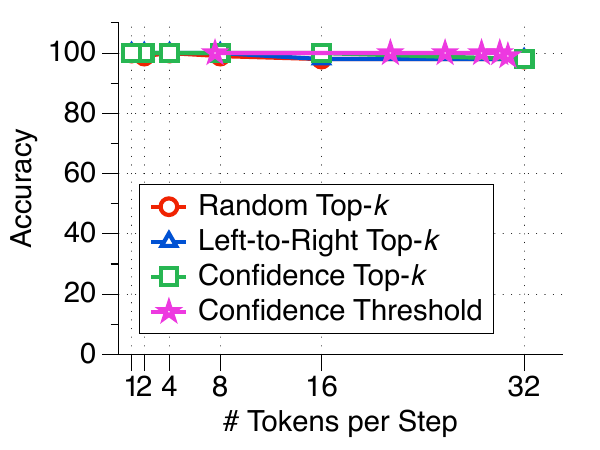}
        \caption{Copy}
    \end{subfigure}
    \hfill
    \begin{subfigure}{0.195\textwidth}
        \centering
        \includegraphics[width=\linewidth]{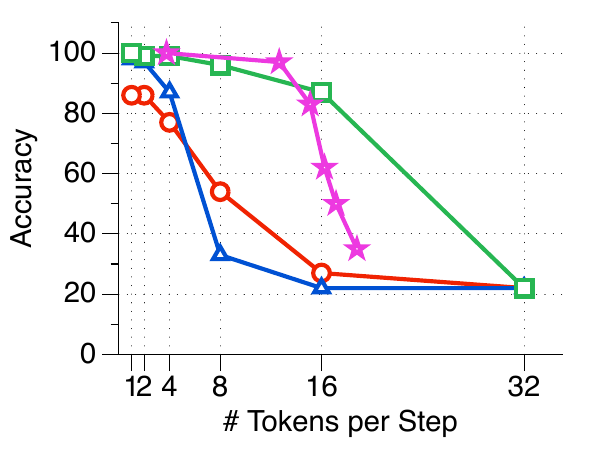}
        \caption{Reverse}
    \end{subfigure}
    \hfill
    \begin{subfigure}{0.195\textwidth}
        \centering
        \includegraphics[width=\linewidth]{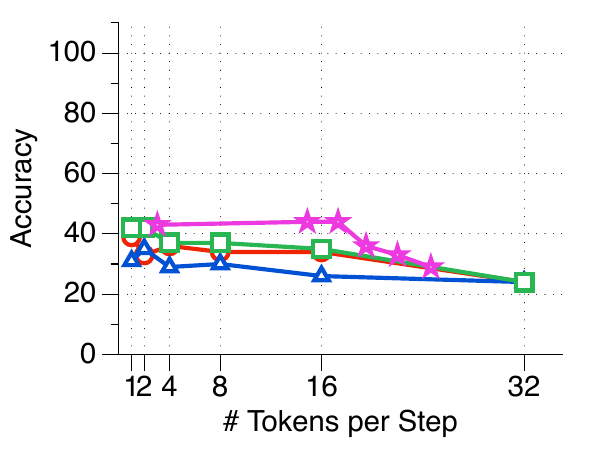}
        \caption{Replace Index}
    \end{subfigure}
    \hfill
    \begin{subfigure}{0.195\textwidth}
        \centering
        \includegraphics[width=\linewidth]{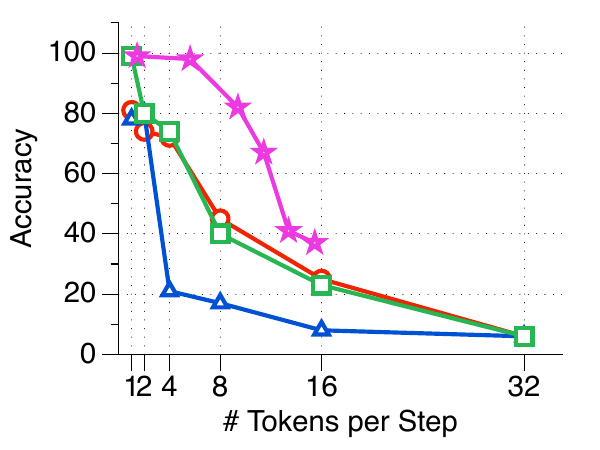}
        \caption{Replace Random}
    \end{subfigure}
    \hfill
    \begin{subfigure}{0.195\textwidth}
        \centering
        \includegraphics[width=\linewidth]{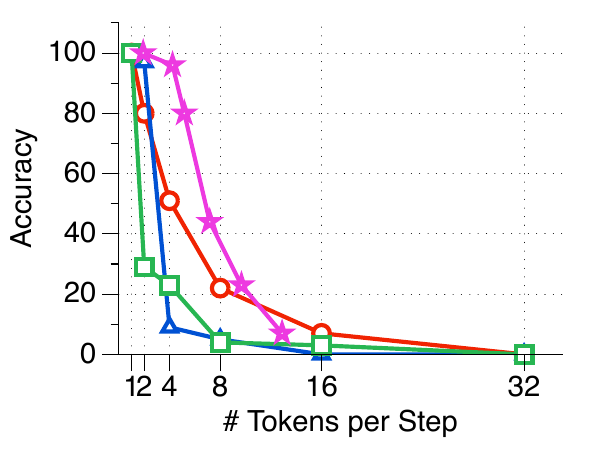}
        \caption{Shuffle}
    \end{subfigure}

\bigskip

    \begin{subfigure}{0.195\textwidth}
        \centering
        \includegraphics[width=\linewidth]{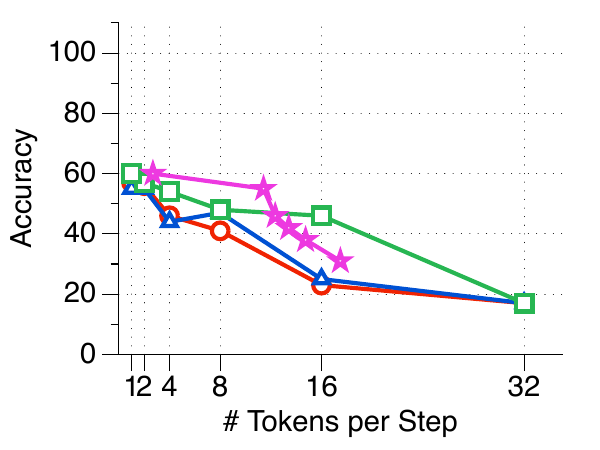}
        \caption{Insert Index}
    \end{subfigure}
    \hfill
    \begin{subfigure}{0.195\textwidth}
        \centering
        \includegraphics[width=\linewidth]{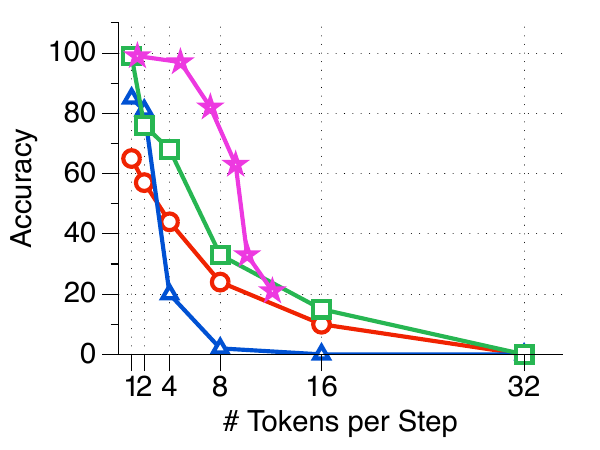}
        \caption{Insert Random}
    \end{subfigure}
    \hfill
    \begin{subfigure}{0.195\textwidth}
        \centering
        \includegraphics[width=\linewidth]{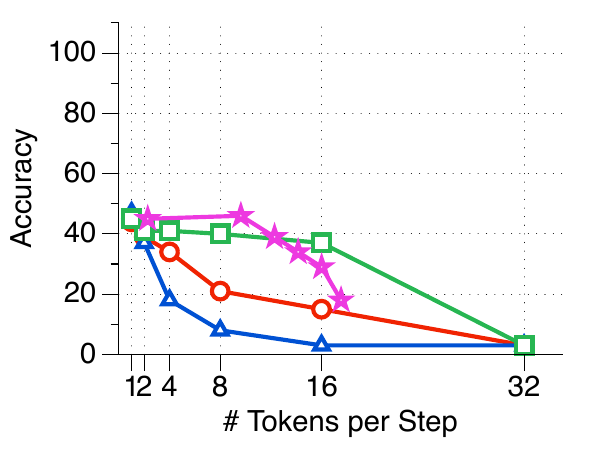}
        \caption{Remove Index}
    \end{subfigure}
    \hfill
    \begin{subfigure}{0.195\textwidth}
        \centering
        \includegraphics[width=\linewidth]{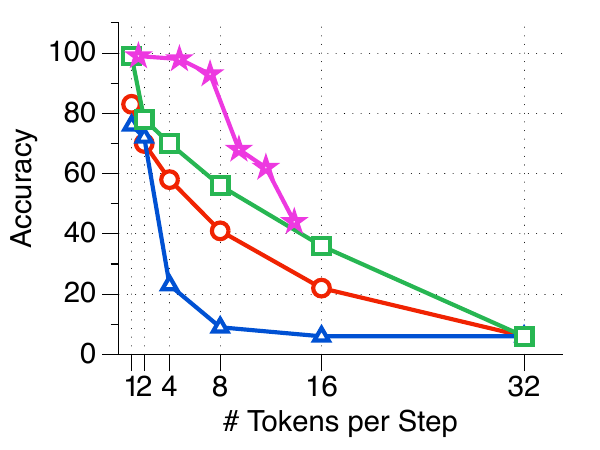}
        \caption{Remove Random}
    \end{subfigure}
    \hfill
    \begin{subfigure}{0.195\textwidth}
        \centering
        \includegraphics[width=\linewidth]{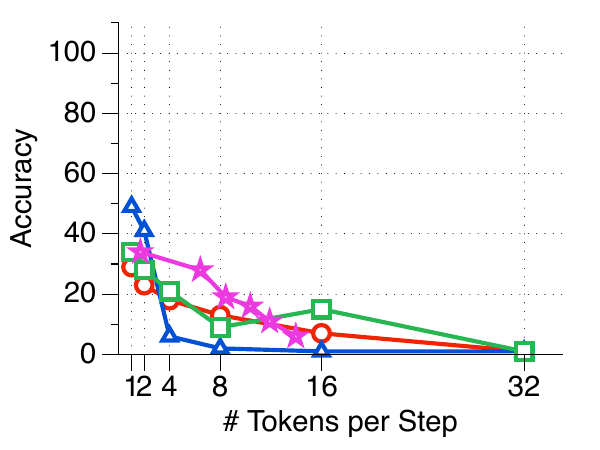}
        \caption{Sort}
    \end{subfigure}

\bigskip

    \begin{subfigure}{0.195\textwidth}
        \centering
        \includegraphics[width=\linewidth]{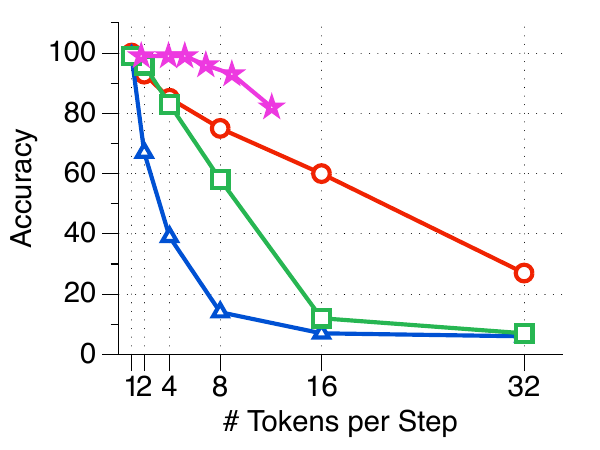}
        \caption{Paraphrasing}
    \end{subfigure}
    \hfill
    \begin{subfigure}{0.195\textwidth}
        \centering
        \includegraphics[width=\linewidth]{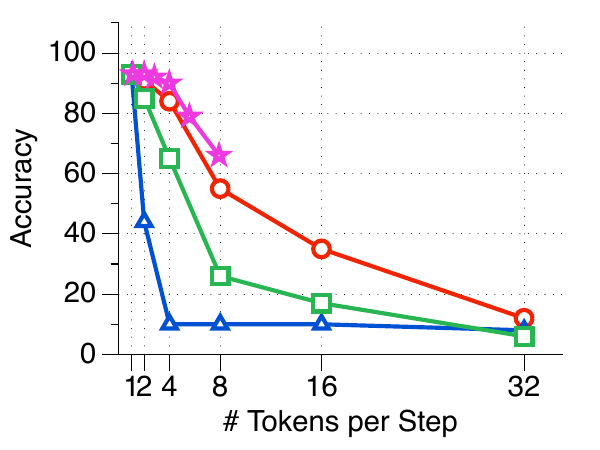}
        \caption{Summarization}
    \end{subfigure}
    \hfill
    \begin{subfigure}{0.195\textwidth}
        \centering
        \includegraphics[width=\linewidth]{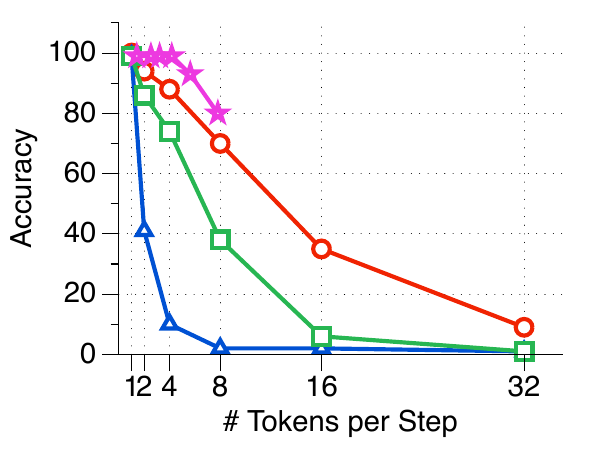}
        \caption{W2S (easy)}
    \end{subfigure}
    \hfill
    \begin{subfigure}{0.195\textwidth}
        \centering
        \includegraphics[width=\linewidth]{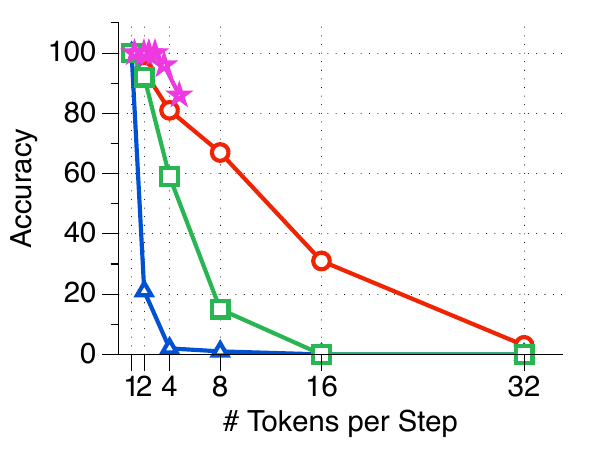}
        \caption{W2S (medium)}
    \end{subfigure}
    \hfill
    \begin{subfigure}{0.195\textwidth}
        \centering
        \includegraphics[width=\linewidth]{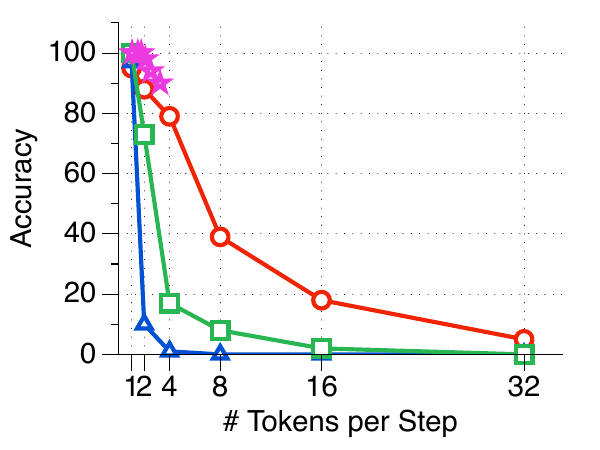}
        \caption{W2S (hard)}
    \end{subfigure}

\bigskip

    \begin{subfigure}{0.195\textwidth}
        \centering
        \includegraphics[width=\linewidth]{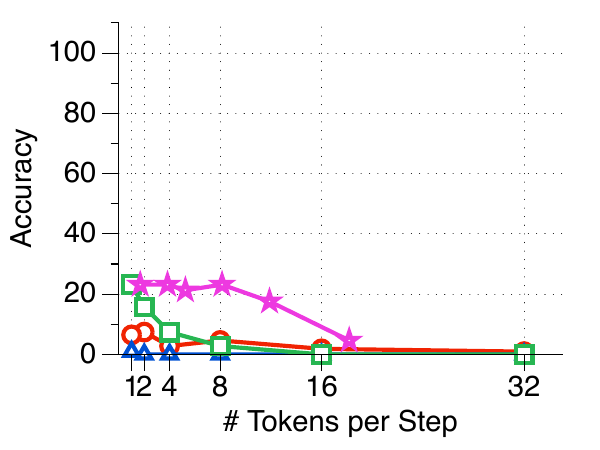}
        \caption{Sudoku}
    \end{subfigure}
    \begin{subfigure}{0.195\textwidth}
        \centering
        \includegraphics[width=\linewidth]{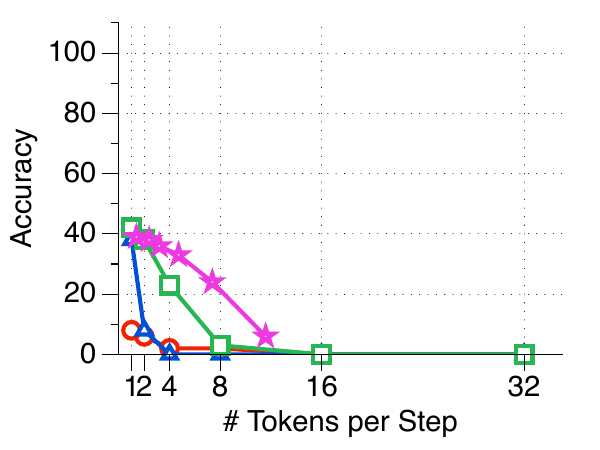}
        \caption{Latin Square}
    \end{subfigure}
    \caption{Full \ours{} results using LLaDA 1.0~\citep{llada}.}
\end{figure}
\begin{figure}[ht!]
    \centering
    \begin{subfigure}{0.195\textwidth}
        \centering
        \includegraphics[width=\linewidth]{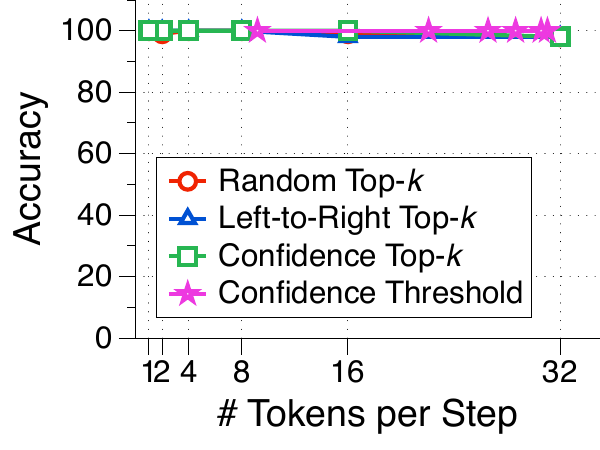}
        \caption{Copy}
    \end{subfigure}
    \hfill
    \begin{subfigure}{0.195\textwidth}
        \centering
        \includegraphics[width=\linewidth]{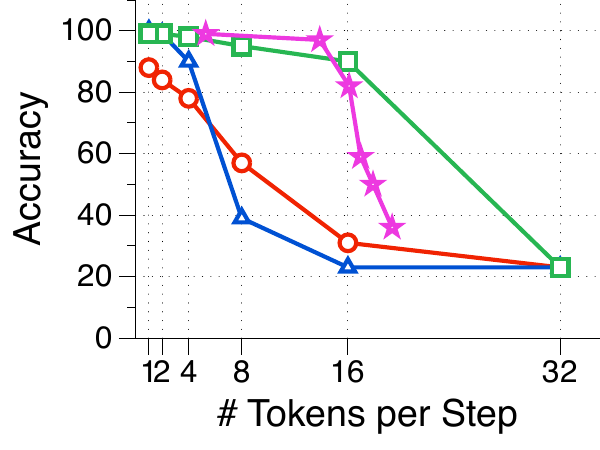}
        \caption{Reverse}
    \end{subfigure}
    \hfill
    \begin{subfigure}{0.195\textwidth}
        \centering
        \includegraphics[width=\linewidth]{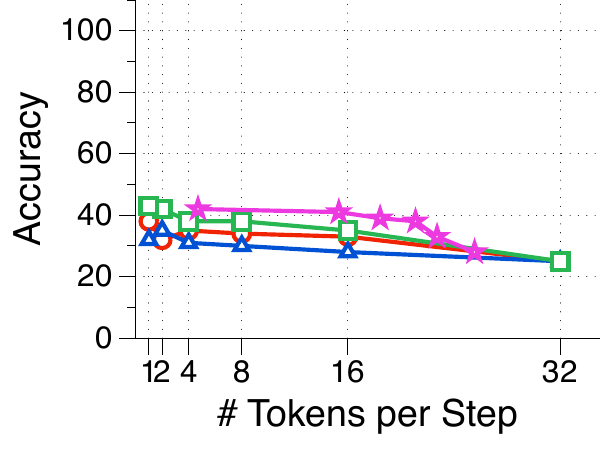}
        \caption{Replace Index}
    \end{subfigure}
    \hfill
    \begin{subfigure}{0.195\textwidth}
        \centering
        \includegraphics[width=\linewidth]{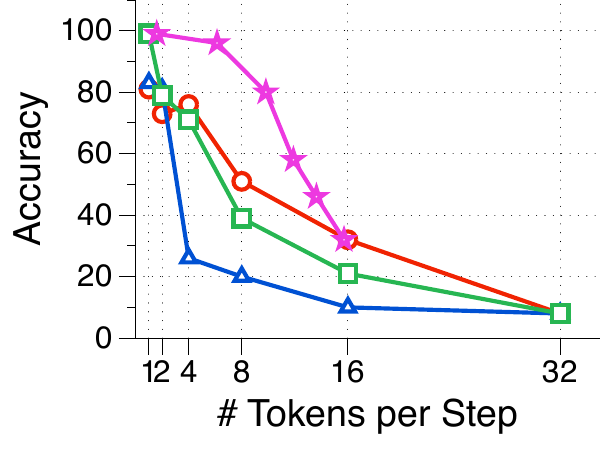}
        \caption{Replace Random}
    \end{subfigure}
    \hfill
    \begin{subfigure}{0.195\textwidth}
        \centering
        \includegraphics[width=\linewidth]{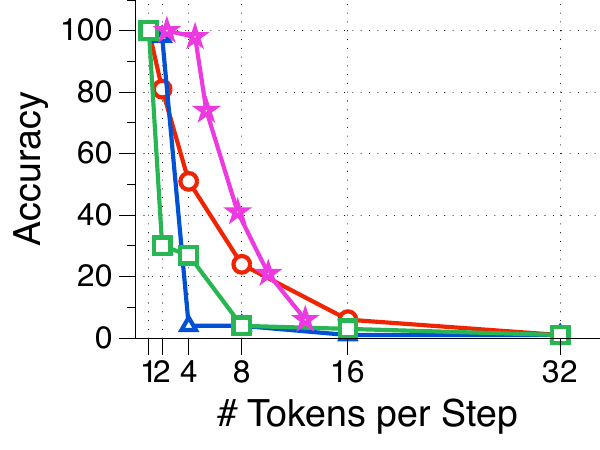}
        \caption{Shuffle}
    \end{subfigure}

\bigskip

    \begin{subfigure}{0.195\textwidth}
        \centering
        \includegraphics[width=\linewidth]{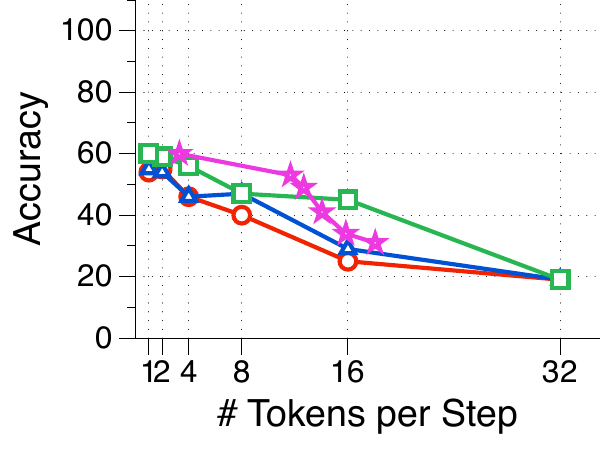}
        \caption{Insert Index}
    \end{subfigure}
    \hfill
    \begin{subfigure}{0.195\textwidth}
        \centering
        \includegraphics[width=\linewidth]{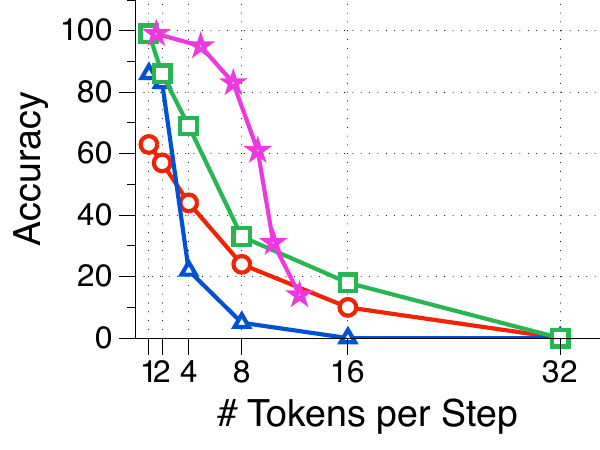}
        \caption{Insert Random}
    \end{subfigure}
    \hfill
    \begin{subfigure}{0.195\textwidth}
        \centering
        \includegraphics[width=\linewidth]{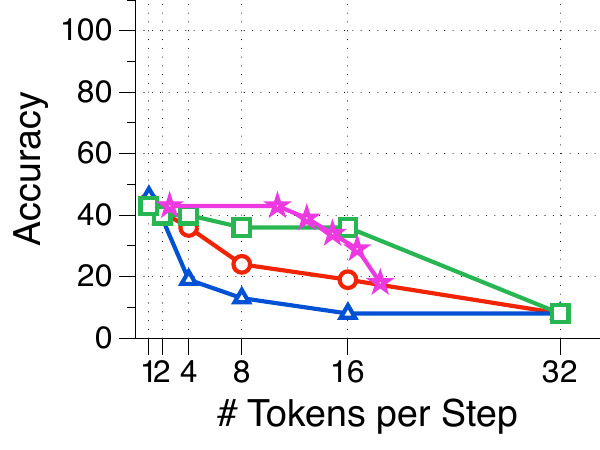}
        \caption{Remove Index}
    \end{subfigure}
    \hfill
    \begin{subfigure}{0.195\textwidth}
        \centering
        \includegraphics[width=\linewidth]{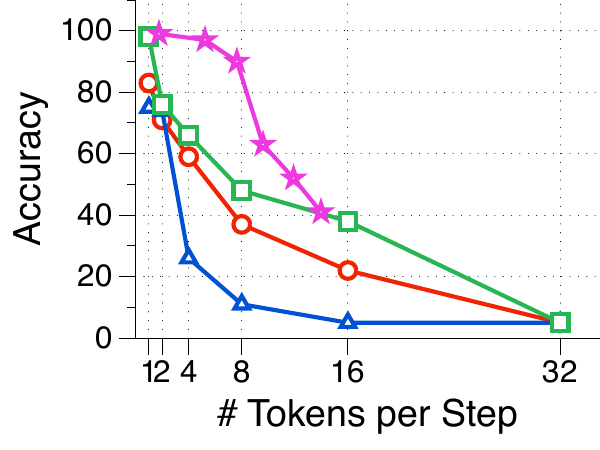}
        \caption{Remove Random}
    \end{subfigure}
    \hfill
    \begin{subfigure}{0.195\textwidth}
        \centering
        \includegraphics[width=\linewidth]{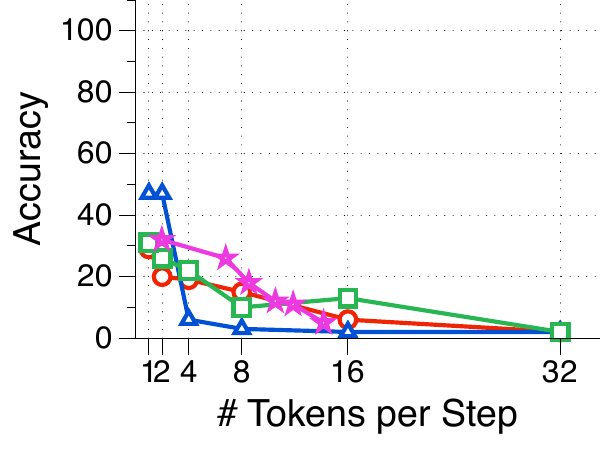}
        \caption{Sort}
    \end{subfigure}
    
\bigskip

    \begin{subfigure}{0.195\textwidth}
        \centering
        \includegraphics[width=\linewidth]{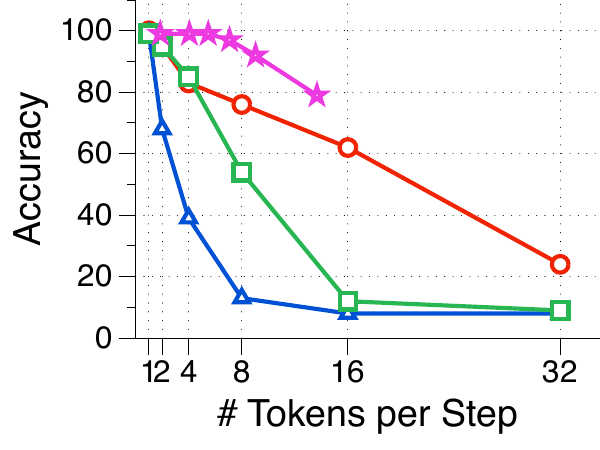}
        \caption{Paraphrasing}
    \end{subfigure}
    \hfill
    \begin{subfigure}{0.195\textwidth}
        \centering
        \includegraphics[width=\linewidth]{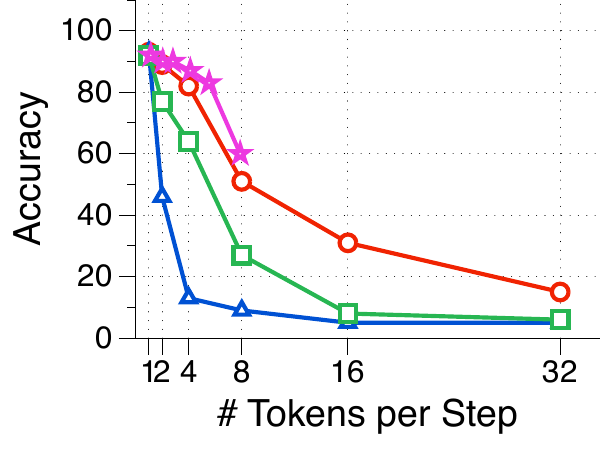}
        \caption{Summarization}
    \end{subfigure}
    \hfill
    \begin{subfigure}{0.195\textwidth}
        \centering
        \includegraphics[width=\linewidth]{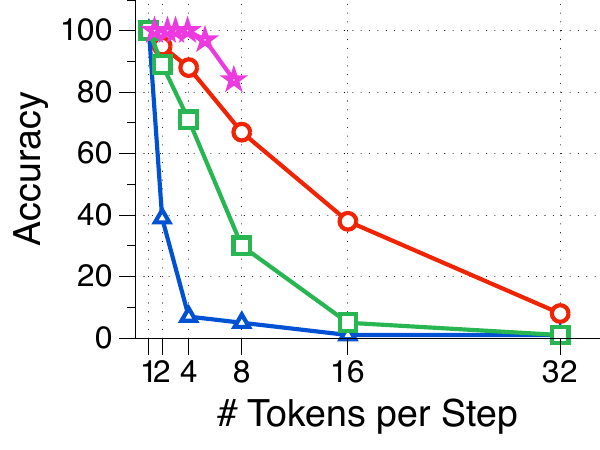}
        \caption{W2S (easy)}
    \end{subfigure}
    \hfill
    \begin{subfigure}{0.195\textwidth}
        \centering
        \includegraphics[width=\linewidth]{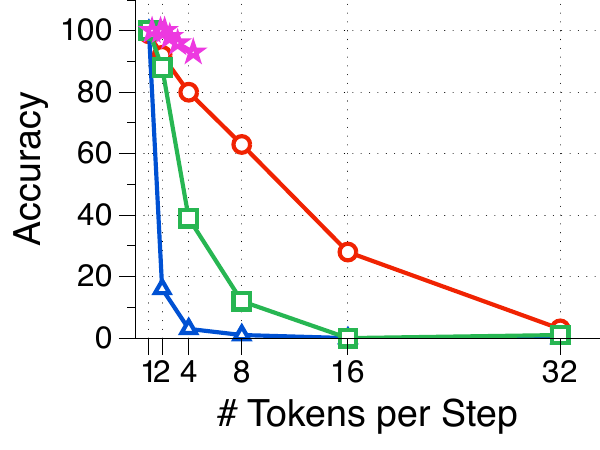}
        \caption{W2S (medium)}
    \end{subfigure}
    \hfill
    \begin{subfigure}{0.195\textwidth}
        \centering
        \includegraphics[width=\linewidth]{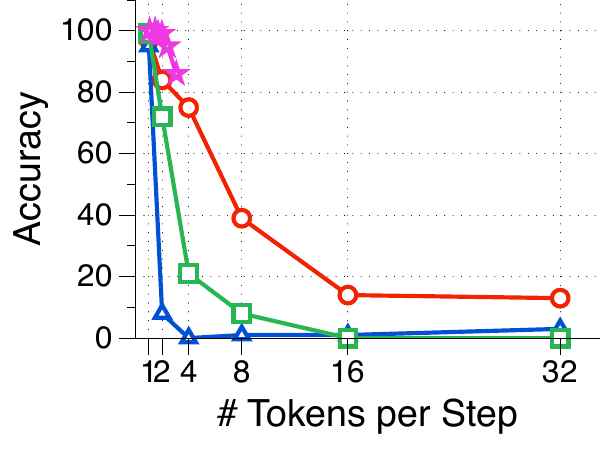}
        \caption{W2S (hard)}
    \end{subfigure}

\bigskip

    \begin{subfigure}{0.195\textwidth}
        \centering
        \includegraphics[width=\linewidth]{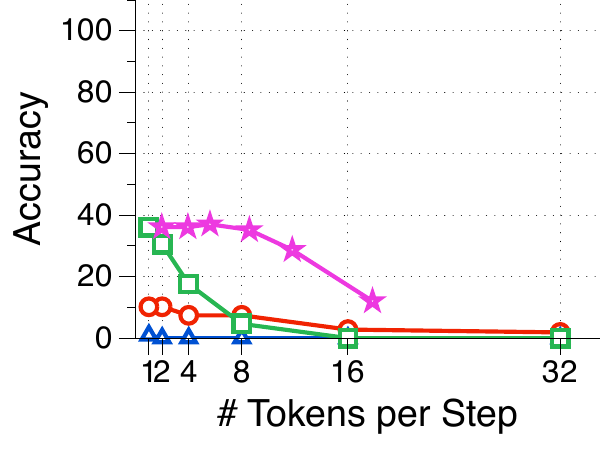}
        \caption{Sudoku}
    \end{subfigure}
    \begin{subfigure}{0.195\textwidth}
        \centering
        \includegraphics[width=\linewidth]{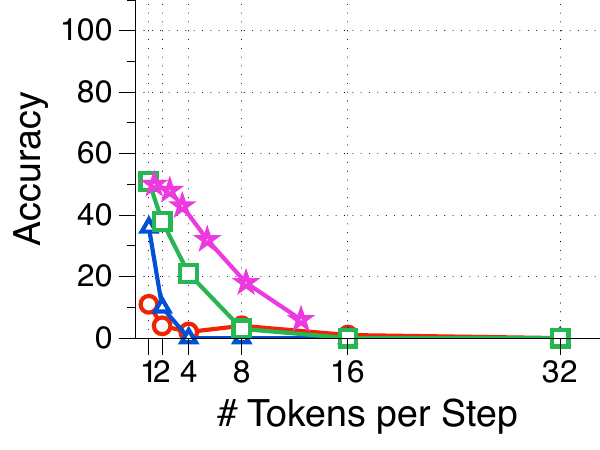}
        \caption{Latin Square}
    \end{subfigure}
    \caption{Full \ours{} results using LLaDA 1.5~\citep{llada1.5}.}
    \label{fig:waiting_line_llada15}
\end{figure}
\begin{figure}[ht!]
    \centering
    \begin{subfigure}{0.195\textwidth}
        \centering
        \includegraphics[width=\linewidth]{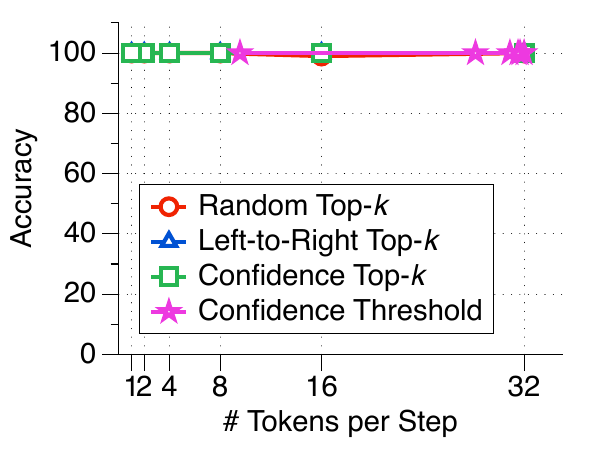}
        \caption{Copy}
    \end{subfigure}
    \hfill
    \begin{subfigure}{0.195\textwidth}
        \centering
        \includegraphics[width=\linewidth]{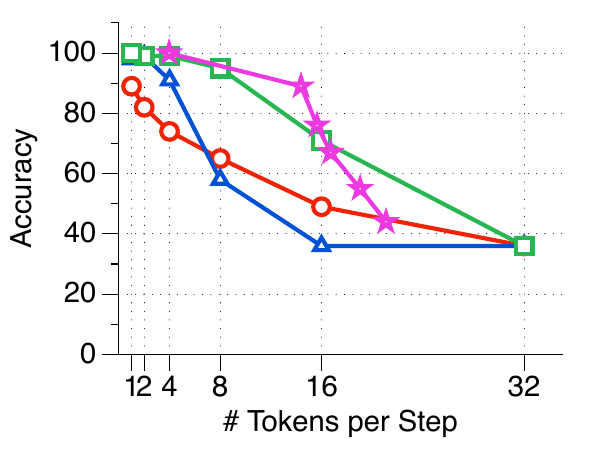}
        \caption{Reverse}
    \end{subfigure}
    \hfill
    \begin{subfigure}{0.195\textwidth}
        \centering
        \includegraphics[width=\linewidth]{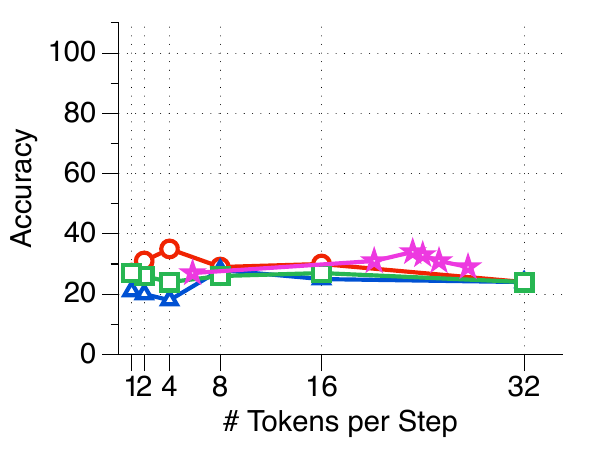}
        \caption{Replace Index}
    \end{subfigure}
    \hfill
    \begin{subfigure}{0.195\textwidth}
        \centering
        \includegraphics[width=\linewidth]{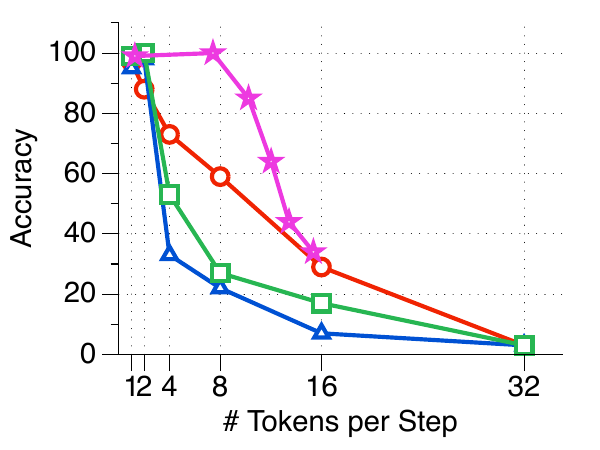}
        \caption{Replace Random}
    \end{subfigure}
    \hfill
    \begin{subfigure}{0.195\textwidth}
        \centering
        \includegraphics[width=\linewidth]{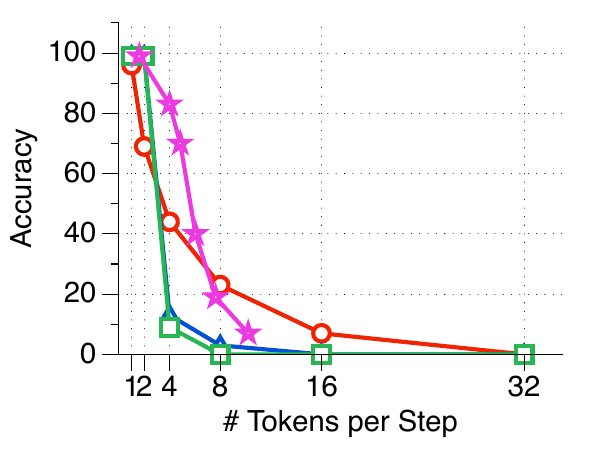}
        \caption{Shuffle}
    \end{subfigure}

\bigskip

    \begin{subfigure}{0.195\textwidth}
        \centering
        \includegraphics[width=\linewidth]{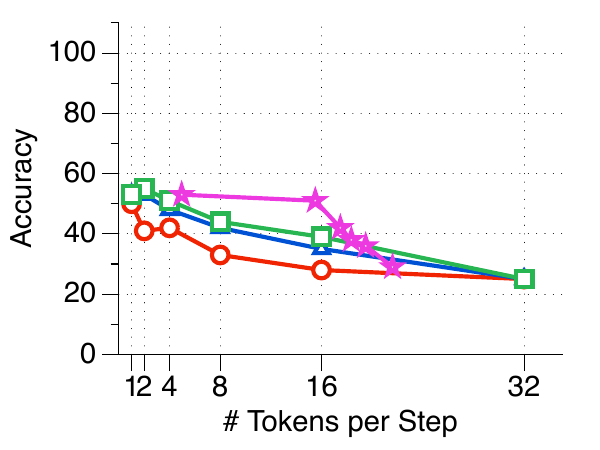}
        \caption{Insert Index}
    \end{subfigure}
    \hfill
    \begin{subfigure}{0.195\textwidth}
        \centering
        \includegraphics[width=\linewidth]{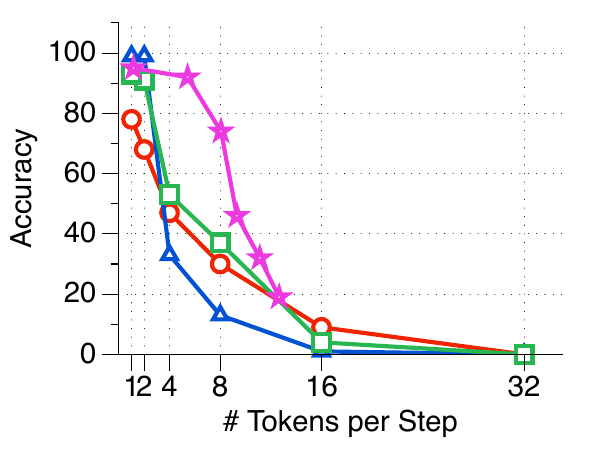}
        \caption{Insert Random}
    \end{subfigure}
    \hfill
    \begin{subfigure}{0.195\textwidth}
        \centering
        \includegraphics[width=\linewidth]{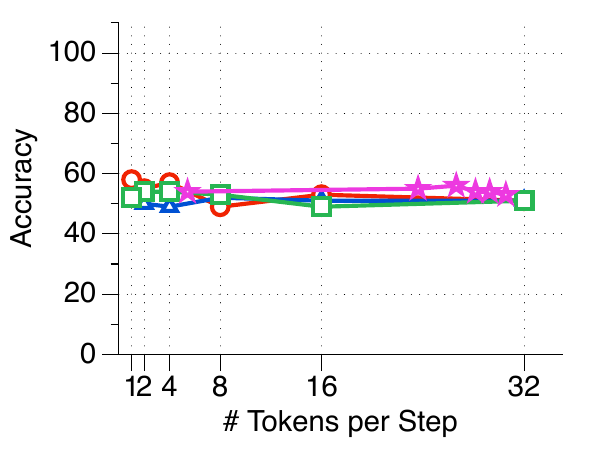}
        \caption{Remove Index}
    \end{subfigure}
    \hfill
    \begin{subfigure}{0.195\textwidth}
        \centering
        \includegraphics[width=\linewidth]{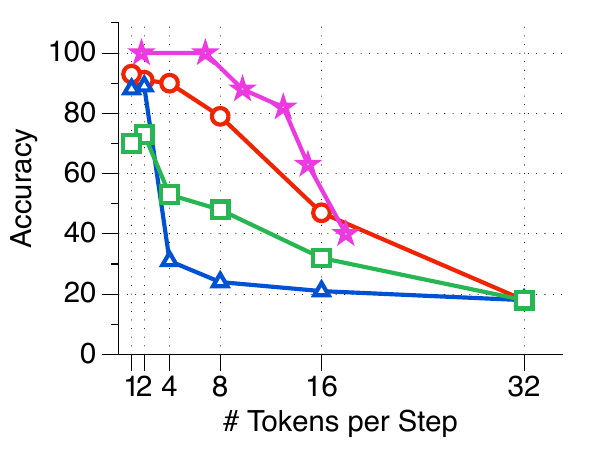}
        \caption{Remove Random}
    \end{subfigure}
    \hfill
    \begin{subfigure}{0.195\textwidth}
        \centering
        \includegraphics[width=\linewidth]{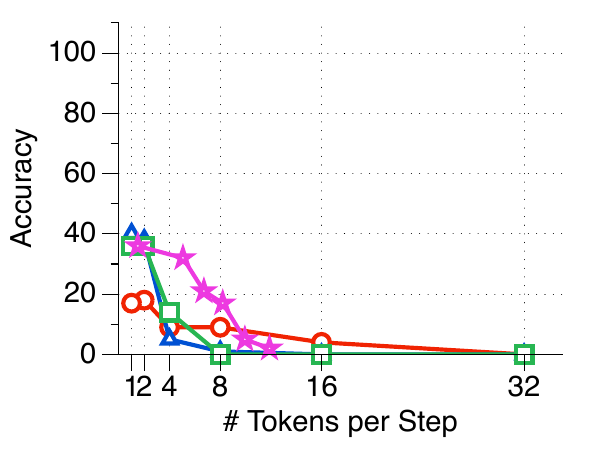}
        \caption{Sort}
    \end{subfigure}

\bigskip

    \begin{subfigure}{0.195\textwidth}
        \centering
        \includegraphics[width=\linewidth]{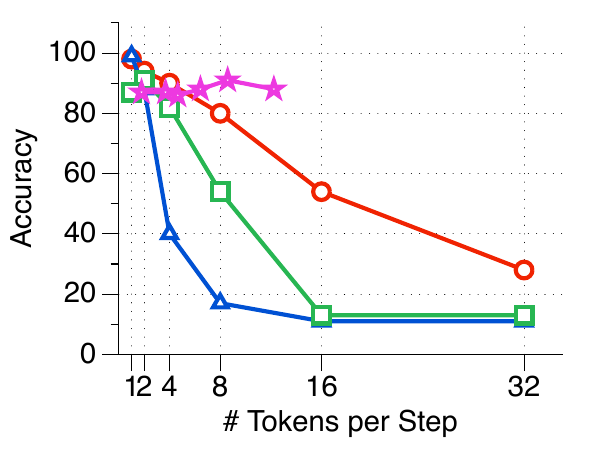}
        \caption{Paraphrasing}
    \end{subfigure}
    \hfill
    \begin{subfigure}{0.195\textwidth}
        \centering
        \includegraphics[width=\linewidth]{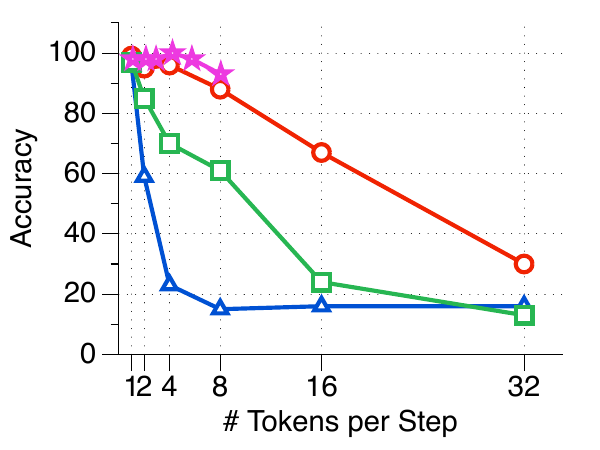}
        \caption{Summarization}
    \end{subfigure}
    \hfill
    \begin{subfigure}{0.195\textwidth}
        \centering
        \includegraphics[width=\linewidth]{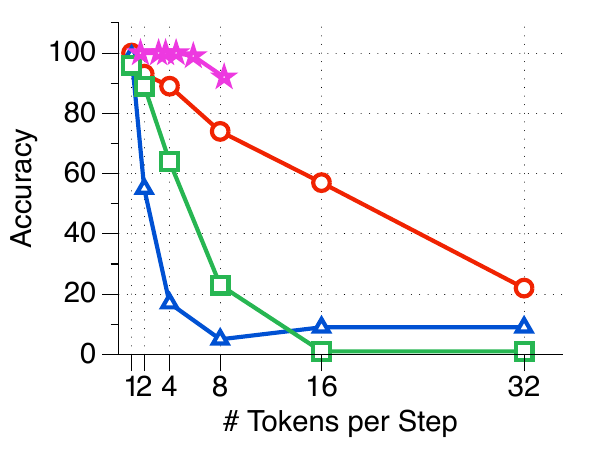}
        \caption{W2S (easy)}
    \end{subfigure}
    \hfill
    \begin{subfigure}{0.195\textwidth}
        \centering
        \includegraphics[width=\linewidth]{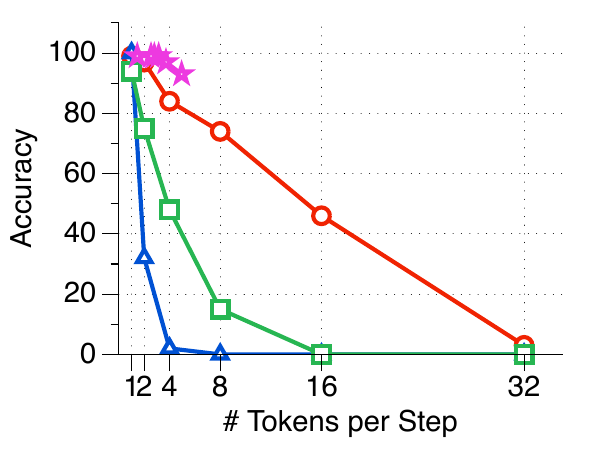}
        \caption{W2S (medium)}
    \end{subfigure}
    \hfill
    \begin{subfigure}{0.195\textwidth}
        \centering
        \includegraphics[width=\linewidth]{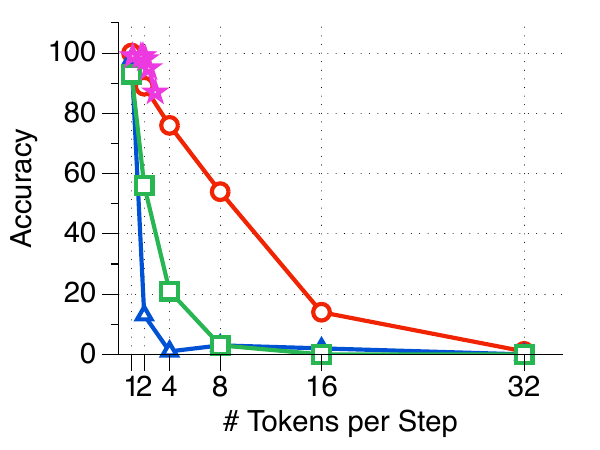}
        \caption{W2S (hard)}
    \end{subfigure}

\bigskip

    \begin{subfigure}{0.195\textwidth}
        \centering
        \includegraphics[width=\linewidth]{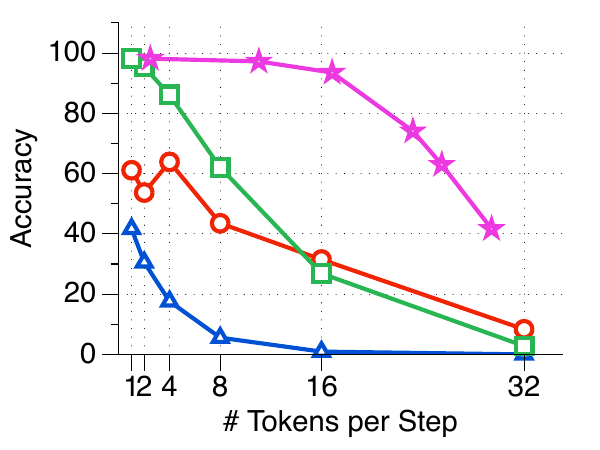}
        \caption{Sudoku}
    \end{subfigure}
    \begin{subfigure}{0.195\textwidth}
        \centering
        \includegraphics[width=\linewidth]{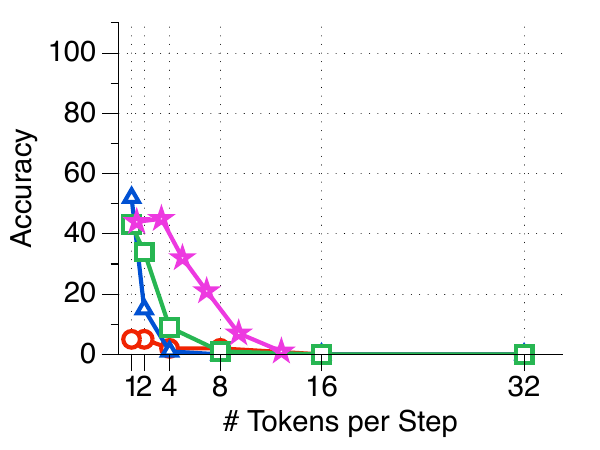}
        \caption{Latin Square}
    \end{subfigure}
    \caption{Full \ours{} results using Dream 7B~\citep{dream}.}
    \label{fig:waiting_line_dream}
\end{figure}
\begin{figure}[ht!]
    \centering
    \begin{subfigure}{0.195\textwidth}
        \centering
        \includegraphics[width=\linewidth]{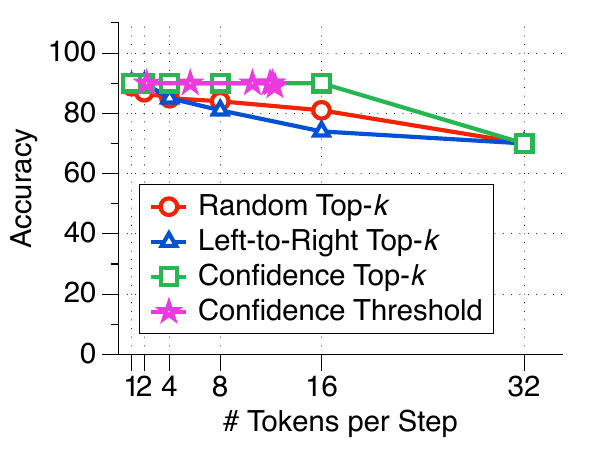}
        \caption{Copy}
    \end{subfigure}
    \hfill
    \begin{subfigure}{0.195\textwidth}
        \centering
        \includegraphics[width=\linewidth]{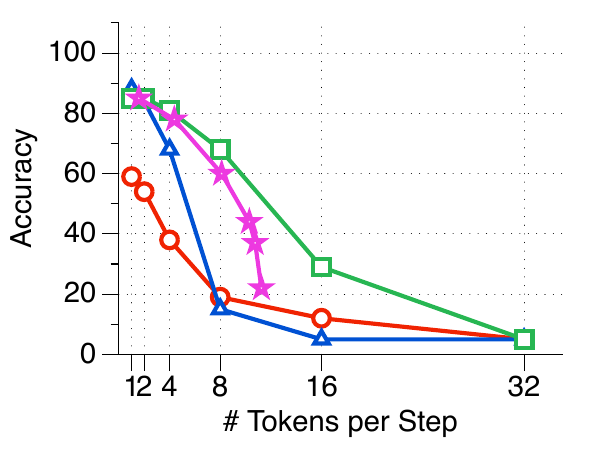}
        \caption{Reverse}
    \end{subfigure}
    \hfill
    \begin{subfigure}{0.195\textwidth}
        \centering
        \includegraphics[width=\linewidth]{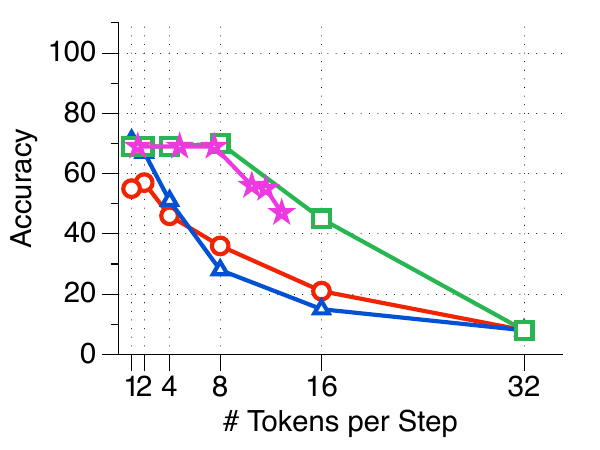}
        \caption{Replace Index}
    \end{subfigure}
    \hfill
    \begin{subfigure}{0.195\textwidth}
        \centering
        \includegraphics[width=\linewidth]{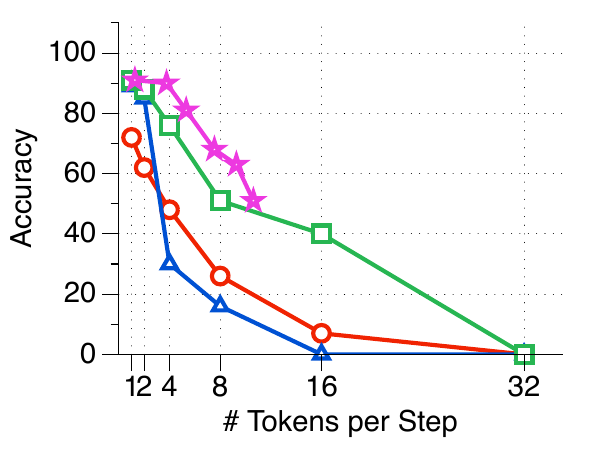}
        \caption{Replace Random}
    \end{subfigure}
    \hfill
    \begin{subfigure}{0.195\textwidth}
        \centering
        \includegraphics[width=\linewidth]{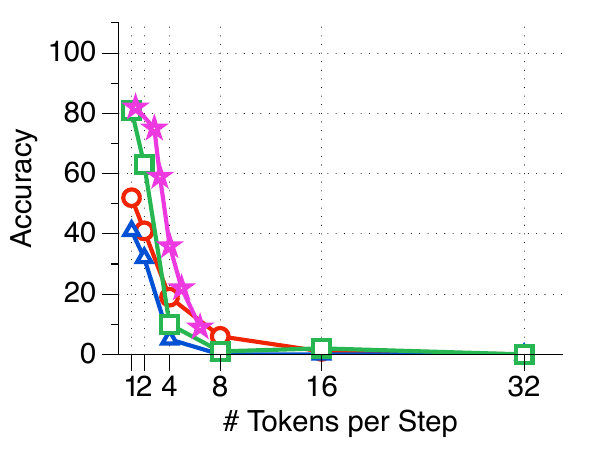}
        \caption{Shuffle}
    \end{subfigure}

\bigskip

    \begin{subfigure}{0.195\textwidth}
        \centering
        \includegraphics[width=\linewidth]{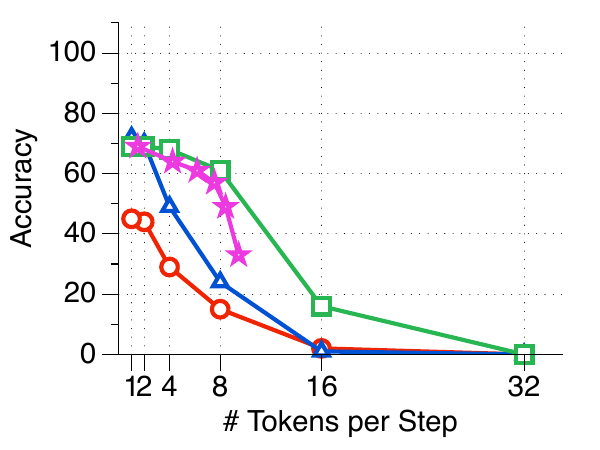}
        \caption{Insert Index}
    \end{subfigure}
    \hfill
    \begin{subfigure}{0.195\textwidth}
        \centering
        \includegraphics[width=\linewidth]{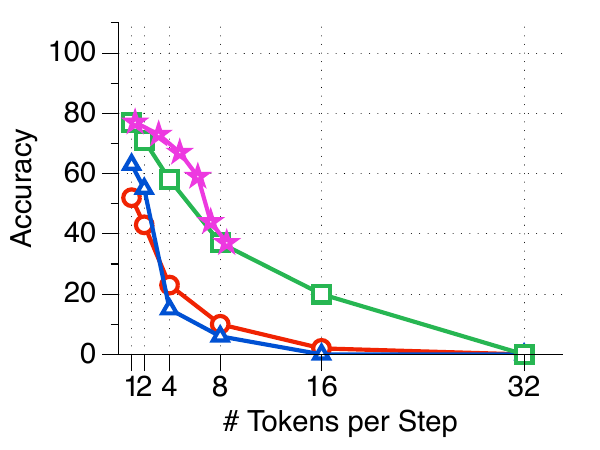}
        \caption{Insert Random}
    \end{subfigure}
    \hfill
    \begin{subfigure}{0.195\textwidth}
        \centering
        \includegraphics[width=\linewidth]{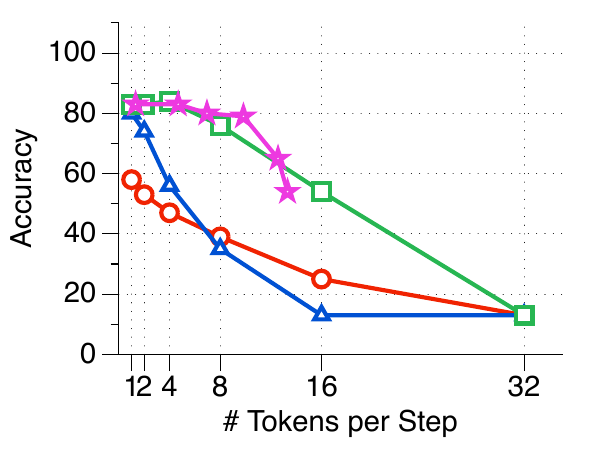}
        \caption{Remove Index}
    \end{subfigure}
    \hfill
    \begin{subfigure}{0.195\textwidth}
        \centering
        \includegraphics[width=\linewidth]{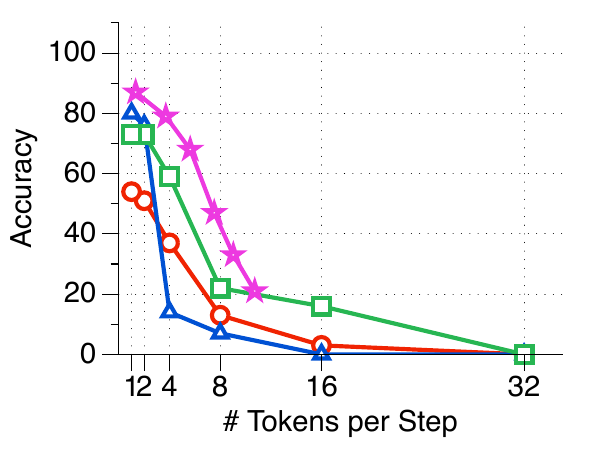}
        \caption{Remove Random}
    \end{subfigure}
    \hfill
    \begin{subfigure}{0.195\textwidth}
        \centering
        \includegraphics[width=\linewidth]{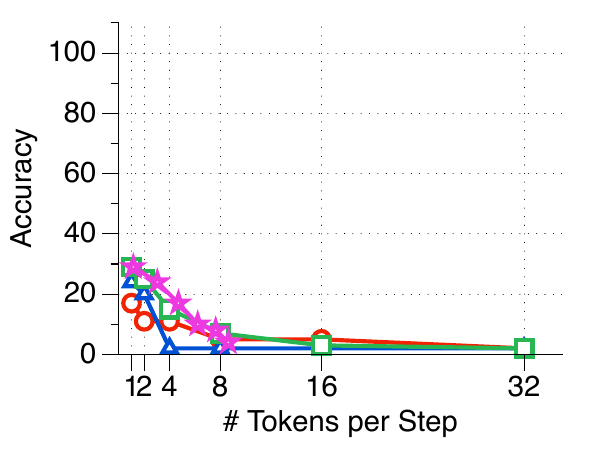}
        \caption{Sort}
    \end{subfigure}

\bigskip

    \begin{subfigure}{0.195\textwidth}
        \centering
        \includegraphics[width=\linewidth]{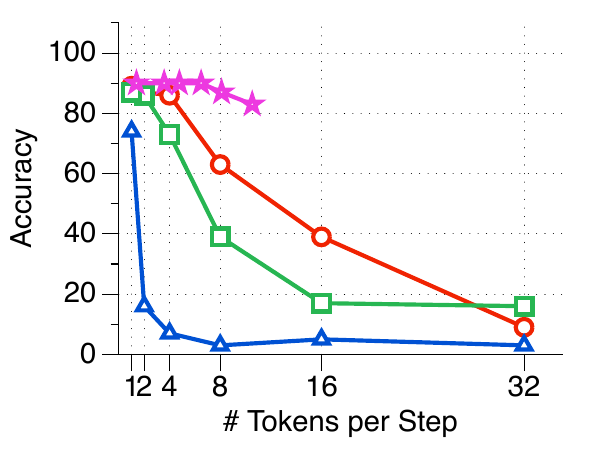}
        \caption{Paraphrasing}
    \end{subfigure}
    \hfill
    \begin{subfigure}{0.195\textwidth}
        \centering
        \includegraphics[width=\linewidth]{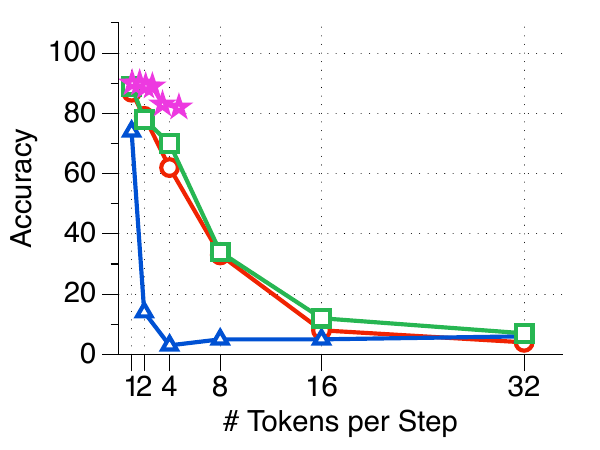}
        \caption{Summarization}
    \end{subfigure}
    \hfill
    \begin{subfigure}{0.195\textwidth}
        \centering
        \includegraphics[width=\linewidth]{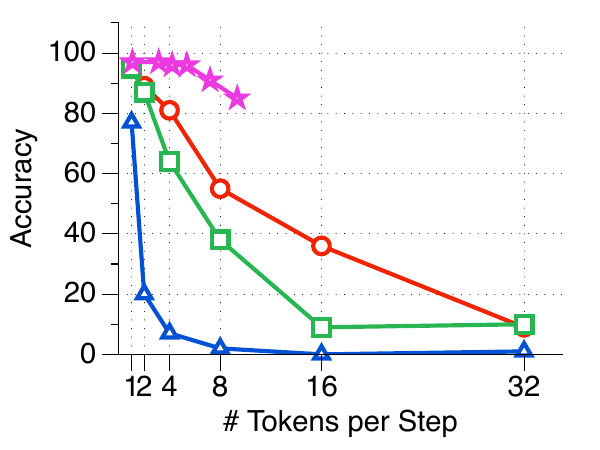}
        \caption{W2S (easy)}
    \end{subfigure}
    \hfill
    \begin{subfigure}{0.195\textwidth}
        \centering
        \includegraphics[width=\linewidth]{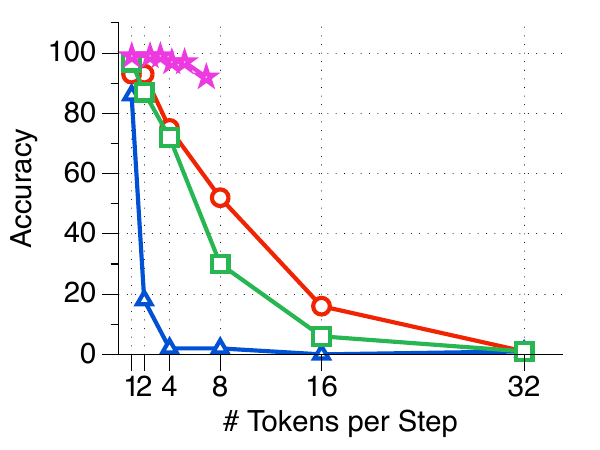}
        \caption{W2S (medium)}
    \end{subfigure}
    \hfill
    \begin{subfigure}{0.195\textwidth}
        \centering
        \includegraphics[width=\linewidth]{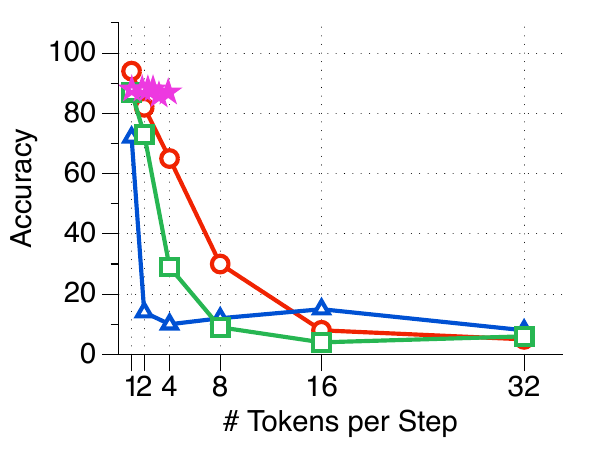}
        \caption{W2S (hard)}
    \end{subfigure}

\bigskip

    \begin{subfigure}{0.195\textwidth}
        \centering
        \includegraphics[width=\linewidth]{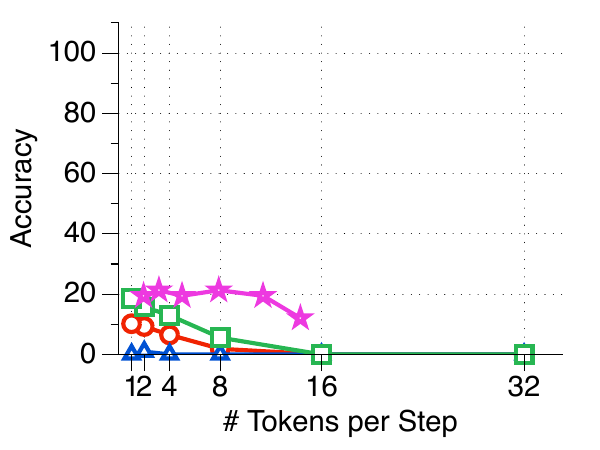}
        \caption{Sudoku}
    \end{subfigure}
    \begin{subfigure}{0.195\textwidth}
        \centering
        \includegraphics[width=\linewidth]{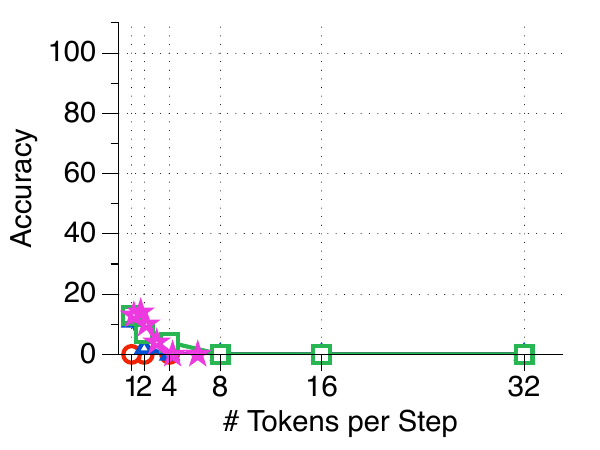}
        \caption{Latin Square}
    \end{subfigure}
    \caption{Full \ours{} results using DiffuCoder~\citep{diffucoder}.}
    \label{fig:waiting_line_diffucoder}
\end{figure}
\begin{figure}[ht!]
    \centering
    \begin{subfigure}{0.195\textwidth}
        \centering
        \includegraphics[width=\linewidth]{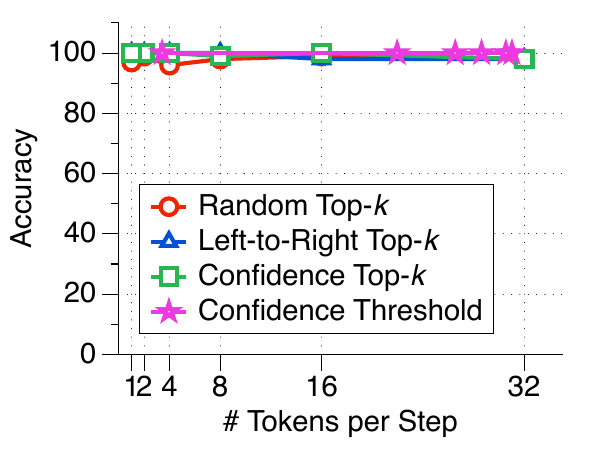}
        \caption{Copy}
    \end{subfigure}
    \hfill
    \begin{subfigure}{0.195\textwidth}
        \centering
        \includegraphics[width=\linewidth]{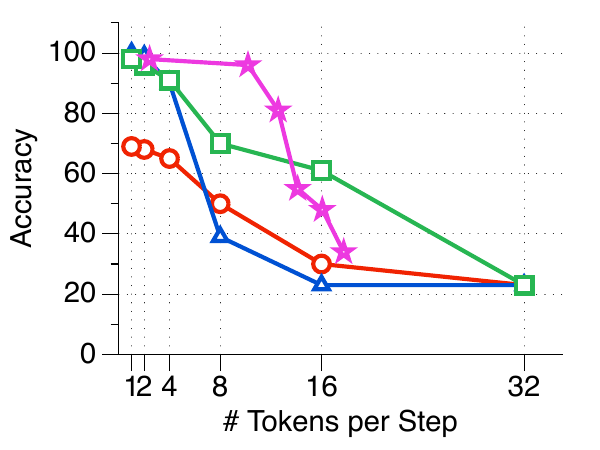}
        \caption{Reverse}
    \end{subfigure}
    \hfill
    \begin{subfigure}{0.195\textwidth}
        \centering
        \includegraphics[width=\linewidth]{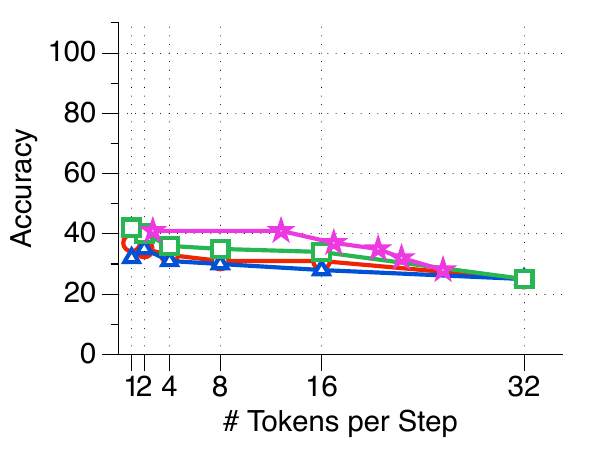}
        \caption{Replace Index}
    \end{subfigure}
    \hfill
    \begin{subfigure}{0.195\textwidth}
        \centering
        \includegraphics[width=\linewidth]{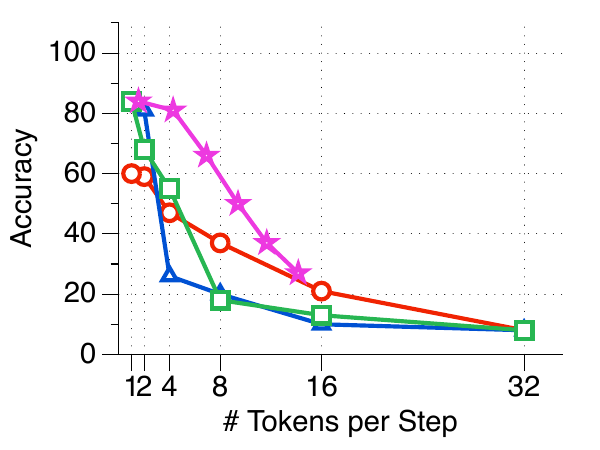}
        \caption{Replace Random}
    \end{subfigure}
    \hfill
    \begin{subfigure}{0.195\textwidth}
        \centering
        \includegraphics[width=\linewidth]{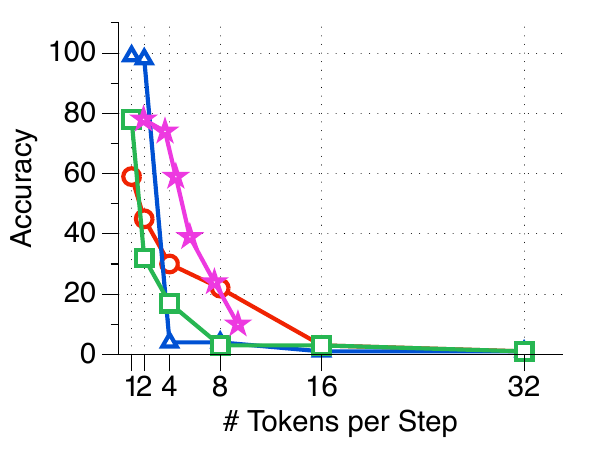}
        \caption{Shuffle}
    \end{subfigure}

\bigskip

    \begin{subfigure}{0.195\textwidth}
        \centering
        \includegraphics[width=\linewidth]{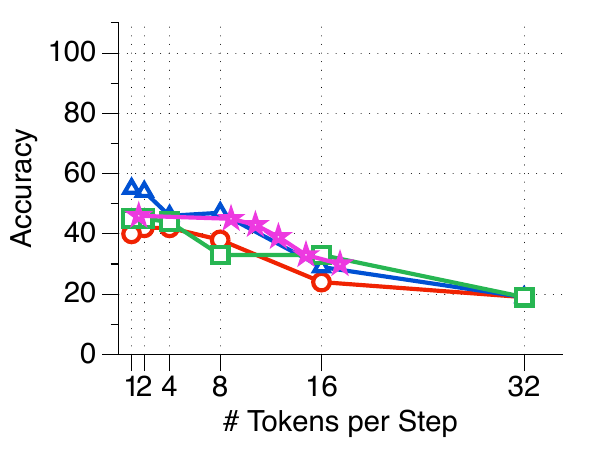}
        \caption{Insert Index}
    \end{subfigure}
    \hfill
    \begin{subfigure}{0.195\textwidth}
        \centering
        \includegraphics[width=\linewidth]{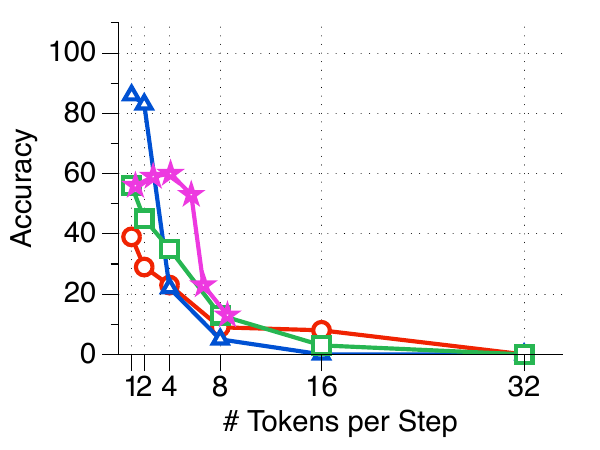}
        \caption{Insert Random}
    \end{subfigure}
    \hfill
    \begin{subfigure}{0.195\textwidth}
        \centering
        \includegraphics[width=\linewidth]{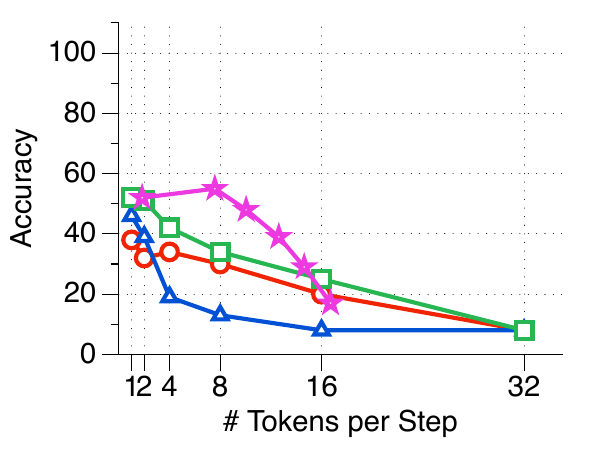}
        \caption{Remove Index}
    \end{subfigure}
    \hfill
    \begin{subfigure}{0.195\textwidth}
        \centering
        \includegraphics[width=\linewidth]{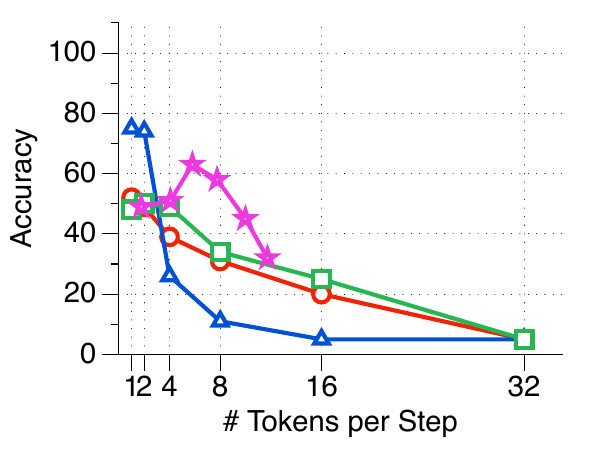}
        \caption{Remove Random}
    \end{subfigure}
    \hfill
    \begin{subfigure}{0.195\textwidth}
        \centering
        \includegraphics[width=\linewidth]{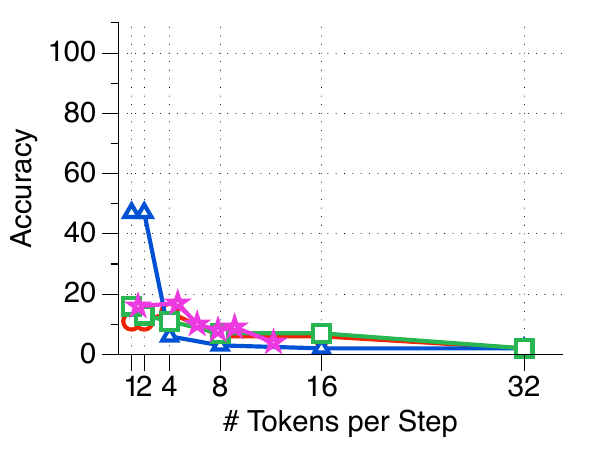}
        \caption{Sort}
    \end{subfigure}

\bigskip

    \begin{subfigure}{0.195\textwidth}
        \centering
        \includegraphics[width=\linewidth]{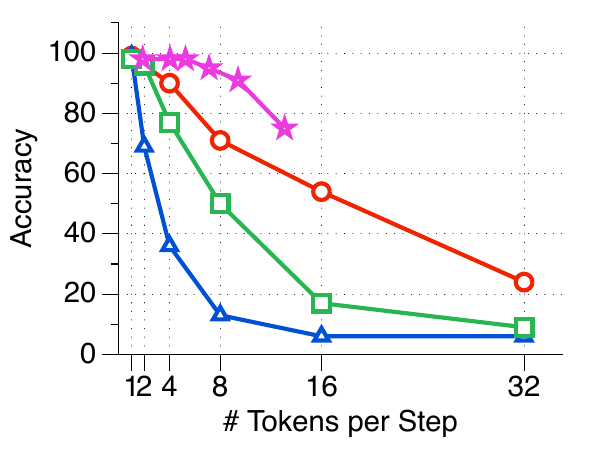}
        \caption{Paraphrasing}
    \end{subfigure}
    \hfill
    \begin{subfigure}{0.195\textwidth}
        \centering
        \includegraphics[width=\linewidth]{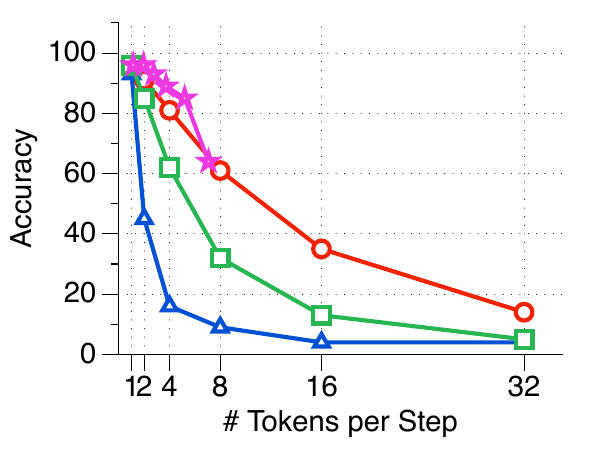}
        \caption{Summarization}
    \end{subfigure}
    \hfill
    \begin{subfigure}{0.195\textwidth}
        \centering
        \includegraphics[width=\linewidth]{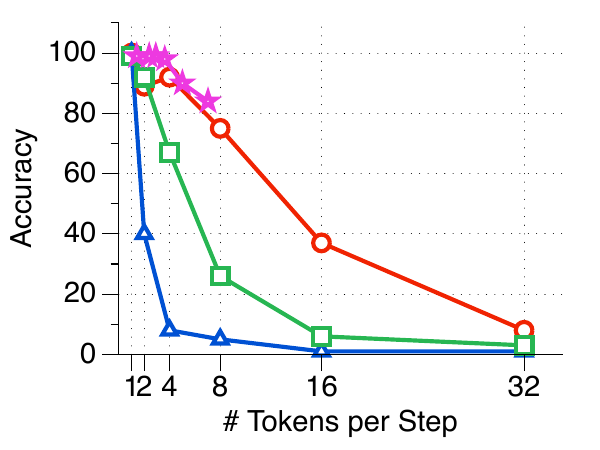}
        \caption{W2S (easy)}
    \end{subfigure}
    \hfill
    \begin{subfigure}{0.195\textwidth}
        \centering
        \includegraphics[width=\linewidth]{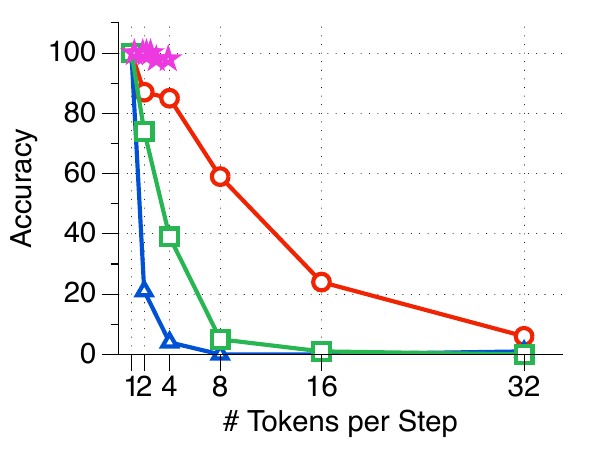}
        \caption{W2S (medium)}
    \end{subfigure}
    \hfill
    \begin{subfigure}{0.195\textwidth}
        \centering
        \includegraphics[width=\linewidth]{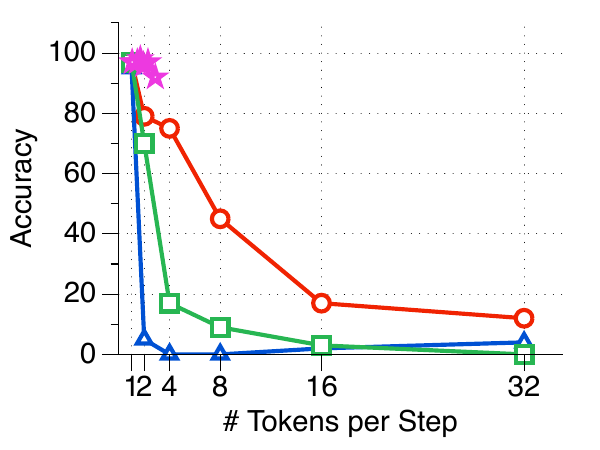}
        \caption{W2S (hard)}
    \end{subfigure}

\bigskip

    \begin{subfigure}{0.195\textwidth}
        \centering
        \includegraphics[width=\linewidth]{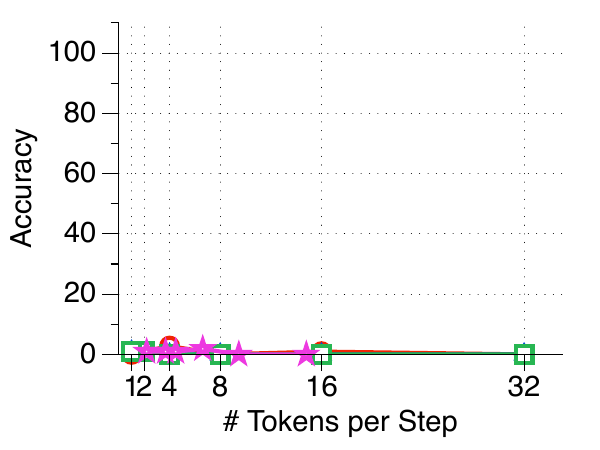}
        \caption{Sudoku}
    \end{subfigure}
    \begin{subfigure}{0.195\textwidth}
        \centering
        \includegraphics[width=\linewidth]{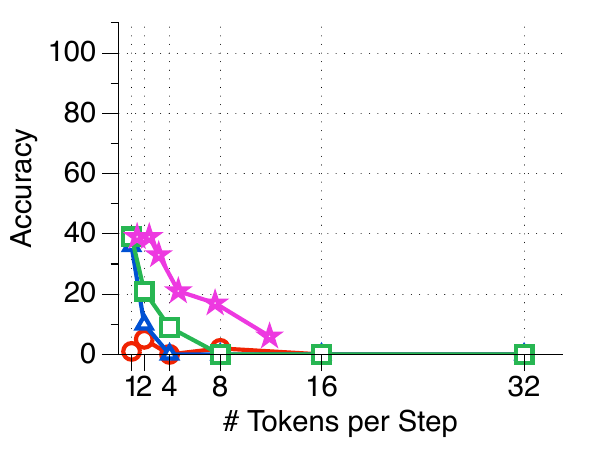}
        \caption{Latin Square}
    \end{subfigure}
    \caption{Full \ours{} results using LLaDA 1.5~\citep{llada1.5} with PrefixCache~\citep{fast-dllm}.}
    \label{fig:waiting_line_kv_single}
\end{figure}



\section{Autoregressive LLM Parallel Decoding}
\label{app:ar-llm}

In this section, we demonstrate that \ours{} generalizes beyond Diffusion LLMs to other parallel decoding paradigms. We examine Medusa~\citep{medusa}, a speculative decoding framework that augments the frozen backbone of an autoregressive LLM with multiple lightweight decoding heads. Instead of relying on a separate draft model, these heads simultaneously predict multiple future tokens based on the current step's hidden states. Candidates are then organized into a tree structure and verified in a single parallel forward pass using Tree Attention, accepting the longest valid prefix.

To analyze the accuracy of parallel predictions across varying difficulty levels, we modify the Medusa inference process. Rather than utilizing the dynamic verification mechanism, we force the model to decode a fixed number of tokens per step (ranging from 1 to 5). We employ Vicuna-1.5-7B~\citep{vicuna} as the backbone.

\cref{fig:ar-llm} illustrates the performance across benchmark tasks. We observe a significant drop in quality when increasing the number of parallel tokens from 1 to 2, affecting even simple tasks like \textit{Copy}. Furthermore, complex tasks such as \textit{Puzzles} generally exhibit very low performance. These results confirm that \ours{} effectively evaluates arbitrary parallel decoding architectures, highlighting the challenges of parallelization even in autoregressive LLM settings.

\begin{figure}[htbp]
    \centering
    \begin{subfigure}{0.195\textwidth}
        \centering
        \includegraphics[width=\linewidth]{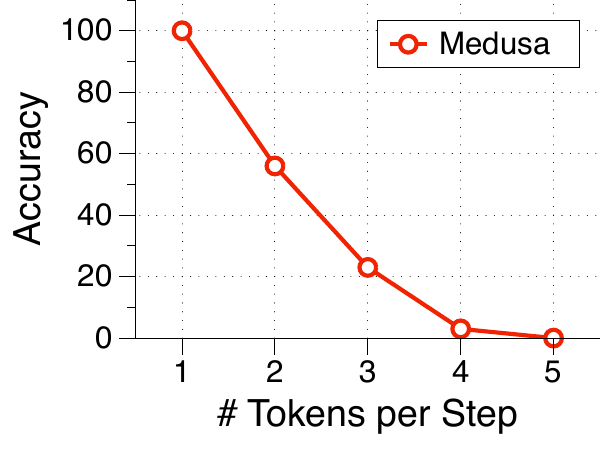}
        \captionsetup{skip=0pt}
        \caption{Copy}
        \label{subfig:waiting_line_copy_ar}
    \end{subfigure}
    \hfill
    \begin{subfigure}{0.195\textwidth}
        \centering
        \includegraphics[width=\linewidth]{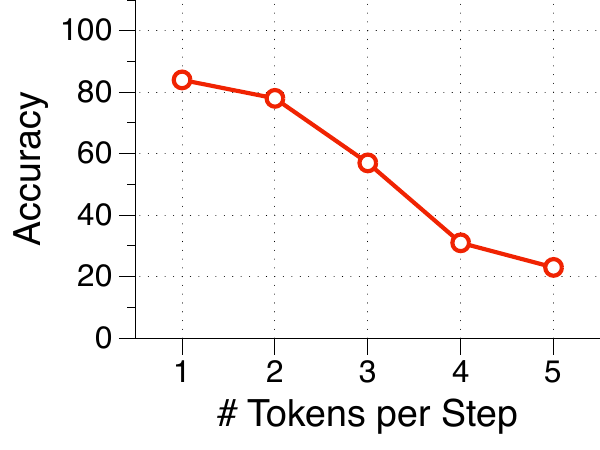}
        \captionsetup{skip=0pt}
        \caption{Reverse}
        \label{subfig:waiting_line_reverse_ar}
    \end{subfigure}
    \hfill
    \begin{subfigure}{0.195\textwidth}
        \centering
        \includegraphics[width=\linewidth]{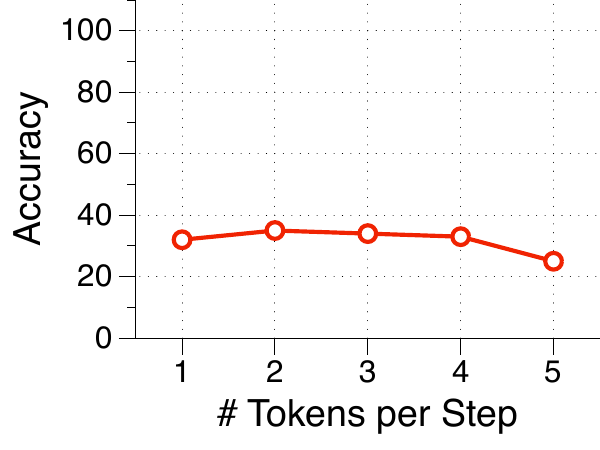}
        \captionsetup{skip=0pt}
        \caption{Replace Index}
        \label{subfig:waiting_line_replace_index_ar}
    \end{subfigure}
    \hfill
    \begin{subfigure}{0.195\textwidth}
        \centering
        \includegraphics[width=\linewidth]{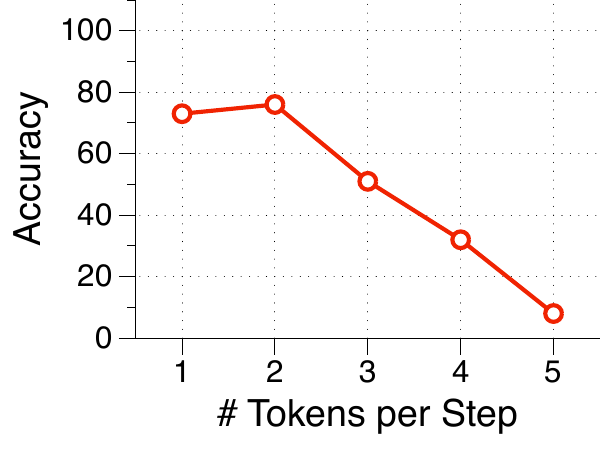}
        \captionsetup{skip=0pt}
        \caption{Replace Random}
        \label{subfig:waiting_line_replace_random_ar}
    \end{subfigure}
    \hfill
    \begin{subfigure}{0.195\textwidth}
        \centering
        \includegraphics[width=\linewidth]{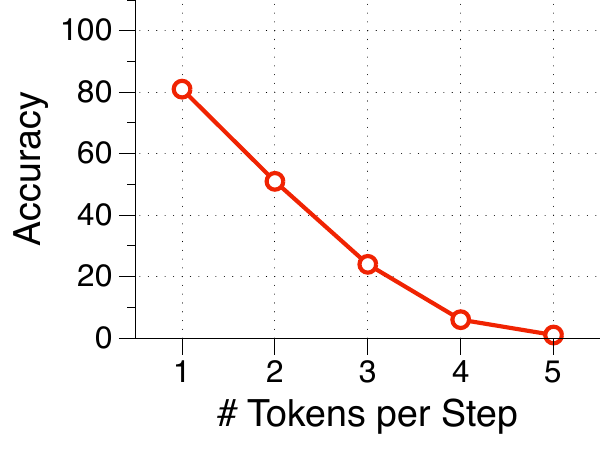}
        \captionsetup{skip=0pt}
        \caption{Shuffle}
        \label{subfig:waiting_line_shuffle_ar}
    \end{subfigure}

    \begin{subfigure}{0.195\textwidth}
        \centering
        \includegraphics[width=\linewidth]{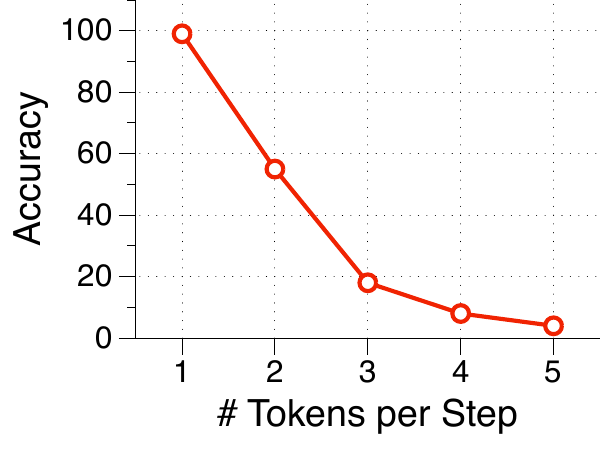}
        \captionsetup{skip=0pt}
        \caption{Paraphrasing}
        \label{subfig:text_writing_paraphrasing_ar}
    \end{subfigure}
    \hfill
    \begin{subfigure}{0.195\textwidth}
        \centering
        \includegraphics[width=\linewidth]{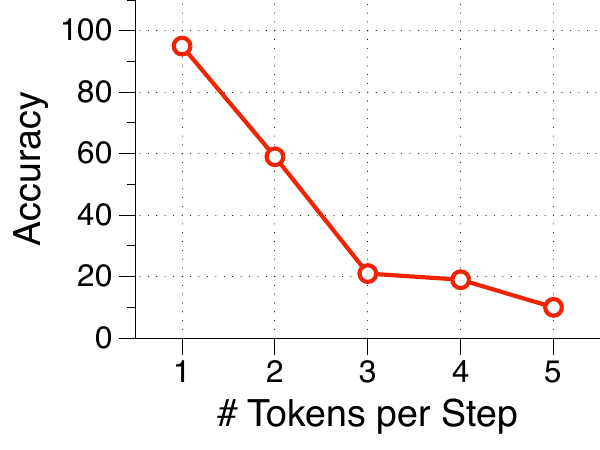}
        \captionsetup{skip=0pt}
        \caption{W2S (easy)}
        \label{subfig:text_writing_easy_ar}
    \end{subfigure}
    \hfill
    \begin{subfigure}{0.195\textwidth}
        \centering
        \includegraphics[width=\linewidth]{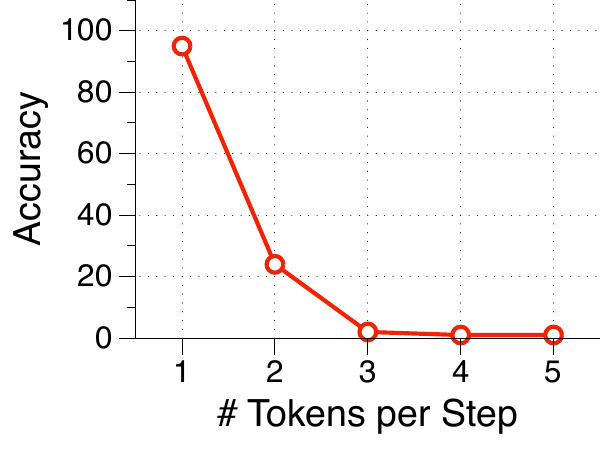}
        \captionsetup{skip=0pt}
        \caption{W2S (hard)}
        \label{subfig:text_writing_hard_ar}
    \end{subfigure}
    \hfill
    \begin{subfigure}{0.195\textwidth}
        \centering
        \includegraphics[width=\linewidth]{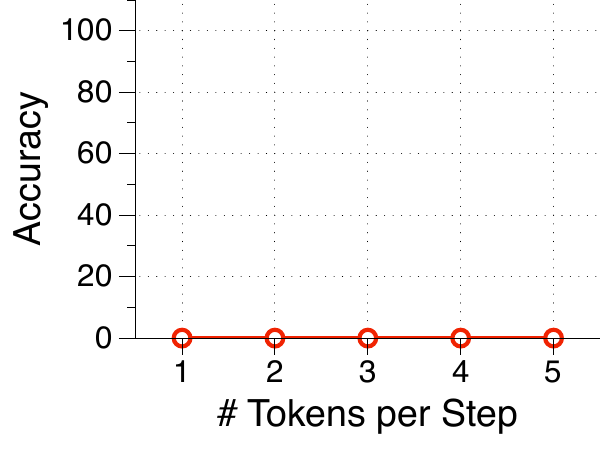}
        \captionsetup{skip=0pt}
        \caption{Sudoku}
        \label{subfig:puzzles_sudoku_ar}
    \end{subfigure}
    \hfill
    \begin{subfigure}{0.195\textwidth}
        \centering
        \includegraphics[width=\linewidth]{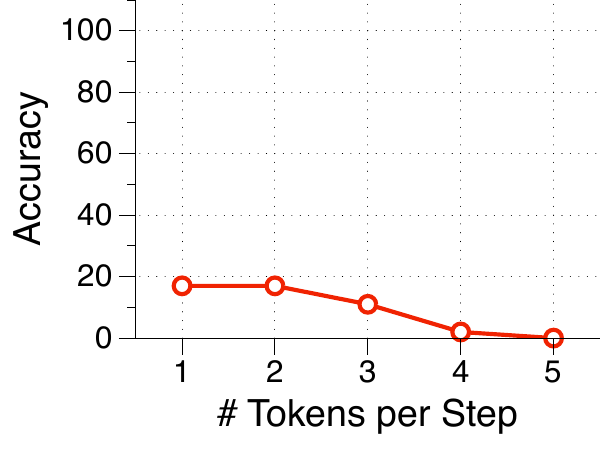}
        \captionsetup{skip=0pt}
        \caption{Latin Square}
        \label{subfig:puzzles_latin_square_ar}
    \end{subfigure}
            \captionsetup{skip=5pt}
\caption{Benchmark results of Medusa~\citep{medusa} on \ours{}: \textit{Waiting Line} (\cref{subfig:waiting_line_copy_ar,subfig:waiting_line_reverse_ar,subfig:waiting_line_replace_index_ar,subfig:waiting_line_replace_random_ar,subfig:waiting_line_shuffle_ar}), \textit{Text Writing} (\cref{subfig:text_writing_paraphrasing_ar,subfig:text_writing_easy_ar,subfig:text_writing_hard_ar}), and \textit{Puzzles} (\cref{subfig:puzzles_sudoku_ar,subfig:puzzles_latin_square_ar}).}
    \label{fig:ar-llm}
\end{figure}

\clearpage
\section*{LLM Usage Disclosure}
We utilized Gemini 2.5 Pro~\citep{gemini2.5} and GPT-4.1 Mini~\citep{gpt4.1} to generate our benchmark dataset. Additionally, Gemini 2.5 Pro was used to assist with improving the grammar, clarity, and readability of this manuscript. The authors reviewed and edited all LLM-generated content and suggestions to ensure the final text accurately reflects our scientific contributions and claims. The authors retain full responsibility for the content of this paper.

\end{document}